\documentclass{article}

\PassOptionsToPackage{numbers,sort&compress}{natbib}
\usepackage{iclr2027_conference,times}
\iclrfinalcopy 
\usepackage[utf8]{inputenc}
\usepackage[T1]{fontenc}
\usepackage{amsmath}
\usepackage{amssymb}
\usepackage{amsthm}
\usepackage{booktabs}
\usepackage{graphicx}
\usepackage[hidelinks]{hyperref}
\usepackage{microtype}
\usepackage{nicefrac}
\usepackage{xcolor}
\usepackage{url}
\usepackage{comment}

\usepackage{tcolorbox}
\tcbuselibrary{theorems,skins,breakable}
\usepackage{pgfplots}
\pgfplotsset{compat=1.17}
\usepackage{subcaption}

\newcommand{\flip}{\mathrm{Flip}}
\newcommand{\context}{\mathcal{C}}

\usepackage{bbold}
\usepackage{titletoc}
\title{Same Text, Different Prediction: Serving-Context Nondeterminism in Text Classifiers\thanks{Accepted at the LIGHT, ODI, and TAE workshops at NeurIPS 2026.}}
\author{%
\begin{tabular*}{\textwidth}{@{\extracolsep{\fill}}lcr@{}}
Santhosh Kumar Kasa & Siva Rajesh Kasa & Sumit Negi\\[2pt]
\multicolumn{3}{c}{\normalfont Amazon}
\end{tabular*}}
\date{}

\begin{document}
\maketitle
\fancyhead{} 
\renewcommand{\headrulewidth}{0pt}

\begin{abstract}

Deterministic inference is essential for reliable and trustworthy machine learning. Prior studies of text generation have shown that changing factors such as batch size, batch composition, hardware, or inference engine can alter the generated text, even when the prompt, model parameters, and sampling randomness are fixed. These differences have been attributed in part to floating-point non-associativity, shape-dependent kernel selection, and other implementation-level differences in numerical execution. However, it remains unclear whether, when, and to what extent the same factors affect text classification. We present a systematic study of serving-context non-invariance in text classifiers, which prior work has measured only through generated text. We train 180 models spanning discriminative, pseudo-generative, and fully generative classifier formulations and evaluate each across four categories of serving contexts, holding the checkpoint and the text fixed. Label stability does not imply score stability. Changing only the batch shape changes no labels across fp32 comparisons, yet under bf16 it moves up to 56.7 percentage points of predicted probability mass, with label changes concentrated at small margins. Fully generative classifiers change more labels than their discriminative counterparts under the same serving changes. We derive sufficient conditions for label stability under each serving change and give a separate mitigation for each mechanism. Our results identify and quantify the serving conditions that must be fixed for reproducible text classification.

\end{abstract}

\section{Introduction}

A large language model given the same prompt, the same parameters, and the
same sampling seed is widely expected to produce the same output, yet in
practice it often does not. With every user-visible input fixed, the
generated text can still change when the batch size or batch composition
changes, when numerical precision changes, when the hardware changes, or when
one inference engine is exchanged for another
\citep{yuan2025understanding,pape2026silent}. This phenomenon has been termed \textit{nondeterminism}. One documented cause of nondeterminism is
numerical. Floating-point addition is not associative
\citep{goldberg1991floating}, and serving systems select kernels and
reduction orders as a function of batch shape and load, so the same logical
computation is executed as a different sequence of rounded operations
\citep{yuan2025understanding,he2025defeating,zhao2026corun}. A second cause
is the inference backend itself. Implementation choices such as paged
attention, custom CUDA kernels, and optimized memory management introduce
divergence large enough that switching backends alone, with model weights,
prompts, decoding parameters, and hardware held constant, shifts benchmark
scores by several percentage points, and by up to 16.6 in the extreme case
\citep{pape2026silent}. In autoregressive generation these perturbations
compound, because a single altered token early in the sequence conditions
every token that follows. Nearly all of this evidence, however, concerns text
generation. Whether the same serving-context changes affect the single
forward pass of a text classifier has received almost no direct attention or measurement.

Text classification is among the most widely deployed applications of neural
language models, spanning sentiment analysis
\citep{socher2013recursive,maas2011learning}, topic classification
\citep{zhang2015character}, emotion detection \citep{saravia2018carer}, etc. Classifiers operate
in both online real-time and offline batched settings, embedded in
production pipelines for relevance estimation and ranking
\citep{nogueira2019passage}, filtering \citep{sahami1998bayesian}, and
content moderation \citep{wulczyn2017exmachina,lees2022perspective}, where
they form part of larger machine learning systems whose behavior must be
maintained over time \citep{sculley2015debt}. Transformer-era classifiers
also come in distinct paradigms. Following the terminology of
\citet{kasa2025generative}, discriminative classifiers attach a dedicated
classification head to a pretrained encoder \citep{devlin2019bert};
pseudo-generative classifiers are trained with a language-model objective and
predict a class token in a single forward pass; and fully generative
classifiers compare the likelihood of the input under each candidate class, a
distinction with a long history from classical statistics to neural models
\citep{efron1975efficiency,ng2001discriminative,yogatama2017generative,zheng2023revisiting}.
These formulations differ measurably in propensity to shortcut learning
\citep{li2025shortcut,kasa2026shortcut}, sample efficiency
\citep{ng2001discriminative,zheng2023revisiting}, calibration
\citep{guo2017calibration}, and privacy risk
\citep{makroo2026hidden}, so a serving-time perturbation need not affect them
equally.

In every one of these deployments, serving-context nondeterminism can change the final probabilities without any visible change to the model or the text, with direct consequences such as a reclassified review, a rerouted document, or a filtered message. Yet deployed classifiers are commonly monitored through
aggregate metrics such as accuracy or F1, which are invariant to which examples
flip. 
This motivates our present study. We ask whether the serving-context effects documented for generation transfer to classification, and whether the apparent invariance of classification reflects stability or only the absence of measurement. 
We present a systematic study
of serving-context sensitivity in neural text classifiers, conducted through
large-scale controlled experimentation. 
The contributions of this work are as follows:
\begin{enumerate}

\item \textbf{First systematic empirical study.} We organize serving-context factors into a taxonomy and measure their effects. We train 180 text classifiers from scratch (four transformer classification formulations, three sizes, five datasets, three seeds) and evaluate each across 3,960 serving contexts (padded length, batch size, batch composition, precision, quantization, device, backend), holding the checkpoint and text fixed so any change is attributable to the setting rather than run-to-run randomness. Two findings follow. Changing the padded length alone changes about 10\% of predicted labels, more than any other factor, and settings that change almost no labels still shift class probabilities by tens of points, so label accuracy does not reveal the instability.


\item \textbf{Mitigations.} We show that each mechanism has a targeted fix
and measure its invariance--cost trade-off. 

\item \textbf{Theory.} We formalize the conditions under which logit and probability drift can lead to label flipping. We also give sufficiency conditions to rule out padding-based non-determinism. Based on the empirical results, we formalize the
three-way tension among inference speed, task accuracy, and serving-context
invariance into an impossibility condition: the optimizations that make serving fast, including dynamic
batching, reduced precision, quantization, and shape-dependent kernel
selection, are the mechanisms that threaten invariance, while
remedies that restore invariance can cost speed or
accuracy.

\end{enumerate}

\section{Related Work}

\textbf{Discriminative, generative, and pseudo-generative classifiers.}
The comparison between discriminative and generative classifiers has origins in the
classical analyses of logistic regression, discriminant analysis, naive Bayes,
and related estimators \citep{efron1975efficiency,ng2001discriminative,
xue2008comment}. \citet{yogatama2017generative} extended the comparison to neural text
classification, \citet{rezaee2021discriminative} studied transformer
implementations, and \citet{zheng2023revisiting} developed a broader
theoretical analysis.
Recent work studies language models used directly for text classification
\citep{sun2023text} and provides a transformer-era comparison of encoder,
autoregressive, masked-language-model, and discrete-diffusion classifiers
\citep{kasa2025generative}. The masked-language-model formulation is related
to pseudo-likelihood estimation \citep{wang2019bert}. \citet{kasa2026shortcut} show
that fully generative classifiers are not inherently immune to shortcut
learning, while \citet{makroo2026hidden} show that modeling the input distribution
can increase membership-inference risk. These studies compare training
objectives, model formulations, sample efficiency, robustness, calibration,
and privacy. They do not examine whether a fixed trained classifier output changes
when padding, batching, numerical precision, quantization, device, or backend
changes at inference time.

\textbf{Non-determinism in language-model inference.}
For text generation, prior work documents that outputs change when batch
size, precision, hardware, or the inference engine changes, even with the
prompt, parameters, and sampling seed fixed
\citep{yuan2025understanding,pape2026silent}. Repeated calls to hosted
models under nominally deterministic settings show the same variability in
the generated answer strings, even when those strings are then parsed into
task labels \citep{atil2024non}. The numerical cause is that
floating-point addition is not associative \citep{goldberg1991floating} and
serving systems select kernels and reduction orders as a function of batch
shape \citep{he2025defeating,zhao2026corun}. Backend implementation choices
add further divergence \citep{pape2026silent}. Dataset-level metrics
attenuate exactly the per-example movement these effects produce
\citep{fang2026dataset}. Remedies proposed for generation include
batch-invariant kernels \citep{he2025defeating}, fixed-shape scheduling
\citep{zhao2026corun}, and verified recomputation of the decoded steps
that disagree \citep{gond2026llm}. This literature studies generation, where
autoregressive feedback amplifies a one-token change. Whether and when the
same mechanisms appear in the single forward pass of a classifier is the
question we study.

\textbf{Extended discussion.}
Appendix~\ref{app:related} expands this section: an extended account of the three
classifier families and of non-determinism in language-model inference, and a
discussion that situates the present study against prediction instability and
churn, imprecise probability and calibration, the numerical and positional
mechanisms the paper relies on, and behavioral and production-readiness testing.

\section{Serving Context and Predictive Claims}
\label{sec:method}

Let $p_\theta(\cdot\mid x,c)$ denote the class-probability vector for target
$x$, checkpoint $\theta$, and serving context $c$. The example whose prediction we track is the target. A context records every implementation
setting that can vary while the trained parameters and the text stay fixed:
the software backend, the device, the numerical precision, the quantization
configuration, the batch size, the order in which examples are processed,
the position an example occupies within its batch, the other examples that
share the batch, the padding policy, and the rule that assigns position
indices to tokens. For a declared finite set of contexts $\context$, our comparisons observe
selected pairs drawn from $\context$. Each pair fixes the checkpoint and the
text and changes one recorded setting. These pairs are controlled witnesses
of how far the prediction can move, not a guarantee over every serving
configuration, and we place no distribution over the contexts.

Let $f_\theta(x;c)=\arg\max_y p_\theta(y\mid x,c)$. For reference context
$c_0$, the paired flip indicator is $\flip_\theta(x;c_0,c)
  =\mathbb{1}\!\left[f_\theta(x;c_0)\neq f_\theta(x;c)\right].$
The flip rate averages this indicator over the targets that both contexts scored.
 It measures the decision changes a user would observe rather than aggregate quality.

Labels can remain stable while scores move. For aligned probability vectors
$p_0$ and $p_c$, we measure total variation $\mathrm{TV}(p_0,p_c)=\frac{1}{2}\sum_k |p_{0,k}-p_{c,k}|.$
We also report the maximum absolute logit change
$\lVert z_0-z_c\rVert_\infty$. Score drift is computed both over all aligned
targets and over the subset whose top label does not change.

We separate two paths by which context can reach the predictive distribution:
$ p_\theta(\cdot\mid x,c)
 =N\!\left(E_c\!\left(\theta,P_c(x)\right)\right).$
$P_c$ constructs token IDs, masks, position IDs, padding, and tensor layout;
$E_c$ executes the checkpoint and returns class scores; and $N$ applies the
fixed class normalization. A numerical intervention changes $E_c$ while
preserving the model-relevant output of $P_c$. A structural intervention changes $P_c(x)$ itself, as when a real token's position index is read from its tensor column and shifts with the padded width. This decomposition supplies one of the two axes we use to organize every
serving factor. The first axis is the \emph{mechanism} by which a factor can
move the prediction, and it is the split above. A factor is
structural if it changes $P_c$, the represented input, and numerical if it
changes only $E_c$, the execution of a fixed represented input. The second
axis is the \emph{locus} at which a factor is set. A deployment-level factor
is chosen once when the model is put into service and then held fixed across
requests, whereas a request-level factor varies from one request or batch to
the next. Every factor we study occupies one cell of the resulting table
(Table~\ref{tab:taxonomy}). Padding is the single factor that spans a
boundary. Its policy is deployment-level, but the width it produces under a
dynamic policy is realized per request, and it is the only request-realized
factor whose mechanism is structural, so we report it on its own. This table
organizes both the design below and the results that follow.

Each comparison changes exactly one recorded setting and holds the checkpoint,
the dataset split, the evaluated examples, and every other recorded setting
fixed. A comparison whose reference run is missing is discarded rather than
substituted from another batch size or seed, and when the intervention is which
examples share the batch, the padded width is held fixed so that changing the
other examples cannot change the tensor shape.

\section{Experimental Design}
\label{sec:design}

The experiments address four questions: (a) does a prediction measured in one serving context still hold in the others;
(b) do the predicted label, the class scores, and the decision margin agree
about stability; (c) which of the factors move the prediction by executing the same input differently and which by changing the input itself; and (d) what evidence
distinguishes a working mitigation from prediction collapse or from a
comparison that applies no effective intervention?

\textbf{Models and data.} We train 180 classifiers from random
initialization: four classifier formulations built on BERT or GPT-2, three
jointly scaled sizes, five datasets (SST-2, SST-5, AG News, Emotion, and IMDb
\citep{socher2013recursive,zhang2015character,saravia2018carer,maas2011learning}),
and three random seeds, reusing public tokenizers but no pretrained weights.
The small, medium, and large sizes scale the feed-forward width, depth, and
attention heads jointly, following the three jointly scaled sizes of
\citet{kasa2025generative} and \citet{makroo2026hidden}. The exact
configurations, objectives, and optimization are in
Appendix~\ref{app:implementation}.

\textbf{Serving interventions.} The taxonomy of
Table~\ref{tab:taxonomy} sorts the comparisons into three groups, each
departing from the reference deployment of full precision, GPU execution, and
static padding. Deployment-level comparisons hold the request context at a
single example and change one deployment factor: the
numerical precision (full precision versus the reduced-precision format bf16,
with fp16 reported in the appendix) and the 8-bit integer scale computation
(an offline fixed scale versus a scale recomputed from each batch).
Request-level comparisons instead hold the deployment fixed and change one
request factor: the batch size, the padded width (static versus dynamic), the
position the target occupies within its batch, the order in which examples are
processed, and the other examples that share the batch at a fixed width. The
order comparison permutes the full test list before fixed-width batching, so
it can change the target's peers and slot but not the tensor width; the
composition comparison changes the peers at a fixed width; a separate
random-peer experiment lets peer lengths change the dynamic width and is a
joint peer and shape intervention. A
third comparison changes device and 8-bit integer quantization together, from
GPU full precision to CPU integer execution; because it moves several factors
at once it is reported as a whole-deployment migration rather than as an
estimate of any single factor, and it is the change a practitioner makes in a
single step. Padding comparisons keep the target and the other examples in its
batch fixed and change dynamic versus static width under each family's existing
padding side. They do not randomize left versus right padding within a
checkpoint.

\textbf{Model-size analysis.} The model-size analysis compares small, medium, and large models and retains
a family, dataset, and seed combination only when all three sizes exceed
1.15 times chance validation accuracy, leaving 40 matched combinations
overall and 16 for the GPT-2 padding comparison. We average repeated
comparisons within each checkpoint and size, weight each retained
combination equally, and resample combinations while preserving the
three-size pairing. 


\definecolor{sinavy}{RGB}{46,64,101}
\definecolor{sipine}{RGB}{56,92,74}
\definecolor{siwine}{RGB}{114,47,55}
\newtcbtheorem[number within=section]{cthm}{Theorem}%
{enhanced,breakable,colback=sinavy!4!white,colframe=sinavy,boxrule=0.4pt,
 arc=1.5pt,left=5pt,right=5pt,top=2pt,bottom=2pt,beforeafter skip=6pt,
 fonttitle=\bfseries,coltitle=sinavy,
 attach title to upper,after title={.\ }}{thmx}
\newtcbtheorem[number within=section]{cprop}{Proposition}%
{enhanced,breakable,colback=sipine!4!white,colframe=sipine,boxrule=0.4pt,
 arc=1.5pt,left=5pt,right=5pt,top=2pt,bottom=2pt,beforeafter skip=6pt,
 fonttitle=\bfseries,coltitle=sipine,
 attach title to upper,after title={.\ }}{propx}

\section{Results}

We report every effect against a matched reference that fixes the checkpoint
and the text and changes one recorded setting. The reference deployment is full
precision on GPU with static padding. The deployment-level and batch-size
comparisons are measured against a single example, and the within-batch
comparisons against the canonical arrangement at a fixed batch. Each comparison
records three quantities on the same target: whether the
predicted class changes, how far the class probabilities move (the total
variation distance, a value between 0 and 1), and how far the unnormalized
class scores move (the largest absolute change among the logits). Because a
classifier that always predicts one class would look perfectly stable while
being uninformative, we keep only the 158 of 180 trained models whose validation
accuracy exceeds random guessing by a 15\% relative margin. Repeating the identical context reproduces the logits bitwise
on every hardware and software configuration we test (Table~\ref{tab:floor}), so each reported
change is caused by the setting we vary, not by run-to-run variation.

We present the comparisons in the order of the taxonomy
(Table~\ref{tab:taxonomy}): first the deployment-level changes a model owner
chooses once, then the request-level changes that vary with traffic after a
deployment is fixed, and finally the combined migration a practitioner makes in
one step. Two findings hold across all three. First, a stable predicted label
routinely coexists with movement in the class probabilities and scores that downstream
thresholds, ranking, and filtering consume, so we report label changes and
score movement together throughout. Second, the two mechanisms of
Table~\ref{tab:taxonomy} behave differently. A change to the represented input
(the structural mechanism) moves labels across a wide range of confidence,
whereas a change to execution alone (the numerical mechanism) moves labels only
where the two leading classes are nearly tied.

\subsection{Deployment-level changes}
\label{sec:deployment}

Moving a fixed model to a different deployment while holding the request at the
reference, a single example at static width, can change the prediction even
though there is no batching, no neighboring example, and no change to the
input. Figure~\ref{fig:v2deploy} reports two such single-factor changes.
Reducing the numerical format from full precision to bf16 leaves the
discriminative BERT classifier's labels unchanged, but flips about one percent
of the fully generative GPT-2 classifier's labels and moves one of its scores
by up to 0.56. Replacing the offline fixed 8-bit integer scale with a scale
recomputed from each batch is the larger change. It flips 4 of 192 fully generative labels and moves a score by up to 0.84, again with no observed
change for BERT. Precision and quantization are distinct changes. Reduced
precision (bf16) is a floating-point format with fewer mantissa bits, whereas
the 8-bit integer path represents activations as integers through a scale. Here
it is the choice of scale rather than the format that produces the larger effect. The
gap between the discriminative and the fully generative formulation appears again
throughout the paper and is largest under the combined migration of
Section~\ref{sec:migration}.

\begin{figure}[t]
\centering
\begin{minipage}[t]{0.46\linewidth}
\vspace{0pt}
\centering
\includegraphics[width=0.80\linewidth]{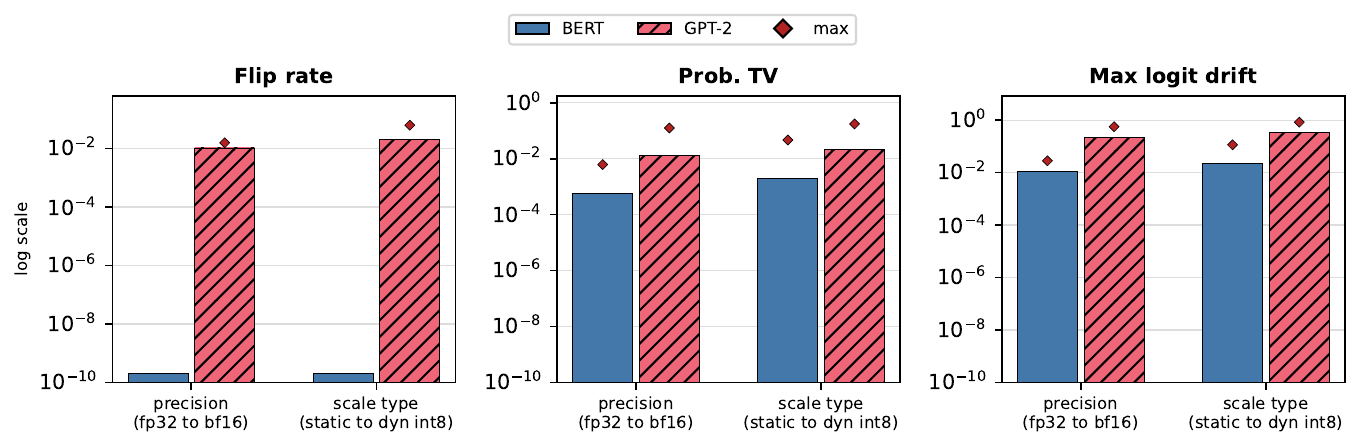}
\caption{Deployment-level changes at the reference request context (one
example, static width): reduced precision (full precision to bf16) and the
8-bit integer scale computation (offline fixed to per-batch).}
\label{fig:v2deploy}
\end{minipage}\hfill
\begin{minipage}[t]{0.5\linewidth}
\vspace{0pt}
\centering
\captionof{table}{Two axes for the serving factors. Rows: deployment-level
(fixed once per service) and request-level (varied per request). Columns:
structural (the represented input changes) and numerical (only execution
changes). Padding spans the boundary.}
\label{tab:taxonomy}
\tiny
\setlength{\tabcolsep}{2pt}
\begin{tabular}{@{}p{0.2\linewidth}p{0.3\linewidth}p{0.42\linewidth}@{}}
\toprule
 & Structural & Numerical \\
\midrule
Deployment-level & padding policy & precision, quantization, device, backend \\
Request-level & padded width & batch size, slot, order, composition \\
\bottomrule
\end{tabular}
\end{minipage}
\end{figure}

\subsection{Request-level serving context within a fixed deployment}
\label{sec:request}

Once a deployment is fixed, the settings that still vary from one request or
batch to the next are the batch size, the padded width, the target's position
within its batch, the order of the other examples, and which other examples
share the batch. Figure~\ref{fig:v1knobs} reports the two that change the tensor shape, batch size and
padded width, across three deployments. The
within-batch rearrangements are treated separately below.

\begin{figure}[t]
  \centering
  \includegraphics[width=0.80\linewidth]{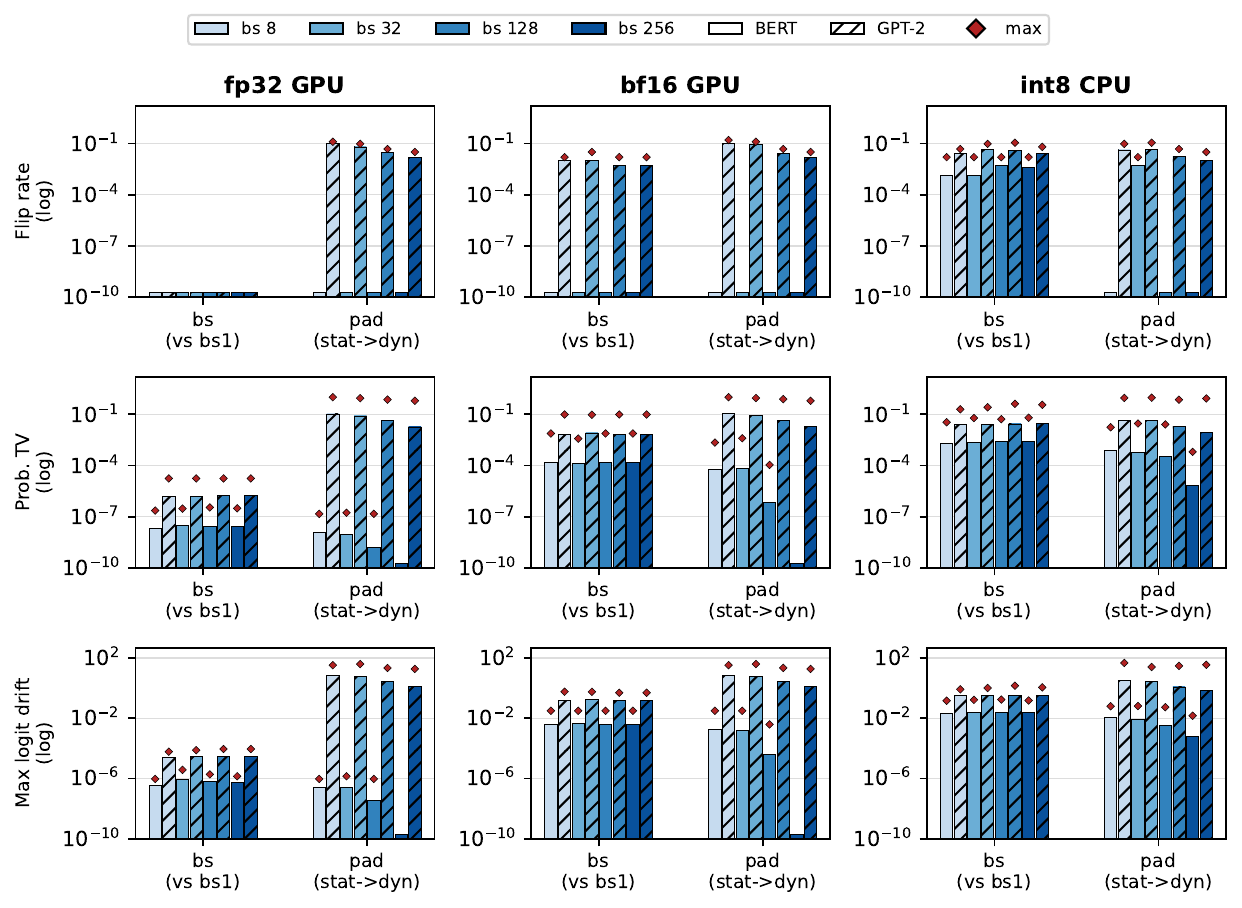}
  \caption{Request-level shape changes within a fixed deployment: batch size
  (against the single-example reference) and padded width (static to dynamic),
  for full-precision GPU, bf16 GPU, and 8-bit integer CPU. Padded width is the
  dominant effect in every deployment; batch size is near the floor at full
  precision and grows only under reduced precision and 8-bit integer
  execution.}
  \label{fig:v1knobs}
\end{figure}

\textbf{Padding is the dominant request-level effect, and it is structural.}
\label{sec:padding-localization}
Changing the padded width is the largest request-level effect on every quantity
in Figure~\ref{fig:v1knobs}, and it is the only one that changes the
represented input rather than its execution. Every padding-induced label change
occurs in the two GPT-2 families, about one in five of their comparisons, while
the right-padded BERT and masked-language-model classifiers change none.
Because the attention mask already blocks padding tokens in every model, the
difference is where the real tokens sit. Right padding leaves their position
indices fixed at every width, while left padding shifts each real token to a
higher tensor column. Two pretrained models separate the padding side from the
position representation. Left-padded Qwen3 with rotary positions
\citep{qwenteam2025qwen3} and right-padded ModernBERT
\citep{warner2024modernbert} both change no labels, so left padding alone is not
the cause (Table~\ref{tab:architecture}). The exact counts, the pretrained-model
replication, and a no-position-encoding boundary case are in
Appendix~\ref{sec:app-arch}.

\textbf{Why width-dependent absolute positions change the input.}
\label{sec:position-intervention}
For a real sequence of length $L$ in a left-padded tensor of width $T$, the
positions read from the tensor column are $(T-L,\ldots,T-1)$. Changing $T$
therefore changes the learned position embedding added to every real token,
so the model receives a different input even though the text is unchanged.
Assigning positions instead by each token's order among the real tokens,
$(0,\ldots,L-1)$, does not depend on the width. Two matched interventions confirm
the cause. Assigning positions by real-token order removes the padding effect,
and deliberately forcing the width-dependent positions back restores it, on
both GPU and CPU. Section~\ref{sec:theory} states the width-invariance result
formally, and Appendix~\ref{sec:app-position} reports the per-seed rates and the
intervention figure (Figure~\ref{fig:intervention}).

\textbf{Batch size matters only through the numerical format.}
The batch-shape effect documented in the generation literature transfers to a single classifier forward pass only under a reduced numerical format. At full precision, changing only the batch shape flips no labels
across 226{,}104 comparisons and moves probabilities by less than $10^{-5}$,
whereas under bf16 it moves one prediction's probability by up to 56.7
percentage points, and the 8-bit integer path is larger still
(Figure~\ref{fig:v1knobs}). Batch size is therefore a rounding effect at full
precision and a measurable effect only once the format is reduced. The margin
stratification and the label-count effect are in Appendix~\ref{sec:app-numerical}.

\textbf{The remaining within-batch rearrangements move no label, with one exception.}
Reordering the other examples is bitwise identical at every precision, because
it never touches the target's own row. Swapping which other examples share the
batch is bitwise identical under floating-point execution for the same reason:
no operation mixes one example's rows into another's. Under dynamic 8-bit
integer quantization, however, it is the one within-batch change that moves a
label. The activation scale is computed over the whole batch, so the content
of the other examples reaches the target through that shared scale.
Figure~\ref{fig:v1comp} shows this composition effect. It grows with batch
size, reaching about 3.7 percent of fully generative labels and a score
movement of 1.3 at a batch of 256. Appendix~\ref{sec:quantization} details the
mechanism and its removal under a fixed scale.

\textbf{The target slot is invariant only up to the kernel's reduction order.}
Moving the target to a different row at a fixed batch shape leaves the output unchanged in exact
arithmetic yet not bitwise on the GPU. A linear layer computes each output
coordinate as a sum over the hidden dimension, $y_n = \sum_{k} a_k W_{k,n}$, so
the target row's value depends only on its own activations and never on the row
it occupies. The GPU matrix multiplication, however, folds the batch and the
sequence into one dimension $M$ and evaluates that sum in an order that depends
on the row's offset in $M$, so changing the slot changes the summation order
and, because floating-point addition is not associative, moves the logits by
about $10^{-5}$ with no label change. The integer CPU path uses one fixed reduction order, so the logits are bitwise identical across slots. Batch-slot invariance is therefore a property of the kernel's
reduction order rather than a mathematical guarantee, which is why the
batch-invariant kernels of Section~\ref{sec:mitigations} remove it.
Appendix~\ref{sec:app-slot} gives the full account.

\textbf{Model size and score drift under unchanged labels.}
Among predictions whose label does not change under padding, the probability
movement roughly doubles as the model grows, so a larger model can appear more
label-stable while the scores under those labels move as much or more. The
label-change rate itself is not monotone in size. Because size, width, depth,
and training quality change together, this is a descriptive association rather
than a scaling law, and the per-size breakdown is in
Figures~\ref{fig:app-knobs-size} and~\ref{fig:app-within-size}.

\subsection{The combined effect of a whole-deployment migration}
\label{sec:migration}

A practitioner rarely changes one factor at a time. A common change moves
a trained classifier, without retraining, from a full-precision GPU to a
quantized CPU service, changing the device, numerical format, and operator
path together.
Figure~\ref{fig:migration} reports this combined change by classifier
formulation. The effect grows with how generative the classifier is. The
discriminative BERT classifier changes about 0.3 percent of labels, the two
pseudo-generative formulations about 2\% each, and the fully generative
GPT-2 classifier about 5.6 percent on average and up to 39 percent on its worst
checkpoint. We report this as a whole-migration profile rather than a single-factor
estimate; Appendix~\ref{sec:app-device-quant} separates the two factors.

A second combined change occurs after the deployment is fixed, because real
traffic arrives in large, variable-length batches that the serving system
batches and pads dynamically. Figure~\ref{fig:servingload} shows that this
serving load moves the fully generative classifier, about 1.6 percent of
labels at full precision and more under 8-bit integer execution, while it
leaves the discriminative classifier near the floor. The driver is the
realized padded width rather than the batch count.

\begin{figure}[t]
\centering
\begin{subfigure}[b]{0.48\linewidth}
\centering
\includegraphics[width=0.82\linewidth]{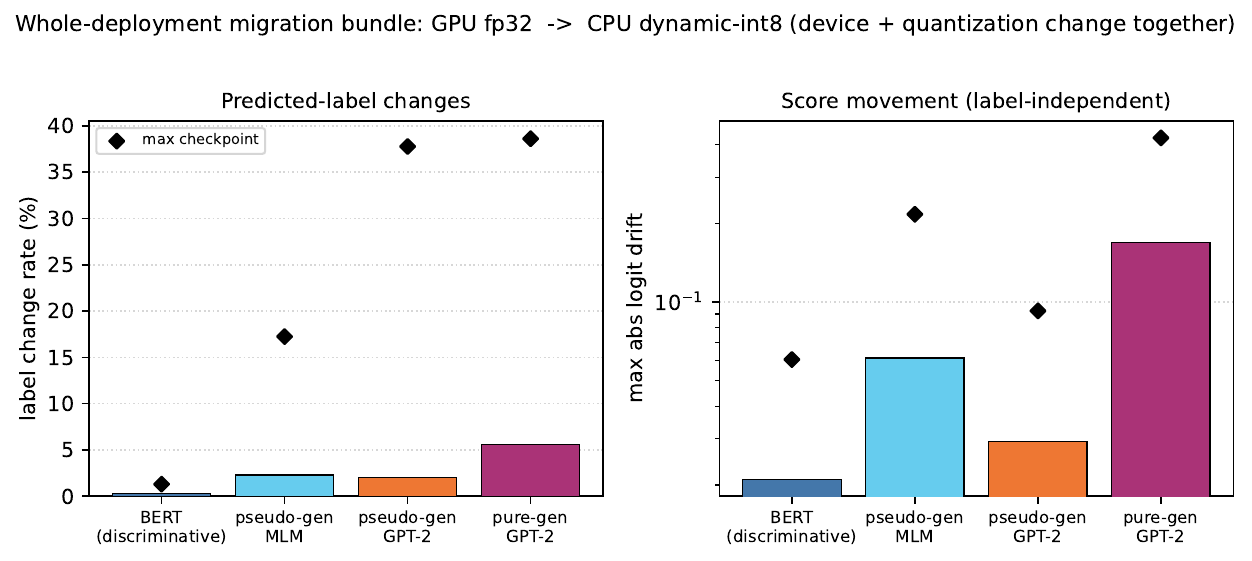}
\caption{Whole-deployment migration (full-precision GPU to quantized CPU),
by classifier formulation. Reported as a whole-migration profile, not a
single-factor estimate.}
\label{fig:migration}
\end{subfigure}\hfill
\begin{subfigure}[b]{0.48\linewidth}
\centering
\includegraphics[width=0.82\linewidth]{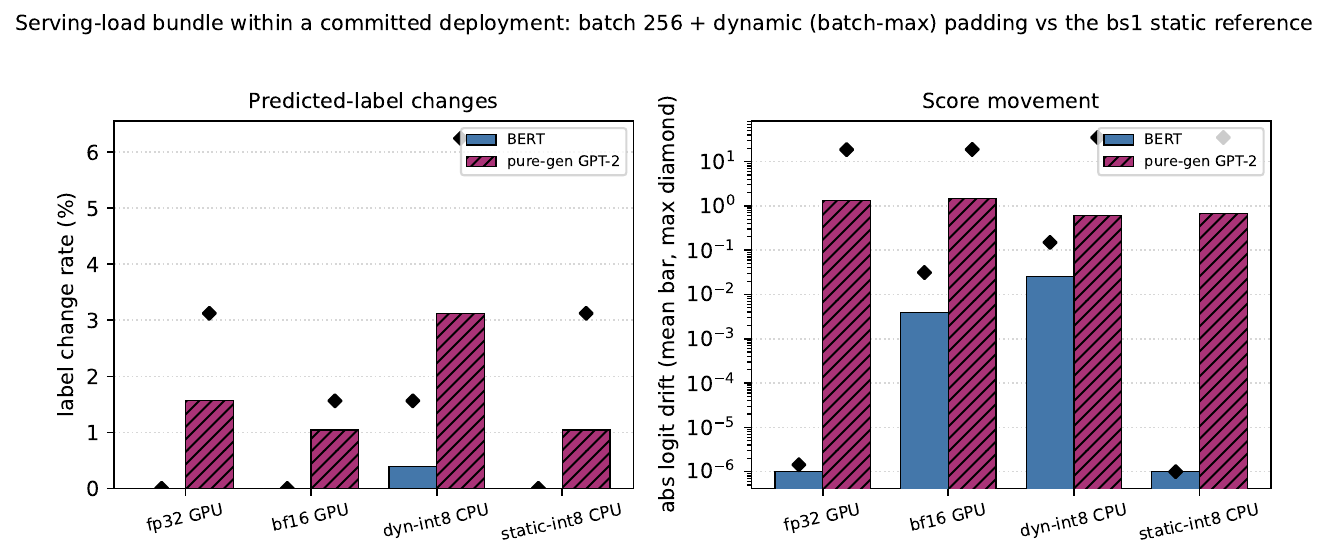}
\caption{Serving load within a fixed deployment, per deployment: a large
batch with dynamic (batch-maximum) padding against the single-example
static-width reference.}
\label{fig:servingload}
\end{subfigure}
\caption{Two combined deployment changes a practitioner encounters: a
whole-deployment migration (left) and serving load within a fixed
deployment (right).}
\label{fig:bundles}
\end{figure}

The two combined changes rank differently under labels than under scores, and
the reason applies whenever only the label is monitored. Holding the dataset
fixed, the migration and the serving load flip the same fraction of fully
generative labels. However, the serving load moves the raw scores several
times more in the mean and far more in the worst example, while the
probability movement stays comparable. A label change is governed by the gap between the two leading
classes after the bounded softmax. The small, near-uniform perturbations that the
quantization introduces therefore convert efficiently into flips near the
boundary. By contrast, the large, heavy-tailed score changes that the padding
mechanism produces often fall on confident examples, so they inflate the score
movement without changing the label. Comparisons
across the two changes must therefore hold the dataset and its number of
classes fixed, and a label-only monitor would rank them backward.

\subsection{Mitigations and their costs}
\label{sec:mitigations}

Each measured mechanism has a targeted mitigation, and none
covers the others (Table~\ref{tab:mitigations}). Assigning positions by
real-token order removes the structural padding effect, provided the same
rule is used in training, since applying it only at inference degrades
accuracy or collapses predictions (Appendix~\ref{sec:training-consistency}).
Because this rule no longer requires a fixed maximum width, it also permits
dynamic padding, which
processed about one quarter as many padded tokens for about two to two and a
half times the target throughput in our harness. Batch-invariant
matrix-multiplication kernels drive the batch-shape movement to bitwise
equality at full precision and bf16, and they remove the residual bf16 flips.
Their precision-matched cost is roughly neutral at batch size 1 and about
$1.3$ to $1.5\times$ the standard latency at large batch. Adopting them changes the numerical
result, so they are a deployment choice rather than a run-time switch. A fixed offline 8-bit integer scale removes the
coupling between co-batched inputs. For the small-margin flips that remain under
reduced precision, recomputing only the low-margin predictions in the reference
context detects almost all observed flips while recomputing a small fraction of
requests. All costs are measured in our experimental setup and do not represent production speedups.

\begin{figure}[t]
\centering
\begin{minipage}[t]{0.46\linewidth}
\vspace{0pt}
\centering
\includegraphics[width=0.86\linewidth]{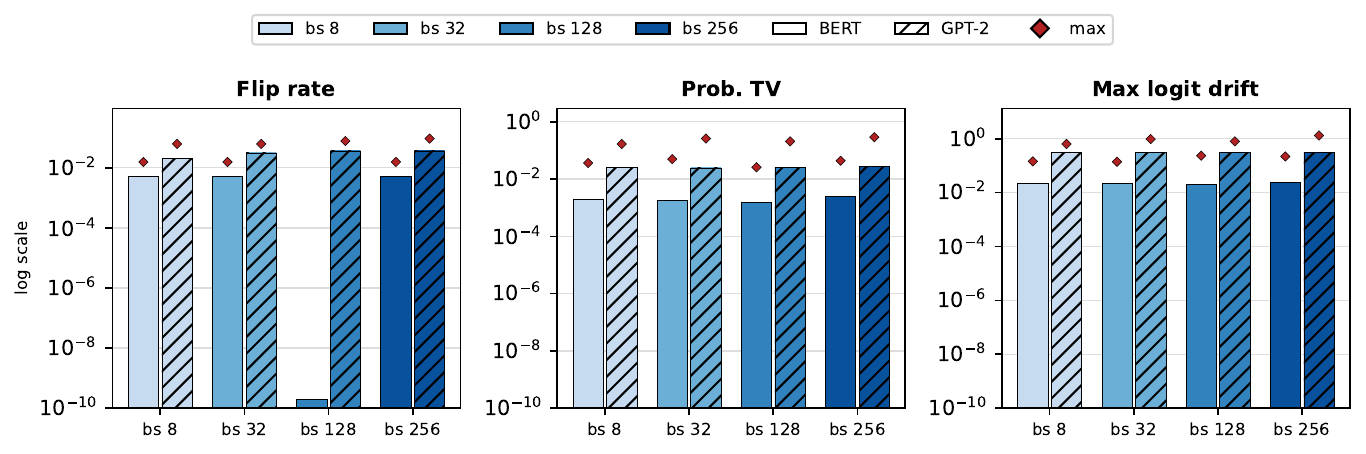}
\caption{Changing which other examples share the batch, under dynamic 8-bit
integer quantization on CPU, at a fixed batch shape, as a function of batch
size.}
\label{fig:v1comp}
\end{minipage}\hfill
\begin{minipage}[t]{0.5\linewidth}
\vspace{0pt}
\centering
\captionof{table}{Mechanism-matched mitigations. Costs are measured in our experimental setup and do not represent
production speedups.}
\label{tab:mitigations}

\tiny
\setlength{\tabcolsep}{3pt}

\begin{tabular}
{@{}p{0.3\linewidth}p{0.27\linewidth}p{0.37\linewidth}@{}}
\toprule
Mechanism & Mitigation & Measured outcome and cost \\
\midrule
Width-dependent positions & order-based positions & flips removed; $2.0$ to $2.5\times$ throughput \\
Batch-shape reductions & batch-invariant GEMMs & bitwise 64/64; $1.3$ to $1.5\times$ latency at large batch \\
Dynamic activation scale & fixed (offline) scale & dependence removed, bitwise (phone integer graphs and server control) \\
Residual small-margin flips & margin-based recompute & 44/44 held-out flips detected at 22\% recompute \\
\bottomrule
\end{tabular}

\end{minipage}
\end{figure}

\subsection{Formal results}
\label{sec:theory}

Three formal statements specify when the effects measured above can occur. Appendix~\ref{app:theory} states
the assumptions and gives the proofs.

The first statement identifies when padding can change a prediction. It gives a
sufficient condition for a change of padded width to leave the prediction
unchanged, together with the converse for the one scheme that violates it.

\begin{cthm}{Width invariance is governed by the position rule}{widthinv}
Consider two left-padded tensors containing the same real-token sequence,
differing only in the number of padding columns, so that every real token's
tensor column shifts by a common offset $s$. Suppose (i) every positional
contribution $\Phi$ to the computation is invariant to a common shift,
$\Phi(i+s, j+s) = \Phi(i, j)$; (ii) padding is excluded by the mask from
every attention and pooling operation; (iii) the output position from which the class scores are read follows the same
real-token state; (iv) any cache offsets use one consistent coordinate
system; and (v) no other operation depends on the absolute tensor column.
Then, in exact arithmetic, the classifier logits are identical at both
widths. Conversely, if positions index a learned absolute embedding table
by tensor column, the two widths present different represented
inputs whenever $p(\pi_i + s) \neq p(\pi_i)$ for any real token, and no
same-input numerical guarantee applies. Proofs:
Propositions~\ref{propx:common-shift} and~\ref{propx:absolute-position},
with the per-scheme instantiation in Proposition~\ref{propx:schemes},
Appendix~\ref{app:theory}.
\end{cthm}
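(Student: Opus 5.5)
The forward direction is an induction over the layers of the network. Write the two tensors as width $T$ and width $T+s$. Let $\pi_i$ be the tensor column of the $i$-th real token at width $T$, so that its column at width $T+s$ is $\pi_i+s$. The invariant to carry layer by layer is the following: at every layer $\ell$, the hidden state at column $\pi_i+s$ in the wide tensor equals the hidden state at column $\pi_i$ in the narrow tensor, for every real token $i$. Padding columns are allowed to hold arbitrary values. This is an equality in exact arithmetic, so no rounding enters.

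The base case is the embedding layer. The token embedding depends only on the token ID, which is the same in both tensors. Any additive positional term is a positional contribution $\Phi$ of a single index, read as $\Phi(i,i)$, so it is shift-invariant by (i). If there is no such term, there is nothing to check. By (v), no other embedding-stage operation reads the absolute column.

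For the inductive step I would examine each operation type in turn.
\begin{itemize}
\item Token-wise operations (feed-forward blocks, layer normalization, projections) act row by row. They therefore preserve the invariant immediately, using (v).
\item Attention is the only operation that mixes columns. The score between real query $i$ and key $j$ is built from the $Q/K$ projections of equal states plus a positional term $\Phi(\pi_i+s,\pi_j+s)=\Phi(\pi_i,\pi_j)$, by (i). For rotary embeddings this term is the relative rotation $R_{\pi_j-\pi_i}$; for ALiBi or relative bias it is a function of $\pi_j-\pi_i$. Condition (ii) sets every padding key to $-\infty$ before the softmax, so the normalizer and the weighted sum run over the same index set of real tokens with identical summands. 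The real-token outputs therefore agree. A causal mask is itself a function of $\pi_j-\pi_i$, so it is covered by the same argument.
\item KV caches are handled by (iv). Cached keys and queries are expressed in one coordinate system, so the relative offsets are again preserved.
\item Pooling, under (ii), averages or selects over real tokens only.
\end{itemize}
Finally, (iii) says the readout column is the image of the same real token, for example the last real token at $\pi_L$ versus $\pi_L+s$. The class head and the fixed normalization $N$ therefore receive equal vectors, and the logits are identical.

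The main obstacle is making the attention step rigorous while keeping it scheme-agnostic. Condition (i) has to be phrased so that it covers both additive biases and multiplicative rotations. For rotary embeddings I would prove a small lemma: $\langle R_{a+s}q, R_{b+s}k\rangle=\langle R_a q,R_b k\rangle$, which follows from orthogonality and the group property $R_{a+s}^\top R_{b+s}=R_{b-a}$. I would then treat every scheme uniformly as "scores depend on positions only through $\Phi(\pi_i,\pi_j)$". The per-scheme check, namely that rotary, ALiBi, relative bias and no position encoding satisfy (i) and that a learned absolute table does not, would be deferred to Proposition~\ref{propx:schemes}.

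For the converse, take a learned absolute table $p(\cdot)$. At layer $0$ the represented input of real token $i$ is $e(x_i)+p(\pi_i)$ at one width and $e(x_i)+p(\pi_i+s)$ at the other. These differ exactly when $p(\pi_i+s)\neq p(\pi_i)$. The output of $P_c$ therefore differs, so the intervention is structural in the sense of Section~\ref{sec:method}. Any guarantee stated for a fixed represented input under a change of execution $E_c$ then has no hypothesis to apply to. I would stress that this is a statement about inputs, not a claim that the logits must differ: the trained map could coincidentally agree. That caveat is why the statement is phrased as ``no same-input numerical guarantee applies''.
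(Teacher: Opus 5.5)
Your proposal is correct and follows essentially the same route as the paper. The forward direction is the same layer-by-layer induction on real-token states as in Proposition~\ref{propx:common-shift}: you unpack the paper's condition-by-condition citations into an operation-by-operation case analysis, with the rotary lemma and the per-scheme checks that the paper places in Proposition~\ref{propx:schemes}. The converse is the same layer-$0$ observation as in Proposition~\ref{propx:absolute-position}, that $e(x_i)+p(\pi_i)\neq e(x_i)+p(\pi_i+s)$ voids the same-input premise without forcing the logits to differ.
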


With correct masking and a correct output position, whether padding changes the represented
input is therefore decided by the position rule, before any numerical
consideration. A rule that depends only on
the distance between tokens, or that numbers real tokens from zero, is safe.
A learned embedding indexed by the tensor column is not. This theorem gives a
sufficient condition together with its converse for the learned-absolute
case. It does not characterize every scheme. It also covers exact arithmetic
only, since removing the structural mechanism does not by itself guarantee
bitwise equality under finite-precision execution. Next, the following statement limits which predictions are influenced by a numerical change. It is a one-sided
certificate whose contrapositive is the necessary condition for a flip.

\begin{cprop}{Margin certificate for label stability}{margin}
Let $w$ be the unique predicted class under the reference context $c_0$,
with probability margin
$m(x) = p_w(x,c_0) - \max_{j \neq w} p_j(x,c_0)$, and let
$\operatorname{TV}(x,c,c_0) = \tfrac{1}{2}\sum_y \lvert p_y(x,c) -
p_y(x,c_0)\rvert$. If $m(x) > 2\operatorname{TV}(x,c,c_0)$, the predicted
class is identical under $c$ and $c_0$; equivalently, every label change
requires $m(x) \le 2\operatorname{TV}(x,c,c_0)$. The analogous logit
statement holds with the logit margin $g(x)$ and threshold $2\epsilon$ when
the logit change is bounded by $\epsilon$ in the infinity norm. Proofs:
Propositions~\ref{propx:logit-margin}, \ref{propx:tv-margin},
and~\ref{propx:flip-necessary} in Appendix~\ref{app:theory}.
\end{cprop}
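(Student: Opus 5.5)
The plan is to prove the probability statement directly from a coordinatewise bound, then treat the logit version as the same argument applied to a different vector. The single fact driving everything is that total variation controls each coordinate: for any class $y$, $\lvert p_y(x,c)-p_y(x,c_0)\rvert \le \operatorname{TV}(x,c,c_0)$. This holds because both vectors sum to one. The positive and negative parts of $p(x,c)-p(x,c_0)$ therefore each have mass exactly $\operatorname{TV}$, and any single coordinate is at most one of these parts. I would record this as a one-line lemma. It is sharper than the naive bound $\lvert p_y - p'_y\rvert \le 2\operatorname{TV}$, and it is what makes the threshold exactly $2\operatorname{TV}$ rather than $4\operatorname{TV}$.

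With the lemma in hand, write $\delta=\operatorname{TV}(x,c,c_0)$ and fix any $j\neq w$. The lemma gives
$p_w(x,c)\ge p_w(x,c_0)-\delta$ and $p_j(x,c)\le p_j(x,c_0)+\delta$. Subtracting,
\begin{equation*}
p_w(x,c)-p_j(x,c)\ \ge\ p_w(x,c_0)-p_j(x,c_0)-2\delta\ \ge\ m(x)-2\delta\ >\ 0 .
\end{equation*}
So $w$ strictly beats every other class under $c$ and is the unique argmax there. This means $f_\theta(x;c)=f_\theta(x;c_0)$ and $\flip_\theta(x;c_0,c)=0$. The equivalent form, that every label change requires $m(x)\le 2\operatorname{TV}$, is just the contrapositive. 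Stating it this way also handles argmax ties under $c$ with any tie-breaking rule, because strict inequality rules out ties.

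For the logit statement, let $g(x)=z_w(x,c_0)-\max_{j\ne w}z_j(x,c_0)$ and assume $\lVert z(x,c)-z(x,c_0)\rVert_\infty\le\epsilon$. The same two-sided estimate gives $z_w(x,c)-z_j(x,c)\ge g(x)-2\epsilon$, which is positive when $g(x)>2\epsilon$. The normalization $N$ is a fixed softmax, which is strictly monotone in each logit. It therefore preserves the argmax of the logits, so the predicted class agrees under $c$ and $c_0$. The contrapositive again gives the necessary condition for a flip.

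I do not expect a real obstacle here. The only step needing care is the coordinatewise TV lemma, specifically checking that the normalization $\sum_y p_y=1$ holds in both contexts, so that the bound is $\delta$ and not $2\delta$. I would also note explicitly that the argument assumes $w$ is unique under $c_0$, which is required for $m(x)>0$ to be meaningful. Finally, I would note that for the logit version the infinity-norm hypothesis must hold for the specific pair $(c_0,c)$ being compared, not as a uniform bound over all contexts.
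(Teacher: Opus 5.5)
Your proof is correct and follows the paper's argument essentially step for step. Like the paper, you bound each single-class probability change by $\operatorname{TV}$, subtract to get a gap loss of at most $2\operatorname{TV}$, run the identical estimate with $\epsilon$ for logits, and obtain the flip condition as the contrapositive; your explicit justification of the coordinatewise bound via the positive and negative parts, and your remark that softmax preserves the logit ordering, simply spell out steps the paper leaves implicit.
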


A label can change only where the two leading classes are close, which is why
reduced precision and quantization move labels only near the decision boundary
while a wide margin leaves the label intact. The certificate is one-sided. It
preserves the label, not the calibration, ranking, or thresholds that consume the scores, and
it is sufficient rather than necessary for stability, so a label can also remain
fixed below the threshold. For a fully generative classifier that sums a score
over every token, the admissible movement grows with the input length, which is
why that formulation is the most exposed (Proposition~\ref{propx:gen-margin}).

The third statement states the single condition under which speed, accuracy, and
serving-context invariance cannot hold together.

\begin{cthm}{Conditional infeasibility of joint speed, accuracy, and invariance}{trinity}
Fix a task distribution, a bounded context set $\mathcal{C}$, one execution
stack, an implementation family $\mathcal{H}$, and predeclared thresholds
$\alpha$, $\tau$, $\delta$ on worst-context accuracy
$A(f) = \inf_{c} \Pr[\arg\max_y p_f(y \mid X,c) = Y]$, worst-case latency
$T(f)$, and score drift
$V(f) = \sup_{x,c,c'} \operatorname{TV}(x,c,c')$. Suppose a statistic
$Q = q(X)$ exists such that every implementation with $T(f) \le \tau$ and
$V(f) \le \delta$ predicts as a deterministic function of $Q$ in every
context. If the Bayes accuracy available from $Q$ satisfies
$A_Q^{\star} < \alpha$, then no implementation in $\mathcal{H}$ meets all
three thresholds. Proof: Theorem~\ref{thmx:triple} in
Appendix~\ref{app:theory}; Proposition~\ref{propx:collision} derives the
premise from an execution model and Proposition~\ref{propx:four-input}
gives a pairwise-feasible, jointly infeasible instance.
\end{cthm}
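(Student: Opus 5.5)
The statement is a reduction to a Bayes-risk bound, and I would prove it by contradiction. Suppose some $f \in \mathcal{H}$ meets all three thresholds: $T(f) \le \tau$, $V(f) \le \delta$, and $A(f) \ge \alpha$. The first two inequalities are exactly the hypothesis of the premise. So in every context $c \in \mathcal{C}$ the prediction $\arg\max_y p_f(y \mid X, c)$ equals $h_c(q(X))$ for some deterministic map $h_c$ from the range of $Q$ to the label set. The map may depend on $c$; the premise asks only that it be a function of $Q$ once $c$ is fixed. I would state explicitly that $h_c$ is measurable, or that the ranges are finite, which the bounded context set and discrete labels make natural, so that $\Pr[h_c(Q) = Y]$ is well defined.

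Next, fix any single context $c$. Since $h_c(Q)$ is a classifier that sees only $Q$, the definition of Bayes accuracy gives $\Pr[h_c(Q) = Y] \le A_Q^{\star} = \sup_h \Pr[h(Q) = Y]$. This bound holds for every $c$, so it also holds for the worst context: $A(f) = \inf_c \Pr[h_c(Q) = Y] \le A_Q^{\star} < \alpha$. That contradicts $A(f) \ge \alpha$. Hence no implementation meets all three thresholds. The infimum over contexts only helps, because we need the bound in a single context, and $\mathcal{C}$ is nonempty since it is declared.

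The core argument is therefore two lines. The real care goes into two points. First, ties in the $\arg\max$ must be resolved by a fixed deterministic rule, so that "predicts as a deterministic function of $Q$" is unambiguous. Second, the Bayes accuracy must be defined over all deterministic maps of $Q$. Randomized maps would not exceed it anyway, since a randomized rule's accuracy is an average of deterministic rules' accuracies.

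I expect the main obstacle to be not this theorem but its content: showing that the premise is non-vacuous. I would defer that to Proposition~\ref{propx:collision}, as the statement indicates. There, under the execution model, fast implementations collapse distinct inputs into a shared statistic $Q$, for example through a shared dynamic quantization scale or a shape bucket. The low-drift requirement then forces their predictions to agree across all contexts that map to the same $Q$. Proposition~\ref{propx:four-input} would then exhibit a four-input distribution on which each pair of the three thresholds is attainable but $A_Q^{\star} < \alpha$. That instance shows the theorem is a genuine joint obstruction rather than a consequence of any single constraint.
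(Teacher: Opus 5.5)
Your proof is correct and follows the paper's argument: fix an implementation that is $\tau$-fast and $\delta$-invariant, use the premise to write its decision in each context as a deterministic function of $Q$, bound its accuracy in that context by $A_Q^{\star}$, and pass to the infimum over $\mathcal{C}$. The differences are cosmetic: you argue by contradiction and treat $A_Q^{\star}$ as a supremum over $Q$-measurable rules, whereas the paper argues directly, defines $A_Q^{\star} = \mathbb{E}_Q[\max_y \Pr(Y=y \mid Q)]$, and obtains the bound from $\mathbb{E}_Q[\Pr(Y = g_{f,c}(Q) \mid Q)] \le \mathbb{E}_Q[\max_y \Pr(Y=y \mid Q)]$.
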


In the configurations we tested, none of the three had to be given up to
bound the others. Order-based positions removed the structural mechanism while
improving throughput, fixed 8-bit integer scales preserved quality, and
batch-invariant execution removed the batch-shape drift at the bounded cost
reported above. The theorem does not assert that such a tradeoff exists.
Its premise holds for execution models whose fast path exposes a fixed
representation, and no such model has been identified for these
classifiers. It marks the point at which being fast and invariant would
force a classifier to discard information that accuracy needs.
Propositions~\ref{propx:dynamic-scale}, \ref{propx:fixed-scale},
and~\ref{propx:gen-drift} state the quantization coupling of
Section~\ref{sec:request} and the length-proportional generative drift.





\section{Conclusion}

\textbf{Findings.} For the classifiers we study, a prediction is not fixed
by the trained model and the input text alone. It also depends on serving
settings the user never sees. The largest effect is input-side. When a
classifier with a learned embedding for each absolute position is padded on
the left, changing the padded length moves every real token to a new
position and changes the input the classifier receives. Assigning positions
by order among the real tokens, in training and inference alike, removes the
effect in our experiments without lowering accuracy. The numerical causes (reduced
precision, batch shape, quantization) are smaller and concentrate near the
decision boundary, yet a stable label can coexist with large probability movement
that thresholds, ranking, and filtering consume, and a wide margin does not
protect against the input-side cause.

\textbf{Scope.} The measurements cover transformer text classifiers on
unimodal text, trained from initialization in the from-scratch grid and
fine-tuned in the pretrained controls, on one GPU, one CPU, and two Android
phones. The mechanism statements do not depend on this choice, since width
invariance is decided by the position rule and the margin certificate
applies to any serving change. What remains to be measured for in-context
classifiers, multimodal inputs, and production engines is magnitude. The three model sizes scale depth, width,
and head count jointly and describe how the effect moves with capacity.
The GPU-to-CPU comparison is the one-step migration a practitioner
performs, with its components separated in Appendix~\ref{sec:app-device-quant}.
Most evaluation examples sit at wide margins. The five-class cells confirm
the certificate's prediction that narrower margins convert the same score
movement into more label changes (Table~\ref{tab:labelcount}).
Appendix~\ref{app:limitations} states the boundaries in full. 

\textbf{Recommendation.} Reproducible classification requires fixing the
serving settings we identify, monitoring class probabilities rather than
labels, and recomputing inputs on which fixed settings disagree.

\label{lastmainpage}

\subsubsection*{AI Use Statement}
Generative AI tools were used to assist with manuscript editing, code
implementation, and checking mathematical arguments and experiments. The
authors made all scientific decisions, verified the incorporated outputs,
and take responsibility for the final content.

\subsubsection*{Ethics Statement}
This work uses only public benchmark datasets (SST-2, SST-5, AG News,
Emotion, and IMDb) and collects no human-subject data. The measured
instabilities are relevant to deployed classification systems, and the
paper provides mitigations rather than mechanisms for exploitation.

\subsubsection*{Reproducibility Statement}
Every experiment is specified by a manifest that records the checkpoint,
dataset, random seed, serving settings, and output location, and every run
writes to its own output directory, so each reported number can be traced
to a stored artifact. Appendix~\ref{app:theory} provides the proofs,
Appendix~\ref{app:implementation} reports the model, training, and serving
settings, and Appendix~\ref{app:results} contains the complete result
tables and figures.

\bibliographystyle{iclr2027_conference}
\bibliography{references}

\appendix
\clearpage
\begin{center}\Large\bfseries Appendix Contents\end{center}
\startcontents[appendix]
\printcontents[appendix]{l}{1}{\setcounter{tocdepth}{2}}
\bigskip

\section{Notation}
\label{app:notation}

\begin{table}[h]
\centering
\small
\begin{tabular}{ll}
\toprule
Symbol & Meaning \\
\midrule
$x$; $\theta$ & input text; trained checkpoint \\
$c$; $\mathcal{C}$; $c_0$ & serving context; declared context set; reference context \\
$f_\theta(x;c)$ & predicted label of checkpoint $\theta$ on $x$ under context $c$ \\
$p_\theta(\cdot \mid x, c)$; $z$ & class-probability vector; class logits \\
$\mathrm{Flip}_\theta(x; c_0, c)$ & indicator that $f_\theta(x;c) \neq f_\theta(x;c_0)$ \\
$\operatorname{TV}(x,c,c')$ & total variation between the probability vectors under $c$ and $c'$ \\
$\lVert z_0 - z_c \rVert_\infty$ & maximum absolute logit difference between contexts \\
$m(x)$; $g(x)$ & top-two probability margin; top-two logit margin, both under $c_0$ \\
$L$; $T$; $s$ & real-token length; padded width; common column shift from added padding \\
$\pi_i$; $p(\cdot)$ & tensor column of real token $i$; learned absolute position embedding \\
$\Phi(i,j)$ & positional contribution to the interaction of positions $i$ and $j$ \\
$A(f)$; $T(f)$; $V(f)$ & worst-context accuracy; worst-case latency; worst-case score drift \\
$Q = q(X)$ & statistic through which fast and invariant implementations predict \\
\bottomrule
\end{tabular}
\end{table}

\section{Additional Results}
\label{app:results}

This appendix collects the measurements that the main results reference,
organized as listed in the contents above.

\subsection{Identical contexts are bitwise repeatable}
\label{sec:floor}

Every effect reported below is measured against repeated identical runs.
Repeating the identical serving context produces bit-identical logits on
every measured configuration (Table~\ref{tab:floor}). Repetition contributes exactly zero, so every
nonzero difference reported below is attributable to the declared
intervention. This also separates our subject from run-to-run
nondeterminism, since the models are deterministic given a serving context
but not invariant across serving contexts.

\begin{table}[t]
\centering
\caption{Same-context repeatability floor. Identical inputs, identical
context, repeated runs; entries are bitwise-identical comparisons over the
total.}
\label{tab:floor}
\small
\setlength{\tabcolsep}{3pt}
\begin{tabular}{lll}
\toprule
Configuration & Conditions & Result \\
\midrule
CPU PyTorch fp32 & BERT and GPT-2; 1 and 32 threads & bitwise (4 repeats) \\
GPU PyTorch & fp32, fp16, bf16; all four formulations & bitwise (4 repeats) \\
GPU PyTorch, held-out replication & fp32, bf16, fp16 & 2{,}304 / 2{,}304 bitwise \\
ONNX Runtime CPU & BERT SST-2, five repeats & 64 / 64 bitwise \\
GPU fp32, no positional encoding & four identical contexts & 768 / 768 bitwise \\
GPU, fixed-width length comparisons & IMDB and AG News; fp32 and bf16 & 768 / 768; 576 / 576 bitwise \\
GPU, greedy decoding & batch 1 and 32, 3 prompts, 2 precisions & 12 / 12 checks bitwise \\
\bottomrule
\end{tabular}
\end{table}

\subsection{Position-assignment intervention}

Figure~\ref{fig:intervention} accompanies the position-assignment experiment
of Section~\ref{sec:position-intervention}, in which removing the
width-dependent positions eliminates the effect and deliberately restoring
them brings it back.

\begin{figure}[t]
  \centering
  \includegraphics[width=\linewidth]{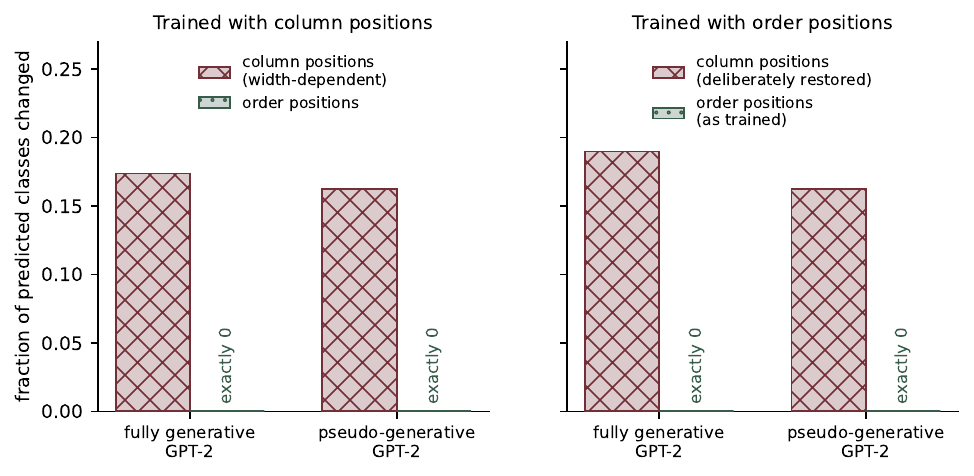}
  \caption{Intervening on the position assignment. Assigning positions by
  order among real tokens removes the observed padding-induced class
  changes; deliberately restoring column positions on models trained with
  order positions brings the changes back. Matching the assignment rule
  between training and inference is separately required to preserve
  accuracy (Section~\ref{sec:training-consistency}).}
  \label{fig:intervention}
\end{figure}

\subsection{High confidence does not guarantee stability}
\label{sec:margins}

\begin{figure}[t]
\centering
\begin{subfigure}[b]{0.49\linewidth}
\centering
\includegraphics[width=\linewidth]{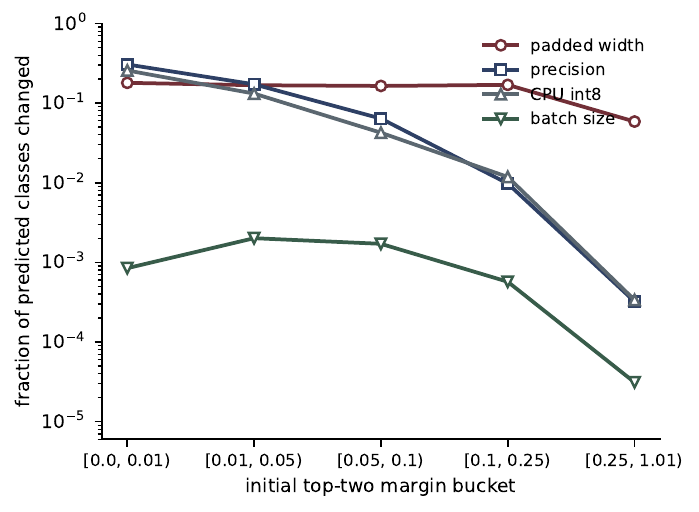}
\caption{Fraction of predicted classes changed, by initial top-two logit
margin in fixed intervals $[0,0.01)$, $[0.01,0.05)$, $[0.05,0.1)$,
$[0.1,0.25)$, and $[0.25,\infty)$ (95\% intervals clustered by model). Numerical changes concentrate where the two leading
scores are nearly tied; padding persists to wider gaps. Zero-observed groups
sit at the $10^{-5}$ floor.}
\label{fig:margin}
\end{subfigure}\hfill
\begin{subfigure}[b]{0.49\linewidth}
\centering
\begin{tikzpicture}
\begin{axis}[
    width=\linewidth, height=0.8\linewidth,
    xmode=log, log basis x=2, ymode=log,
    xlabel={real-token length (width 256)},
    ylabel={max abs.\ logit difference},
    xtick={32,64,128,256}, xticklabels={32,64,128,256},
    legend style={at={(0.98,0.02)},anchor=south east,font=\scriptsize,draw=none,fill=none},
    tick label style={font=\scriptsize}, label style={font=\footnotesize}]
\addplot[color=siwine,mark=*,thick] coordinates
    {(32,0.406) (64,0.636) (128,0.677) (256,1.530)};
\addplot[color=sinavy,mark=square*,thick] coordinates
    {(32,6.10e-5) (64,1.22e-4) (128,1.83e-4) (256,2.44e-4)};
\legend{bf16, fp32}
\end{axis}
\end{tikzpicture}
\caption{Within-example length intervention (IMDB, fully generative GPT-2,
batch size 1 versus 32, 256 targets per point, bitwise floor). bf16 label
changes rise 2, 6, 6, 7 of 256 with length; fp32 changes none.}
\label{fig:length}
\end{subfigure}
\caption{Two further amplifiers of numerical drift: the decision margin
suppresses numerical label changes but not padding (left), and a longer
real-token input increases per-forward drift at fixed padded width (right).}
\label{fig:secondary-pair}
\end{figure}

Higher predicted-class probability is associated with fewer label changes but
not with less score movement. Across the matched sizes of
Table~\ref{tab:size}, the average probability assigned to the predicted class
rises from 0.554 to 0.846, while class
changes from precision fall from 3.68\% to 0.568\% and from the GPU-to-CPU
comparison from 3.30\% to 0.666\%; both contrasts exclude zero. The
underlying score movement from precision nevertheless rises by 0.0118 (95\%
CI 0.0076 to 0.0160); across these models, higher confidence coincides with
fewer label changes despite larger score movement, a comparison that does
not separate confidence from model size or training quality. Figure~\ref{fig:margin} shows where
the changes occur. For the two numerical comparisons they concentrate where
the two leading scores are nearly tied, whereas padding continues to change
classes at score gaps where the numerical comparisons show none. Where the top-two logit margin is at least 4, padding still changes 51 of 46,006 binary decisions and 312 of 74,745 multiclass decisions, while precision and the GPU-to-CPU comparison change none across 3.65 million comparisons. Confidence
therefore suppresses label changes from the numerical effects, which perturb the computation of a
fixed input, but not the padding effect, which changes the input itself.
These observations do not establish confidence as the cause, and
a probability threshold calibrated under one inference setting carries no
guarantee under another.

\subsection{Computational length amplifies numerical drift}
\label{sec:length}

A longer target amplifies the numerical mechanism even when nothing structural
changes. Truncating the same 256 IMDB reviews to real-token lengths 32 to
256 at a fixed padded width of 256 increases drift monotonically with
length, and the trend holds within a fixed margin band
(Figure~\ref{fig:length}). At bf16 the maximum logit difference grows from
0.41 to 1.53 and flips from 2 to 7 of 256, while fp32 stays at the
$10^{-4}$ scale with zero flips at every length. A second dataset (AG News,
fixed width, bins of different targets) reproduces the descriptive contrast between fp32 and
bf16 (2/256, 0/256, and 2/54 flips).

\begin{cprop}{Length-proportional drift bound for generative scoring}{length}
For class $y$, let $S_y(c) = \sum_{t=1}^{L} \ell_{y,t}(c)$ be the summed
token log-probability score over the $L$ real tokens under context $c$,
padding excluded. If every token contribution satisfies
$\lvert \ell_{y,t}(c) - \ell_{y,t}(c') \rvert \le \varepsilon$, then
$\lvert S_y(c) - S_y(c') \rvert \le L\varepsilon$ for each class, and every
top-two score gap changes by at most $2L\varepsilon$ (or
$(L_y + L_j)\varepsilon$ under class-dependent lengths), against
$2\varepsilon$ for a classifier that reads a single output position. If the reference score margin
exceeds this bound, the prediction cannot change. Proofs:
Propositions~\ref{propx:gen-drift} and~\ref{propx:gen-margin} in
Appendix~\ref{app:theory}.
\end{cprop}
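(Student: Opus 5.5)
The proof is a triangle inequality followed by a margin comparison, in the same spirit as the logit half of the margin certificate (Proposition~\ref{propx:margin}). First, for each class $y$, write the score difference as a telescoping sum over the $L$ real tokens:
\[
S_y(c)-S_y(c')=\sum_{t=1}^{L}\bigl(\ell_{y,t}(c)-\ell_{y,t}(c')\bigr).
\]
Apply the triangle inequality and the per-token hypothesis $\lvert \ell_{y,t}(c)-\ell_{y,t}(c')\rvert\le\varepsilon$ to obtain $\lvert S_y(c)-S_y(c')\rvert\le L\varepsilon$. Padding positions are excluded by hypothesis, so the sum runs over exactly $L$ terms. This is the length-proportional drift statement.

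Second, fix two classes $y,j$ and let $G_{yj}(c)=S_y(c)-S_j(c)$. Then
\[
\lvert G_{yj}(c)-G_{yj}(c')\rvert\le \lvert S_y(c)-S_y(c')\rvert+\lvert S_j(c)-S_j(c')\rvert\le 2L\varepsilon .
\]
If class $y$ is scored over $L_y$ tokens and class $j$ over $L_j$ tokens, the same two bounds give $(L_y+L_j)\varepsilon$. For the single-output-position classifier, each class score is a single term, so the same computation with $L=1$ gives $2\varepsilon$. This matches the infinity-norm logit statement of Proposition~\ref{propx:margin}.

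Third, for the certificate, let $w$ be the unique reference argmax under $c'$ and assume $S_w(c')-S_j(c')>2L\varepsilon$ for every $j\ne w$. It suffices to assume this for the runner-up, since it then holds for all $j$. By the gap bound, $S_w(c)-S_j(c)\ge S_w(c')-S_j(c')-2L\varepsilon>0$ for every $j\ne w$. Hence $w$ remains the strict argmax under $c$ and the label does not change. Under class-dependent lengths, the threshold for each pair $(w,j)$ is replaced by $(L_w+L_j)\varepsilon$.

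The only delicate point is interpretive: the prediction must be the argmax of the raw summed scores $S_y$, possibly plus a fixed class prior, which cancels in differences. If the classifier instead length-normalizes, or applies a softmax over classes before the argmax, I would note two facts. The argmax is unchanged by the monotone normalization $N$ when it is applied uniformly across classes. Normalizing by a class-independent length $L$ simply divides both the bound and the margin by $L$. No other step should present an obstacle.
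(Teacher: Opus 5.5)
Your proposal is correct and follows the paper's proof (Propositions~\ref{propx:gen-drift} and~\ref{propx:gen-margin}) essentially step for step: a triangle inequality over the $L$ token-level differences gives the per-class bound, a second triangle inequality gives the $2L\varepsilon$ gap bound, and the logit-margin argument gives the certificate. Your per-pair threshold $(L_w+L_j)\varepsilon$ under class-dependent lengths is slightly more careful than the paper's wording. With unequal lengths, checking only the runner-up is not enough, and you correctly require the bound for every competitor.
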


A classifier that scores every token of the input under
each class accumulates a perturbation that grows with the number of tokens it
reads, while a classifier that reads a single output position does not. This is the aggregation mechanism consistent with the measured
difference between formulations and with the drift trend in Figure~\ref{fig:length}; the
bound limits how much drift the aggregation can admit; it does not predict
the realized drift or a corresponding change in label stability.

\subsection{Score movement under padding grows with model size}
\label{sec:size-drift}

Table~\ref{tab:size} compares models of three sizes, restricted to
combinations in which all three sizes pass the accuracy threshold. The
fraction of predicted classes changed by padding shows no consistent size
trend (25.0\%, 18.4\%, 26.4\% for small, medium, and large; the
large-minus-small interval crosses zero). Score movement under unchanged
predictions increases with model size. Among examples whose predicted class does
not change, the average total variation distance rises from 0.0716 to
0.0880 to 0.1441, and the largest score change rises from 5.09 to 10.06 to
20.84. The paired large-minus-small differences are 0.0725 (95\% CI 0.0039
to 0.1386) and 15.75 (9.27 to 23.41). Larger models therefore do not become
more stable under padding. Their scores move at least as much under equally
stable labels. Because size, width, depth, and training quality change
together, these are descriptive associations, not a causal effect of size.

\begin{table}[t]
\centering
\scriptsize
\caption{Comparison across model sizes, restricted to models of comparable
accuracy. Padding rows use the 16 GPT-2 model, dataset, and seed
combinations for which all three sizes pass the accuracy threshold; the
remaining rows use 40 matched combinations across all four families. Total
variation (TV) is computed only over examples whose predicted class does
not change. Paired uncertainty for the principal contrasts is reported in
the text.}
\label{tab:size}
\setlength{\tabcolsep}{3pt}
\begin{tabular}{@{}llrrr@{}}
\toprule
Axis & Metric & Small & Medium & Large \\
\midrule
Padding & Flip & 0.2505 & 0.1840 & 0.2642 \\
Padding & TV (no flip) & 0.0716 & 0.0880 & 0.1441 \\
Padding & $\lVert\Delta z\rVert_\infty$ (no flip) &
5.09 & 10.06 & 20.84 \\
\midrule
Precision & Flip & 0.0368 & 0.0103 & 0.00568 \\
Deploy. profile & Flip & 0.0330 & 0.00879 & 0.00666 \\
Batch & Flip & $7.5{\times}10^{-6}$ & $1.09{\times}10^{-5}$ &
$3.56{\times}10^{-4}$ \\
Batch & TV (no flip) & $2.26{\times}10^{-6}$ & $4.0{\times}10^{-6}$ &
$2.32{\times}10^{-4}$ \\
\bottomrule
\end{tabular}
\end{table}

\subsection{Batch size moves scores even where predicted classes hold}
\label{sec:batch-scores}

Class changes from batch size remain rare at every model size, but score
movement among examples whose predicted class does not change grows. Among examples whose predicted class does not
change, the average total variation distance rises from $2.26\times10^{-6}$
in the small models to $2.32\times10^{-4}$ in the large ones, a paired
increase of $2.29\times10^{-4}$ (95\% CI $1.15\times10^{-4}$ to
$3.71\times10^{-4}$). These comparisons pool more than one numerical
precision, so they do not measure a pure fp32 trend. They show that a
tolerance on predicted classes and a tolerance on scores are separate
requirements.

\subsection{Matching the position rule between training and inference}
\label{sec:training-consistency}

When a model trained with column positions is served with order positions,
the accuracy cost reaches 9.88 points, because that model receives position
inputs it was never trained on. When each position rule is instead
used consistently in both training and inference, across three seeds and
five datasets, the accuracy differences between the two rules are $-0.15$
and $+0.32$ percentage points at the family level, with conservative 95\%
intervals inside a $\pm2$ percentage-point band. For a learned absolute
embedding, order positions under left padding produce the same position
representation as right padding, so this consistently trained
configuration is the right-padded baseline. Order positions
additionally allow each batch to be padded only to its own longest sequence
rather than to a fixed maximum width. A pilot on one seed processes 73.5\%
fewer padded token slots. Its latency ratios are measurements of our
evaluation setup, not production speedups.

\subsection{Padding across architectures: further detail}
\label{sec:app-arch}

\begin{table}[t]
\centering
\small
\caption{Padding-induced class changes by position mechanism and padding
side. Changes occur only where padding moves real tokens that carry a
learned embedding for each absolute position.}
\label{tab:architecture}
\begin{tabular}{lll}
\toprule
 & Learned absolute & Rotary (RoPE) \\
\midrule
Left-padded & GPT-2: 35{,}591 / 176{,}778 (20.1\%) & Qwen3: 0 / 29{,}463 \\
Right-padded & BERT, MLM: 0 / 176{,}778 & ModernBERT: 0 / 29{,}463 \\
\bottomrule
\end{tabular}
\end{table}

Fine-tuning public pretrained GPT-2 models reproduces the effect at larger
magnitude. The mean class-change rate across its 15 dataset and seed
combinations is 40.015\% (95\% confidence interval 20.620 to 53.790\% from
a hierarchical bootstrap), or 11,780 of 29,463 individual evaluations. All
15 of these models pass the accuracy threshold and predict more than one
class, although the SST-5 models are weak.

\paragraph{Zero disagreement is not sufficient.}
One correction leaves training unchanged and, only at
inference, assigns each real token a position index equal to its order among
the real tokens, which no longer depends on the padded width. Under this
change the padding effect nearly vanishes (one changed class in 29,463
evaluations), but five of the fifteen models begin predicting the same
class for every input. A model with a constant prediction is perfectly
stable and uninformative, so its zero disagreement has no evidential weight.
We therefore use this experiment only as further evidence that pretrained
GPT-2 fine-tunes are vulnerable to padding. Any proposed correction must
pass three checks at once. The effect must disappear, the predictions must
remain accurate and varied, and a deliberately vulnerable control must
still show the effect, which confirms that the measurement could have
detected it.

A classifier trained from initialization with no position encoding at all
provides the boundary case. It reaches accuracy 0.747, its padded width
changes on all 256 evaluated examples, and none of its predicted classes
change, with residual score movement only at the fp32 numerical scale (at
most $7.6\times10^{-6}$ in the logits). Removing positional encoding
removes the structural mechanism entirely, while the numerical effect of
execution shape remains.

\subsection{Position-assignment mechanism: further detail}
\label{sec:app-position}

Condition (i) sorts the known position-encoding schemes. It is satisfied by
positions assigned by order among real tokens (which do not move at all),
and by every scheme whose positional term depends only on the difference
$j - i$: rotary embeddings, ALiBi's linear distance bias, T5's bucketed
relative bias, Shaw's clipped relative representations, and Transformer-XL
relative attention. A model with no position term satisfies it vacuously.
It is violated by learned absolute embeddings indexed by tensor column, and
sinusoidal absolute encodings offer no general end-to-end cancellation once
the encoding is mixed into content by learned projections. With correct masking and a correct output position, whether padding changes the
represented input is decided by the position rule, before any numerical
consideration, and a serving system that
wants padding freedom must either use a difference-based rule or assign
positions by order among real tokens. Of these schemes, we
directly measure learned absolute (affected), order positions, rotary
(Qwen3 and ModernBERT), and a native no-position model (all unaffected).
The ALiBi, T5, Shaw, Transformer-XL, and sinusoidal rows are
literature-supported mechanism boundaries, not measurements from our experiments.

The intervention is plotted in Figure~\ref{fig:intervention}
(Appendix~\ref{app:results}). On conventionally trained GPT-2 models,
switching inference to order positions
reduces family-mean class-change rates from 16.25\% and 17.37\% to zero
observed. We then train models with order positions and repeat the
comparison across three seeds, two GPT-2 families, and five datasets.
Served with the same order positions they were trained with, these models
show 0 changed classes in 25,986 comparisons in which the padded width
changes. Deliberately reassigning column positions to the same models
restores the effect. 5,769 of 25,986 classes change, with every family and
seed average between 17.4\% and 26.9\%. This deliberate restoration rules
out a false negative. A broken measurement that made the two padding
conditions identical could not reproduce the effect on demand.

We reproduce the same removal and restoration on CPU. Under column positions,
23.4\% of predicted classes change on SST-2 and 7.8\% on AG News when the
padded width changes, and under order positions none change on either
dataset. The effect is therefore a property of the position rule, not of
one device configuration.

\subsection{Further detail on the numerical mechanism}
\label{sec:app-numerical}

The bf16 label changes concentrate at small decision margins
(Figure~\ref{fig:margin}). The change rate falls monotonically from 30.6\%
below margin 0.01 to 0.18\% at margin 0.25 or greater, a factor of about
167. Margin preserves the label under numerical perturbations, but it is
not a safety certificate for scores. The single largest probability movement
(0.567) occurs in the widest margin group, where no label changes. Label
count converts drift into flips through the same margin channel
(Table~\ref{tab:labelcount}). On five-class SST-5 the same bf16 batch
intervention flips 6.2 to 7.8\% of fully generative predictions at roughly
half the logit drift of binary SST-2 (about 3.1\%). With more classes the
probability mass is divided among more alternatives, so the top-two margin
is smaller on average and the same logit movement crosses the decision
boundary for a larger fraction of inputs.

\begin{table}[t]
\centering
\small
\caption{Label changes and logit drift for the fully generative GPT-2
classifier under the same bf16 batch-shape change, by number of classes
(64 targets per cell).}
\label{tab:labelcount}
\begin{tabular}{lcc}
\toprule
Dataset & Label changes & Max.\ logit drift \\
\midrule
SST-2 (2 classes) & 2 of 64 (3.1\%) & about 0.45 \\
SST-5 (5 classes) & 4 to 5 of 64 (6.2 to 7.8\%) & 0.21 to 0.27 \\
\bottomrule
\end{tabular}
\end{table}

A prediction can only change if its confidence gap is
at most twice the probability movement the serving change produces. Label
changes are therefore confined to examples whose top-two margin is within
that bound, which for the small numerical perturbations measured here means
nearly tied predictions, and a wide margin preserves the label without
limiting how far the scores may move. All 375 observed bf16 batch-induced label
changes satisfied the necessary condition (recall 100\%, precision 56.5\%),
whereas a margin threshold alone is a weak predictor (95\% recall requires
flagging every margin below 0.46). The certificate is one-sided. It bounds
labels, never scores.

\subsection{Quantization lets other examples in the batch change a prediction}
\label{sec:quantization}

We call the one example whose prediction we track the target. Under floating
point, the other examples in its batch cannot reach the target's prediction.
At fixed batch shape, their content leaves the target's output unchanged
bit-for-bit (1{,}080 GPU
composition pairs and all CPU cells exactly zero), because no operation
mixes one example's rows into another's and the kernel is chosen by tensor
shape, not by the values in it. Dynamic int8 quantization breaks this, because the
scale that converts activations to integers is computed from the whole batch
tensor, so the values of the other examples in the batch enter the target's own
computation
(Table~\ref{tab:int8}). Replacing the runtime scale with a fixed
offline-calibrated scale restores bitwise equality, and a fake-quantization
control that varies only the scale source reproduces the effect, isolating
the mechanism.

\begin{table}[t]
\centering
\caption{Effect of changing the other examples in the target's batch at
fixed batch shape (CPU, static width 128, 64 targets, SST-2 and AG News,
threads 1 and 32). Only the batch-global dynamic activation scale lets those
other examples reach the target.}
\label{tab:int8}
\small
\begin{tabular}{lcccc}
\toprule
 & fp32 & dynamic int8 & dynamic fake-quant & static int8 \\
\midrule
BERT max $|\Delta|$ logit & 0 (bitwise) & 0.09 & 0.015 to 0.066 & 0 (bitwise) \\
Gen.\ GPT-2 max $|\Delta|$ logit & 0 (bitwise) & 0.774 & 0.091 to 0.307 & 0 (bitwise) \\
Max TV & 0 & 0.124 & 0.092 & 0 \\
Label changes & 0 & up to 2 / 64 & up to 2 / 64 & 0 / 64 \\
Target slot (fixed shape) & bitwise & bitwise & --- & bitwise \\
\bottomrule
\end{tabular}
\end{table}

\begin{cprop}{Batch-global scales couple co-batched inputs; fixed scales do not}{int8}
Let $Q_s(a) = s\,\operatorname{clip}(\operatorname{round}(a/s), -M, M)$ be
the dequantized activation quantizer with scale $s$. (i) If the scale is
computed from the whole batch, $s = s(B)$, and two same-shape batches $B$,
$B'$ with the same target activation $a$ but different other examples
satisfy $s(B) \neq s(B')$ and $Q_{s(B)}(a) \neq Q_{s(B')}(a)$, then the
target's quantized activation depends on the other examples in the batch,
and its logits differ whenever the downstream map separates the two
activations. (ii) If instead the scale is fixed before serving and does not
depend on the other examples, and the target's execution otherwise does not
depend on them and is deterministic at fixed shape, then the other examples
cannot alter the target's output. Proofs:
Propositions~\ref{propx:dynamic-scale} and~\ref{propx:fixed-scale} in
Appendix~\ref{app:theory}.
\end{cprop}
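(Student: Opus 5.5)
Both parts follow by composing maps. I would first model the target's forward pass as a composition $F = G \circ Q_s \circ H$. Here $H$ maps the target's row of the input tensor to its pre-quantization activation $a$, $Q_s$ is the dequantized quantizer with scale $s$, and $G$ maps the quantized activation to the target's logits. If several quantized layers are present, I apply the argument at the first layer where the two batches' scales differ and treat everything after it as $G$. Under floating point, and at fixed batch shape, the target's row never mixes with other rows, as argued in Section~\ref{sec:quantization}. So $H$ depends only on the target's own row and the shape. Since $B$ and $B'$ have the same shape and the same target row, both produce the same $a$. This makes the scale the only channel through which the other examples can enter.

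For part (i), I would argue directly. The target's quantized activation is $Q_{s(B)}(a)$ under $B$ and $Q_{s(B')}(a)$ under $B'$, and these differ by hypothesis. Both activations depend on the batch only through $s(\cdot)$. Hence the quantized activation is a nonconstant function of the other examples, which is the first claim. For the logits, "the downstream map separates the two activations" means $G(Q_{s(B)}(a)) \neq G(Q_{s(B')}(a))$. The logits therefore differ by definition.

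The one point to state carefully is that $G$ itself must not depend on $B$ except through the quantized activation. This is the batch-global-scale assumption, under which later layers either use a fixed scale or are handled by applying the same argument recursively. The step needs no calculation. Its role is to show that the hypotheses are exactly what is needed. I would also note that $s(B) \neq s(B')$ alone is not sufficient, since rounding can absorb a small change of scale. That is why the statement also assumes $Q_{s(B)}(a) \neq Q_{s(B')}(a)$.

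For part (ii), I would use induction over the layers of the forward pass. The claim is that at every layer the target's state is the same function of the target's own input, regardless of the other examples. Take a floating-point op that acts rowwise and is deterministic at fixed shape, such as a matmul, a norm, or attention masked within the example. Each such op sends the same input state to the same output state, because kernel selection depends on shape and not on values. For a quantization op, the scale $s$ is a constant fixed before serving, so $Q_s$ applied to the same state gives the same result. The hypothesis that the target's execution otherwise does not depend on the other examples rules out every remaining cross-row path. By induction, the logits are bitwise equal.

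The main obstacle is to make "does not depend on them" precise enough that the induction is not circular. I would state it as a hypothesis: each op's output row for the target is a deterministic function of the target's input row and the shape. This is the formal content of the bitwise floating-point composition result in Table~\ref{tab:int8}.
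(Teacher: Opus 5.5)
Your proposal is correct and follows the paper's proofs of Propositions~\ref{propx:dynamic-scale} and~\ref{propx:fixed-scale} essentially step for step. Part (i) is the same direct observation: the fixed target value is rounded on two different grids, and the logit claim follows from the separation (injectivity) hypothesis. Part (ii) is the same induction over the target-row computation, with your formalization of ``does not depend on them'' playing the role of the paper's explicit assumptions 3--5 (target-row locality, peer-independent kernel selection and operation order at fixed shape, and determinism).

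One small refinement: state that hypothesis at fixed shape \emph{and} fixed target slot rather than at fixed shape alone. On the GPU the matmul reduction order depends on the row's offset (Section~\ref{sec:app-slot}), and the argument needs only independence from peer values, not from the slot.
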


Under a batch-global dynamic scale, the content of the
requests that share a batch sets the rounding grid applied to the target,
and fixing that grid offline removes the dependence. The result does not
extend to dynamic quantizers whose scale depends only on the target
request. Both directions match the measurements. Dynamic scales
moved the target by up to 0.774 logits with label changes, and fixed scales
were bitwise identical under the identical intervention.

Under dynamic int8 quantization on CPU, changing only the co-batched
examples at a fixed batch shape moves GPT-2 labels, and the effect grows
with the batch. The GPT-2 change rate rises from 8 of 384 comparisons
(2.08\%) at batch size 8 to 10 of 384 (2.60\%) at batch size 64 and
14 of 384 (3.65\%) at batch size 256, with the largest absolute logit
movement of 1.32 occurring at batch size 256. Each co-batched example
contributes to the batch-global activation scale, so a larger batch gives
the other examples more ways to change the quantization step applied to the
target. BERT shows almost no movement under the same change, with at most 1
change out of 192 comparisons in any single cell.

At the reference request context of batch size 1 with static padding there
is no batching and no other example in the batch, so switching the int8
pipeline from static to dynamic activation scales changes only the scale
computation. Under this change GPT-2 labels change on 4 of 192 targets
(2.08\%), all four in the small checkpoint (4 of 64), with a maximum
absolute logit movement of 0.839, while BERT changes on 0 of 192 targets
(Figure~\ref{fig:v2deploy}). Because no other example is present, none of
this movement can come from batch composition. It measures the scale policy
itself, separately from the coupling above.

\subsection{The target slot and the matrix-multiplication reduction order}
\label{sec:app-slot}

Moving the target to a different row at a fixed batch shape leaves the output unchanged in exact
arithmetic. A linear layer computes each output coordinate as a sum over the
hidden dimension, $y_n = \sum_k a_k W_{k,n}$, so the target row depends only on
its own activations. A batched matrix multiplication folds the batch and the
sequence into a single dimension $M$ of size $\text{batch}\times\text{sequence}$
and tiles the $M \times N$ output into blocks, partitioning the reduction over
the hidden dimension across those blocks. Which tile a row occupies, its
position inside that tile, and the partition boundary it falls on are all
functions of the row's offset in $M$, so moving the target to a different slot
changes the order in which its partial sums are accumulated. Floating-point
addition is not associative, so the rounded logits differ by about $10^{-5}$
for the fully generative GPT-2 classifier and about $10^{-6}$ for BERT, and the
gap grows mildly with batch size as a larger $M$ admits more tiles. The integer
CPU path uses one fixed reduction order for a given problem shape regardless of
the offset, so changing the target slot leaves the output bitwise identical there. An operator census of the forward
pass finds the linear-layer matrix multiplication is the only computation whose
reduction order depends on this offset. Attention, normalization, and the
softmax are computed per token and do not. The effect never crosses a decision
boundary, and the batch-invariant matrix-multiplication kernels of
Section~\ref{sec:mitigations}, which fix the reduction order independently of
$M$, remove it.

\subsection{Autoregressive feedback after the first token disagreement}
\label{sec:accumulation}

The batch-shape perturbation above never flips a high-margin classification
label, yet the generation literature reports large output changes from the
same root cause. The cause is autoregressive feedback, not the per-step
perturbation. Under greedy decoding with full recomputation and positions assigned by order
among the real tokens, with batch size 1 versus 32 as the only difference
between contexts, bf16 flips a greedy token within 4 to 14 steps in 3 of 3
margin-selected prompts, after which the trajectories separate (91 to 119 of
128 tokens differ). By contrast, fp32 diverges on none of 3 within 128 steps
(Table~\ref{tab:accum}). 
\begin{table}[t]
\centering
\caption{Accumulation experiment: greedy decoding under two batch shapes (GPT-2,
128 steps, 12 of 12 same-context floor checks bitwise).}
\label{tab:accum}
\small
\setlength{\tabcolsep}{4pt}
\begin{tabular}{lrrrr}
\toprule
Prompt (fp32 margin) & fp32 divergence step & bf16 divergence step & TV at divergence & mismatches at 128 \\
\midrule
Low (0.017) & none & 14 & 0.122 & 91 \\
Medium (0.579) & none & 4 & 0.094 & 119 \\
High (2.200) & none & 9 & 0.123 & 116 \\
\bottomrule
\end{tabular}
\end{table}

One classification forward reads a single, usually
high-margin argmax and never feeds its prediction back into the model. Greedy generation conditions every
later step on the decoded prefix, so the first low-margin step at which
the two contexts select different tokens changes all subsequent inputs. Initial
margin did not order the divergence step, so margin at the prompt is not a
schedule for when a generation diverges.

\subsection{Mitigation costs and backend comparisons}

Each measured mechanism admits a mitigation aimed at exactly that mechanism
(Table~\ref{tab:mitigations}). None of these mitigations covers the other mechanisms.
Order-based position assignment, which does not depend on the padded
width, eliminates the structural mechanism
and simultaneously permits dynamic padding, which processed $0.265\times$ as
many padded tokens for $2.04\times$ to $2.51\times$ measured target
throughput in our evaluation setup. Batch-invariant matrix-multiplication
kernels drive batch-shape drift to bitwise equality (64 of 64 targets) at
fp32 and bf16 for both formulations, removing the two bf16 label flips.
Their precision-matched end-to-end cost is roughly neutral at batch size 1 (a
dispatch-bound eager-mode effect) and about $1.3\times$ to $1.5\times$ at large
batch in true fp32, up to $1.45\times$ TF32-matched and $1.40\times$ at bf16
(Figure~\ref{fig:invkernel}). They cover the
linear-layer GEMMs plus mean and log-softmax, and adopting them changes the
computed values. Moreover, the released library computes fp32 matrix multiplications at
reduced tensor-core precision, so a comparison against a
full-precision standard baseline is not matched in precision unless the kernel is patched. Standard
versus invariant at a fixed batch size is never bitwise and moves up to 0.219
logits with 2 of 64 flips at bf16 on the generative head. Static calibration removes this dependence between co-batched inputs
(Table~\ref{tab:int8}). For residual numerical flips, a verifier that
recomputes only the small-margin predictions in a canonical context detects
95.2, 99.2, and 100\% of the 375 observed flips at 7.1, 10.8, and 13.8\%
recompute rates when thresholds are fitted on the same comparisons; with
thresholds fixed on a disjoint calibration split, the label policy
recomputes 22.45\% of held-out requests and detects all 44 observed
disagreements, while a fixed 0.8 threshold policy leaves 3 of 21 action
disagreements undetected (Section~\ref{sec:app-replay}). Backend changes
are label-stable but not score-stable in our cells. Eager versus
torch.compile moves logits by up to $3.1\times10^{-5}$ with 0 of 64
changes, PyTorch versus ONNX Runtime by up to $8.4\times10^{-7}$ with 0 of
64, CUDA Graph replay is bitwise at each captured shape while distinct
shapes retain ordinary batch drift, and semantic-position-preserving width
buckets remove the structural mechanism but keep numerical shape dependence
(bf16 generative max TV 0.044, 0 of 64 flips).

\begin{figure}[t]
\centering
\includegraphics[width=0.95\linewidth]{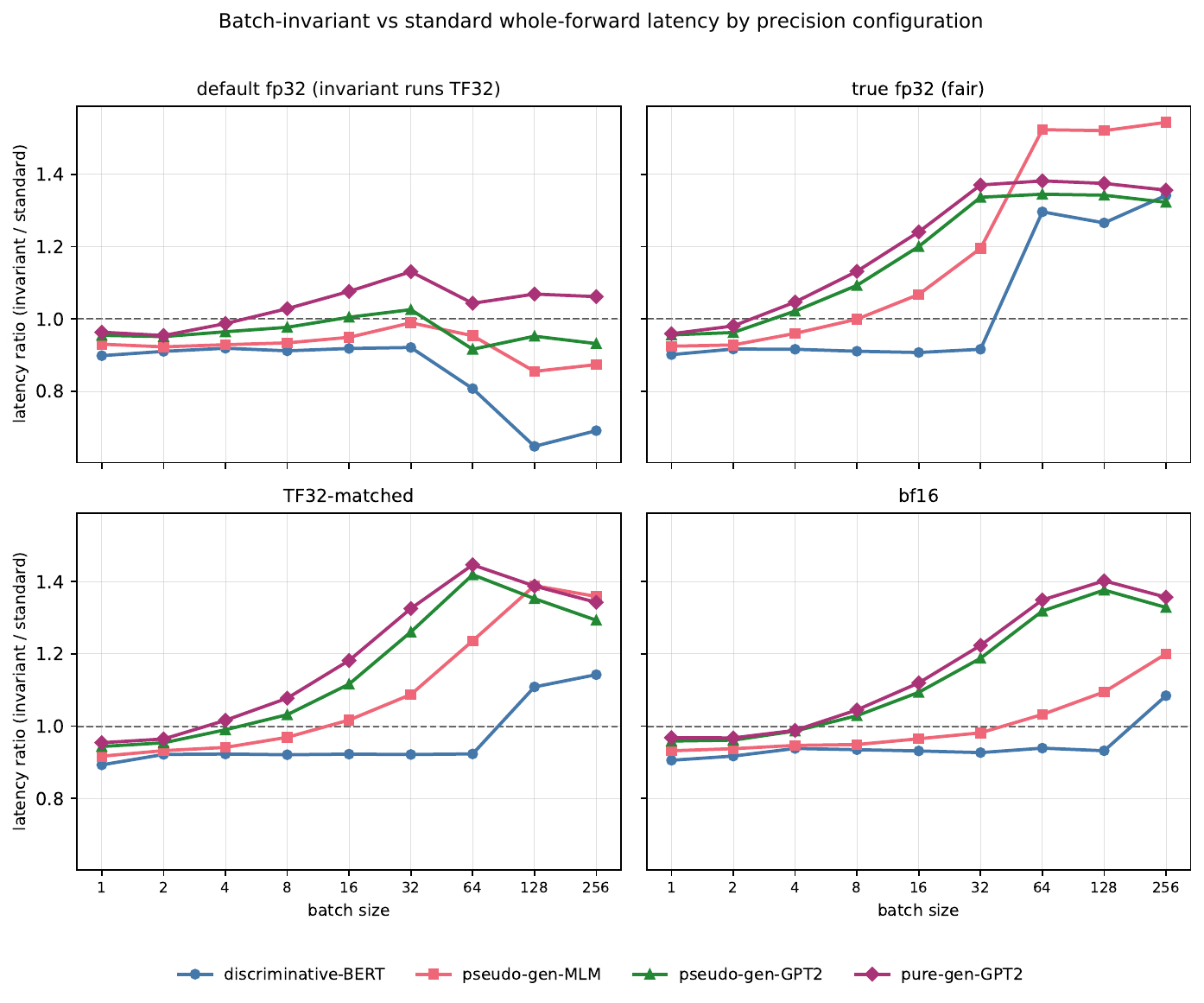}
\caption{Latency of batch-invariant matrix-multiplication kernels, as the ratio
of the batch-invariant to the standard whole-forward median, by batch size, for
four classifier formulations. A ratio below one is faster. Batch invariance is
roughly neutral at batch size 1 (a dispatch-bound effect under eager execution)
and costs about 1.3 to 1.5 times at large batch under a precision-matched
comparison (true fp32, TF32-matched, or bf16); the default fp32 comparison, in which
the invariant kernel runs at reduced tensor-core precision against a
full-precision baseline, is the only one that appears faster at large batch.}
\label{fig:invkernel}
\end{figure}

\subsection{Request-level factors by deployment}
\label{sec:app-v1full}

The main-text figures summarize the request-level factors on the
checkpoints that pass the accuracy threshold for which every request-level cell completed.
Figure~\ref{fig:app-knobs} compares batch sizes and padded widths for
each of the three deployments, and Figure~\ref{fig:app-within} gives the
within-batch rearrangements (target slot, order of the other examples, and
which examples share the batch) for each deployment. The dynamic-padding
versions of the within-batch figures are identical by construction, because the
padded width is held fixed under these rearrangements, and are omitted.

\begin{figure}[t]
\centering
\begin{subfigure}{0.32\linewidth}\centering
\includegraphics[width=\linewidth]{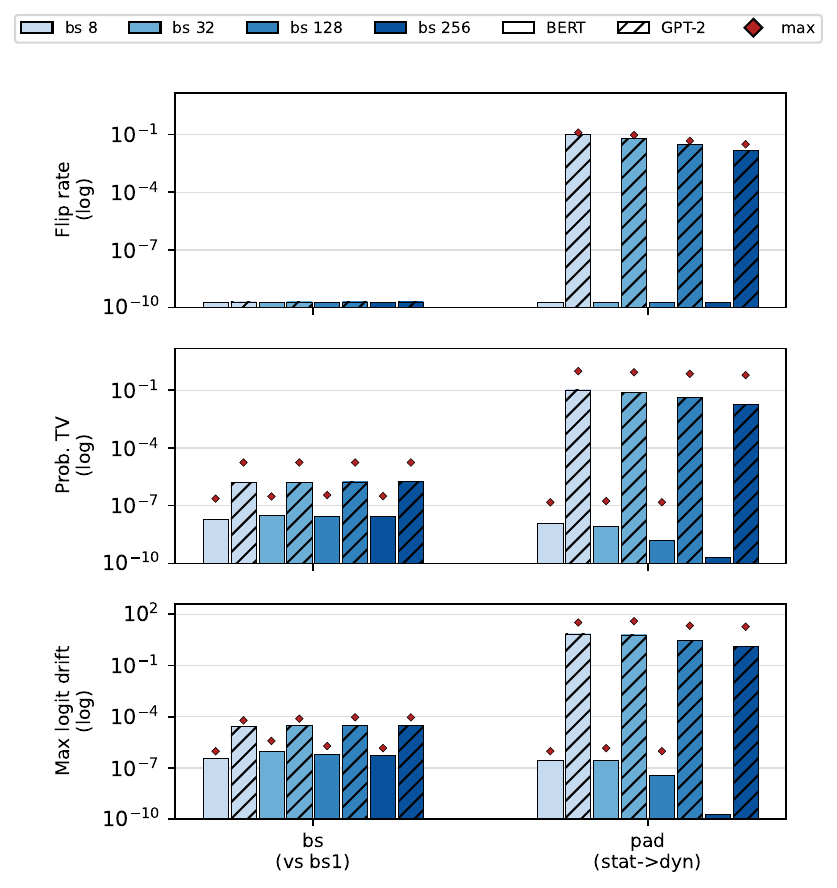}
\caption{full precision, GPU}\end{subfigure}\hfill
\begin{subfigure}{0.32\linewidth}\centering
\includegraphics[width=\linewidth]{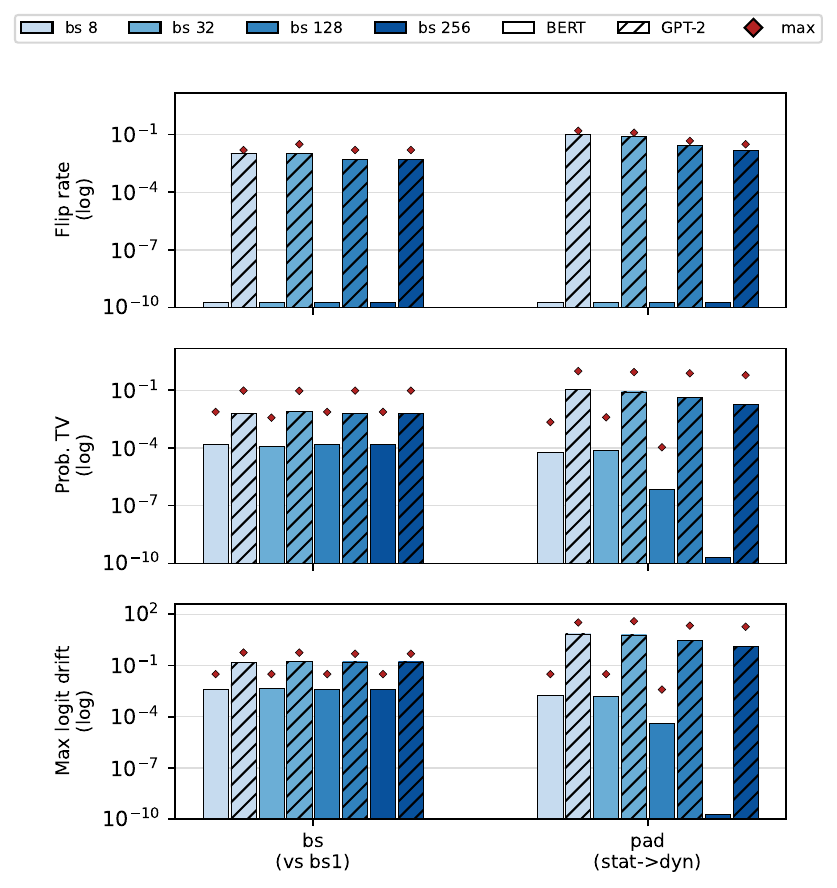}
\caption{bf16, GPU}\end{subfigure}\hfill
\begin{subfigure}{0.32\linewidth}\centering
\includegraphics[width=\linewidth]{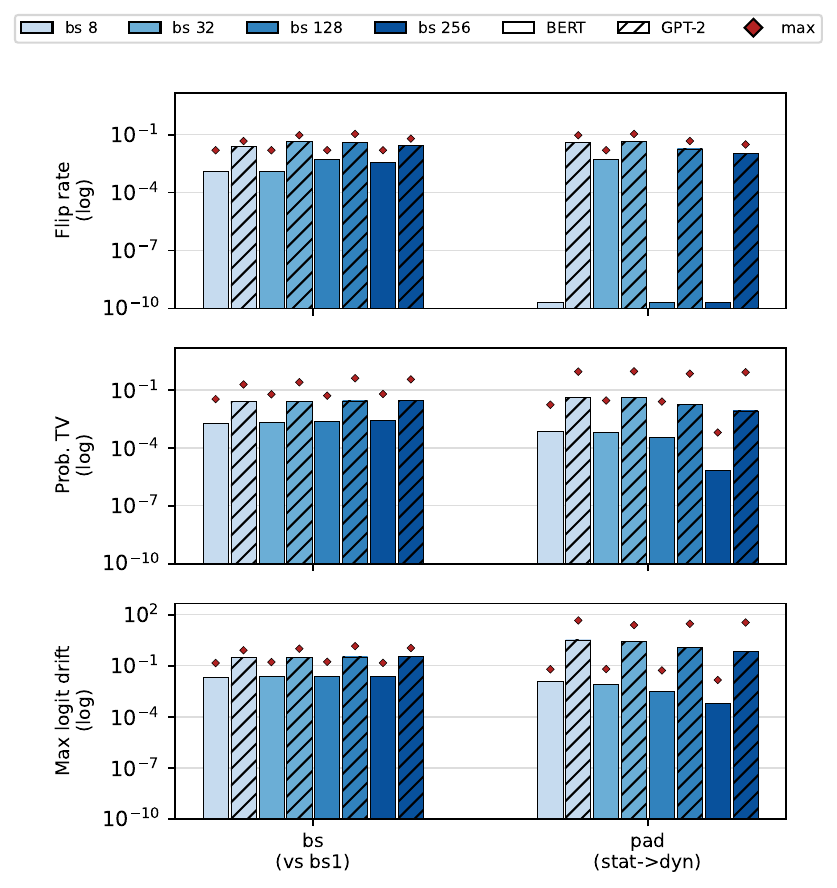}
\caption{8-bit integer, CPU}\end{subfigure}
\caption{Batch size (against the single-example reference) and padded width
(static to dynamic) by deployment; batch size is shown as bar shade. Rows are
label-change rate, probability total variation, and maximum logit change; BERT
solid, fully generative GPT-2 hatched.}
\label{fig:app-knobs}
\end{figure}

\begin{figure}[t]
\centering
\begin{subfigure}{0.32\linewidth}\centering
\includegraphics[width=\linewidth]{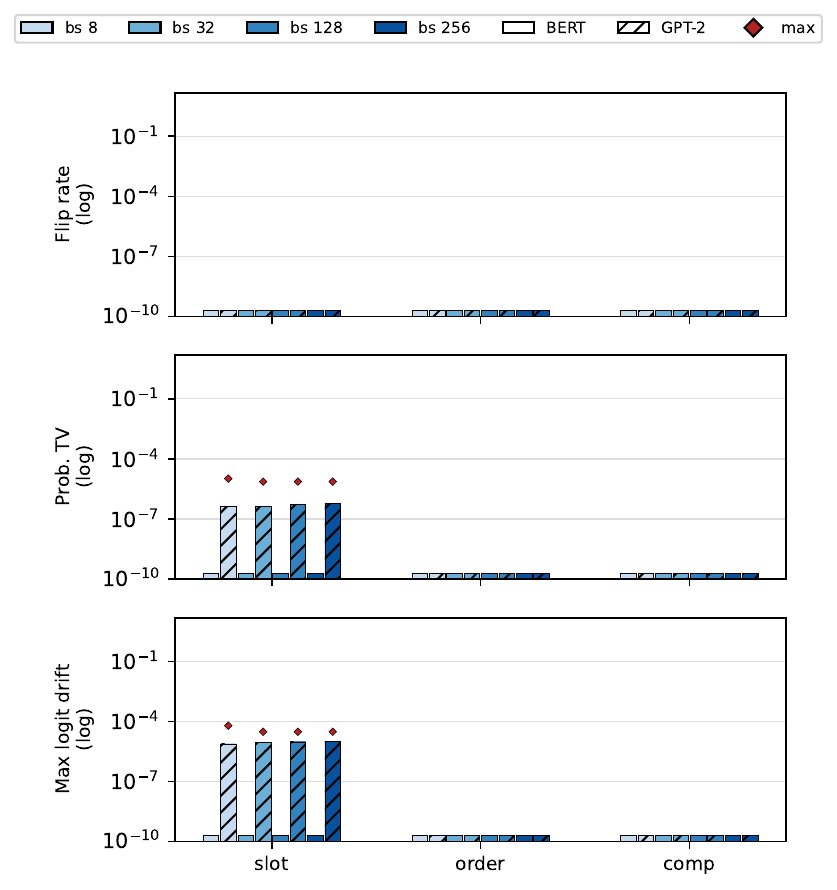}
\caption{full precision, GPU}\end{subfigure}\hfill
\begin{subfigure}{0.32\linewidth}\centering
\includegraphics[width=\linewidth]{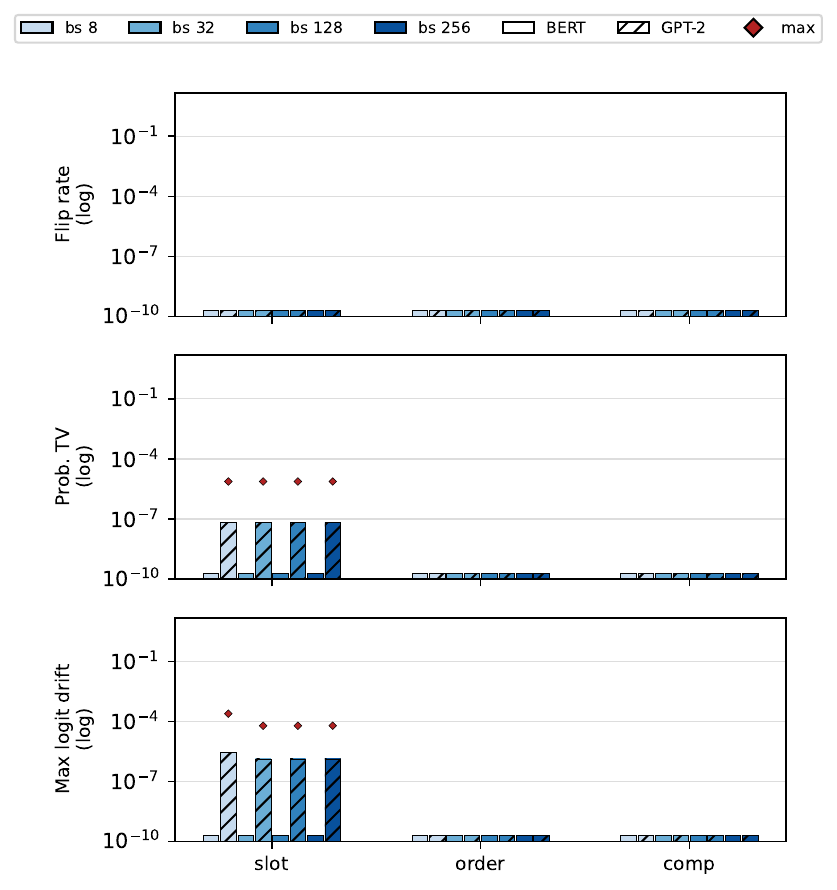}
\caption{bf16, GPU}\end{subfigure}\hfill
\begin{subfigure}{0.32\linewidth}\centering
\includegraphics[width=\linewidth]{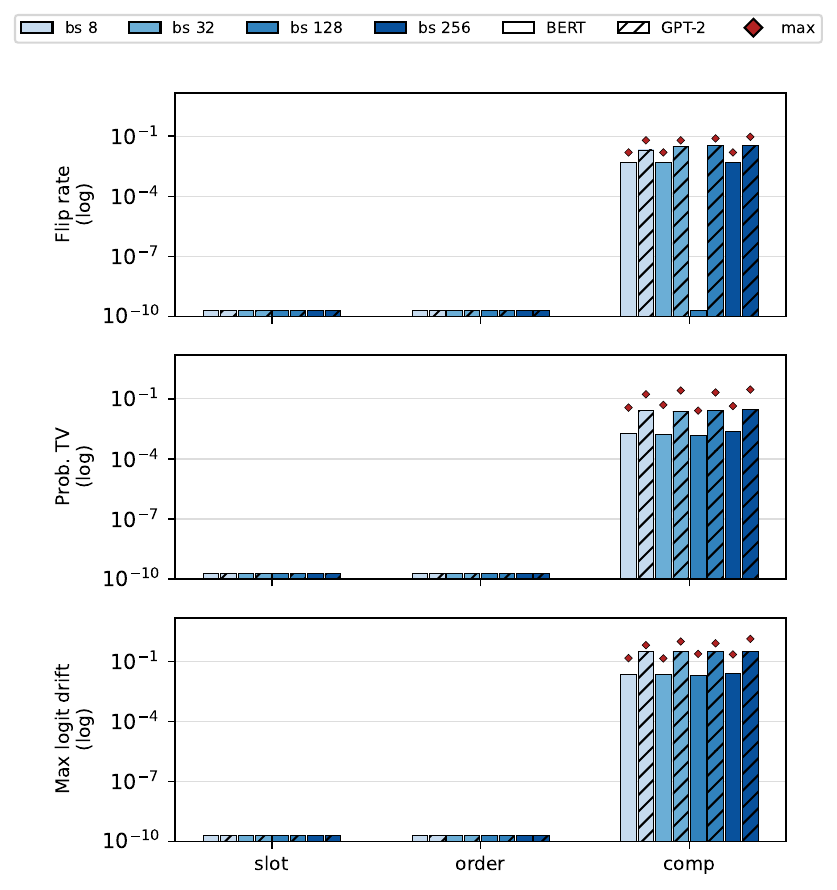}
\caption{8-bit integer, CPU}\end{subfigure}
\caption{Within-batch rearrangements at a fixed batch shape (target slot, order
of the other examples, composition of the other examples) by deployment; batch
size shown as bar shade. Reordering is bitwise at every precision; composition
is bitwise at full precision and bf16 and moves only under dynamic 8-bit
integer quantization; the target slot is bitwise on the integer CPU path but
shows a small nonzero score movement at full precision and bf16 (a
matrix-tiling effect, with no label change).}
\label{fig:app-within}
\end{figure}

\subsection{Serving-context effects by model size}
\label{sec:app-size}

Table~\ref{tab:size} reports the size trends numerically. Figures
\ref{fig:app-knobs-size} and~\ref{fig:app-within-size} give the same
comparisons split by model size (shown as bar shade), restricted to
combinations in which all three sizes pass the accuracy threshold. The score
movement under unchanged labels grows with size while the label-change rate does
not. The plots describe an association with nominal model size, in which
depth, width, head count, and training quality vary together.

\begin{figure}[t]
\centering
\begin{subfigure}{0.32\linewidth}\centering
\includegraphics[width=\linewidth]{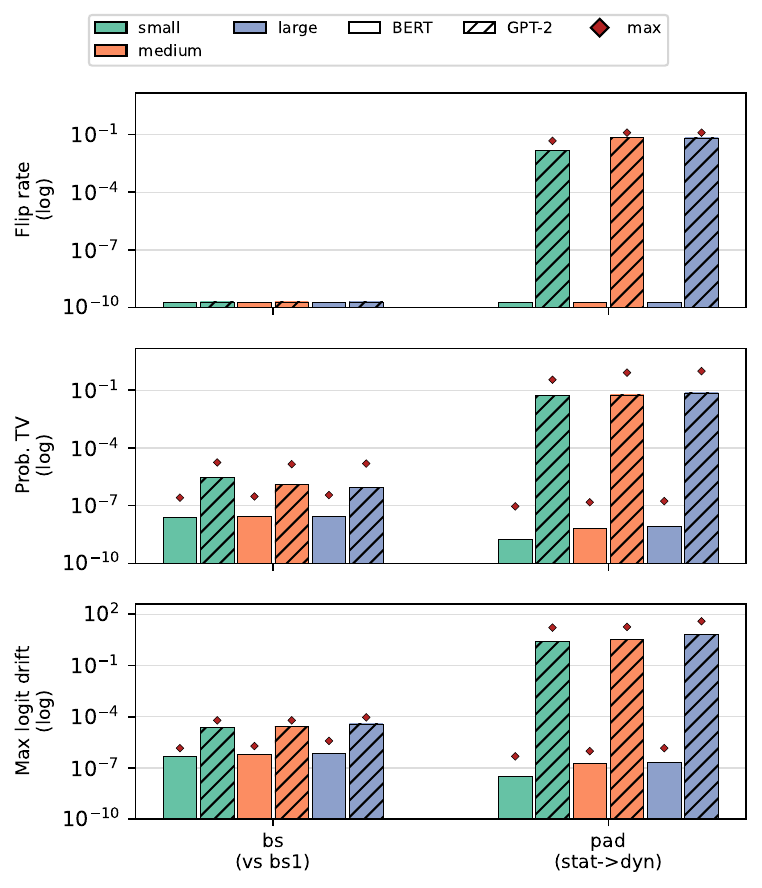}
\caption{full precision, GPU}\end{subfigure}\hfill
\begin{subfigure}{0.32\linewidth}\centering
\includegraphics[width=\linewidth]{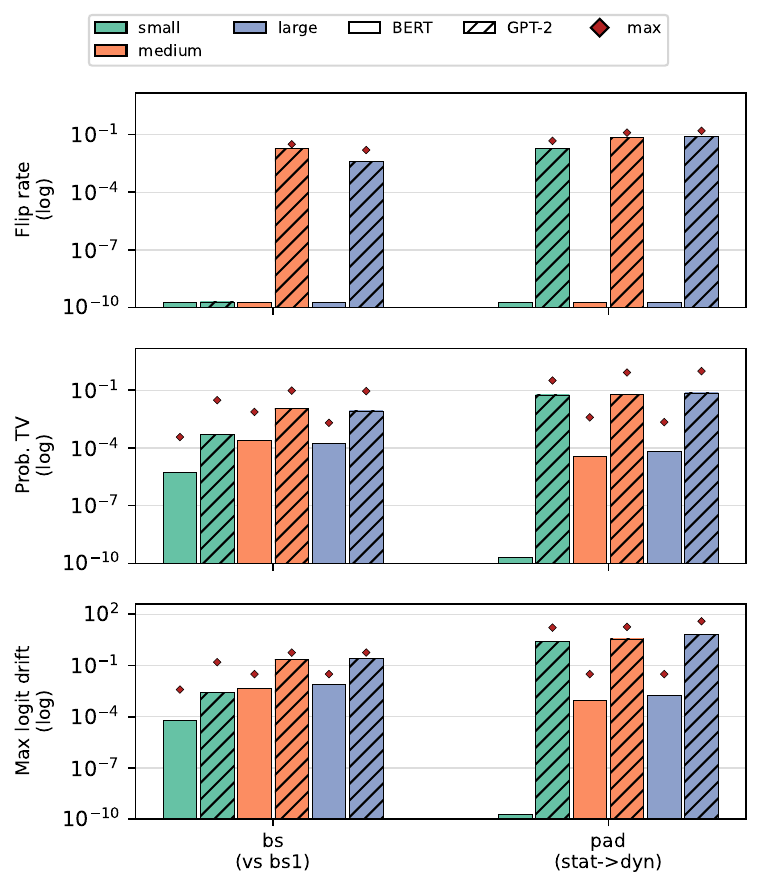}
\caption{bf16, GPU}\end{subfigure}\hfill
\begin{subfigure}{0.32\linewidth}\centering
\includegraphics[width=\linewidth]{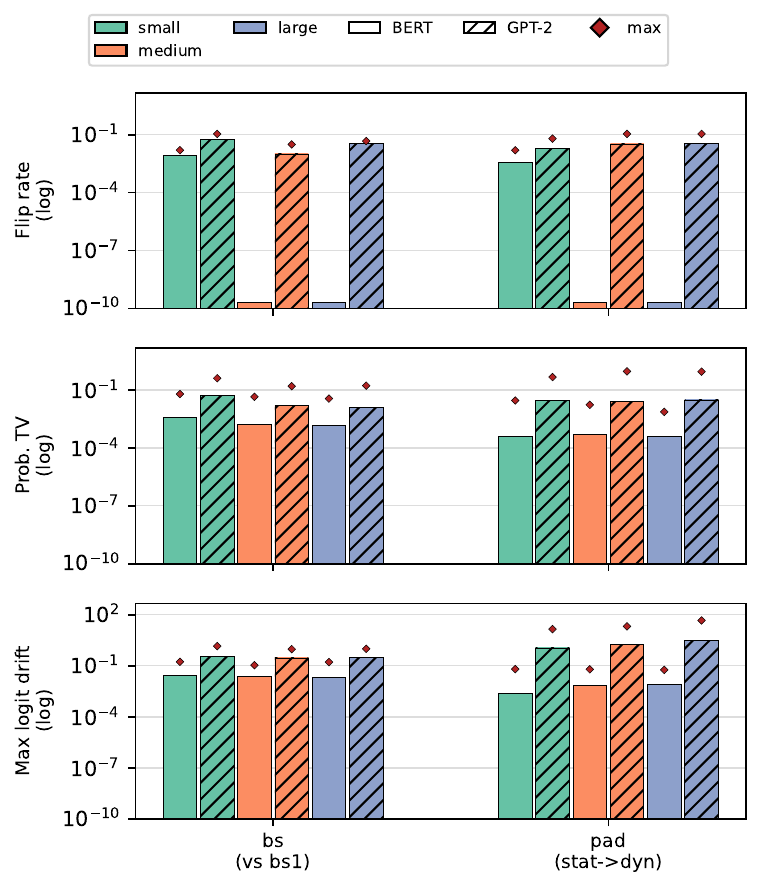}
\caption{8-bit integer, CPU}\end{subfigure}
\caption{Batch size and padded width by deployment and model size (small,
medium, large shown as bar shade), pooled over batch size.}
\label{fig:app-knobs-size}
\end{figure}

\begin{figure}[t]
\centering
\begin{subfigure}{0.32\linewidth}\centering
\includegraphics[width=\linewidth]{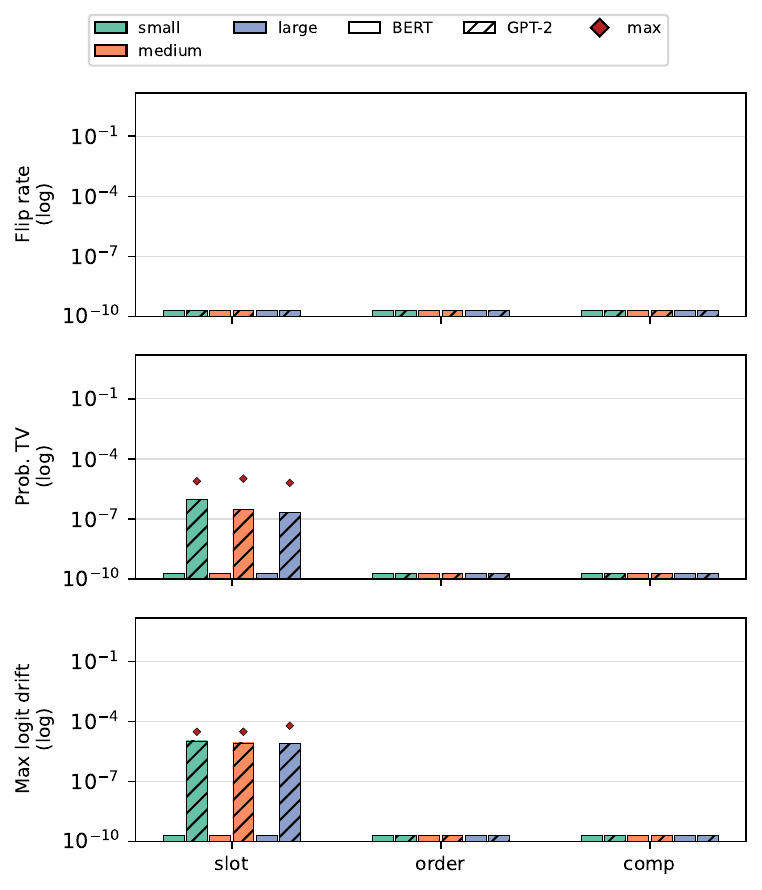}
\caption{full precision, GPU}\end{subfigure}\hfill
\begin{subfigure}{0.32\linewidth}\centering
\includegraphics[width=\linewidth]{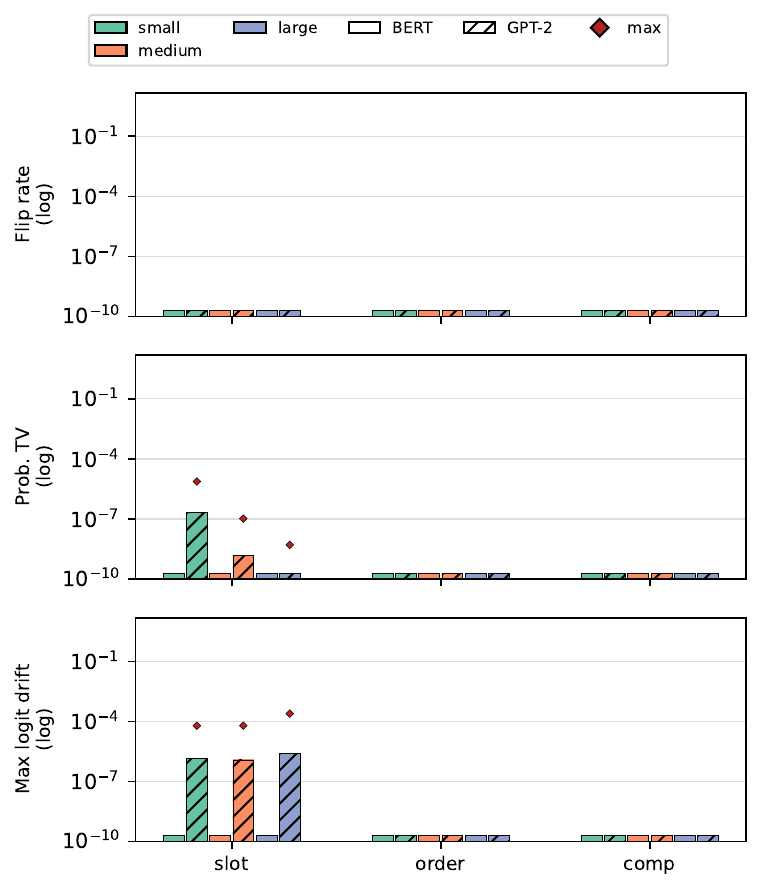}
\caption{bf16, GPU}\end{subfigure}\hfill
\begin{subfigure}{0.32\linewidth}\centering
\includegraphics[width=\linewidth]{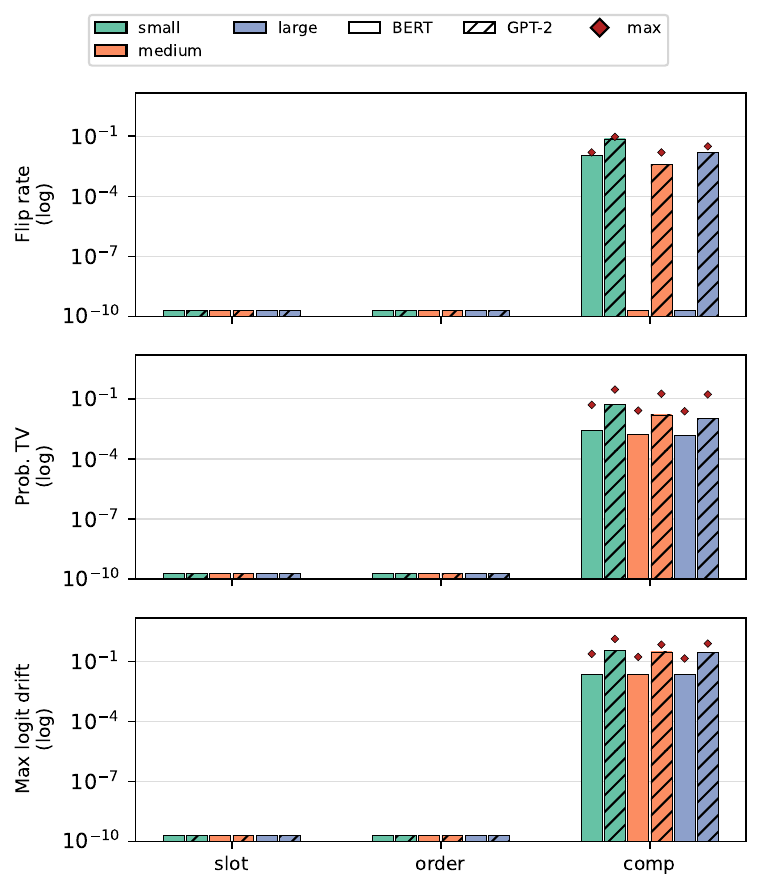}
\caption{8-bit integer, CPU}\end{subfigure}
\caption{Within-batch rearrangements by deployment and model size (small,
medium, large shown as bar shade), pooled over batch size.}
\label{fig:app-within-size}
\end{figure}

\subsection{The padding effect shrinks as the batch grows}
\label{sec:app-padding-batch}

Dynamic padding pads each batch to its longest member, while static padding
pads every batch to the fixed cap. As the batch grows, the longest member
approaches the cap, so the two policies produce increasingly similar padded
widths and the disagreement between them shrinks
(Table~\ref{tab:padding-batch}).

\begin{table}[t]
\centering
\small
\caption{GPT-2 label changes from dynamic versus static padding, by
numerical format and batch size. BERT shows zero changes at every GPU cell
and at most 1 of 192 at a single int8 cell.}
\label{tab:padding-batch}
\begin{tabular}{lcccc}
\toprule
 & \multicolumn{4}{c}{Batch size} \\
\cmidrule(lr){2-5}
Format & 8 & 32 & 128 & 256 \\
\midrule
fp32 (GPU)          & 19/192 (9.90\%)  & 12/192 (6.25\%) & 6/192 (3.13\%) & 3/192 (1.56\%) \\
bf16 (GPU)          & 20/192 (10.42\%) & 16/192 (8.33\%) & 5/192 (2.60\%) & 3/192 (1.56\%) \\
int8, dynamic (CPU) & 16/192 (8.33\%)  & 18/192 (9.38\%) & 7/192 (3.65\%) & 4/192 (2.08\%) \\
int8, static (CPU)  & 15/192 (7.81\%)  & 16/192 (8.33\%) & 9/192 (4.69\%) & 4/192 (2.08\%) \\
\bottomrule
\end{tabular}
\end{table}

\subsection{Precision comparison including fp16}
\label{sec:app-precision-ladder}

fp16 preserved every fp32 label in the tested cells (0 of 192 for BERT and
0 of 192 for GPT-2), while fp32 to bf16 and bf16 to fp16 each moved 2 of
192 GPT-2 labels (1.04\%), so on these checkpoints the roughly one percent label movement is
associated with bf16 rather than with reduced precision as such
(Table~\ref{tab:precision-ladder}). Formats can also select different
kernels, so the result does not establish general fp16 label invariance. The two
formats allocate the same 16 bits differently, since bf16 carries an 8-bit
exponent with a 7-bit mantissa and fp16 a 5-bit exponent with a 10-bit
mantissa, and the maximum logit movement of the bf16 to fp16 step (0.570)
matches that of fp32 to bf16 (0.564 at GPT-2 large), consistent with bf16
being the format that both fp32 and fp16 diverge from. All moved labels
fall in the lowest baseline-margin bin, and BERT remains label-stable on
every step while still showing nonzero score drift.

\begin{table}[t]
\centering
\small
\caption{Precision pairs on the matched six-checkpoint set, 192 comparisons
per family (three model sizes, 64 targets each). Each cell gives label
changes over the denominator and the maximum absolute logit difference.}
\label{tab:precision-ladder}
\begin{tabular}{lcccc}
\toprule
& \multicolumn{2}{c}{BERT} & \multicolumn{2}{c}{GPT-2} \\
\cmidrule(lr){2-3}\cmidrule(lr){4-5}
Precision change & Changes & Max logit drift & Changes & Max logit drift \\
\midrule
fp32 to fp16 & 0/192 & 0.0031 & 0/192 & 0.0638 \\
fp32 to bf16 & 0/192 & 0.0283 & 2/192 & 0.564 \\
bf16 to fp16 & 0/192 & 0.0273 & 2/192 & 0.570 \\
\bottomrule
\end{tabular}
\end{table}

\subsection{Separating the device from the quantization}
\label{sec:app-device-quant}

Changing only the device moved logits by at most $2.86\times10^{-6}$ for
BERT and $1.80\times10^{-3}$ for GPT-2 with zero label changes in 128
comparisons per family, orders of magnitude below the quantization
arithmetic (Table~\ref{tab:device-quant}). Applying 8-bit quantization
with the device held fixed reproduced the movement on either side, with 1
of 192 changes per family when the device was held at the GPU and 1 of 128
GPT-2 changes under dynamic int8 when held at the CPU, every change falling
in the lowest baseline-margin bin. On these subsets, changing the device
alone produced far smaller movement than applying quantization, which
supports quantization as the main source of the whole-migration effect in
Section~\ref{sec:migration}. The controls use different checkpoint subsets
from the main grid and do not decompose that full-grid number. The
conclusion is bounded to the tested pair, one L40S GPU and one AMD EPYC
host under one PyTorch build.

\begin{table}[t]
\centering
\small
\caption{Decomposing the whole-migration change (GPU fp32 to CPU dynamic
int8) into a device factor and a quantization factor. The GPU-held row uses
a six-checkpoint set (192 comparisons per family); the CPU-held rows use a
separate medium-size checkpoint per family on SST-2 and AG News (128 per
family), so the rows are not pooled. The whole-migration rate is measured
on the main grid with different checkpoints.}
\label{tab:device-quant}
\begin{tabular}{lcccc}
\toprule
& \multicolumn{2}{c}{BERT} & \multicolumn{2}{c}{GPT-2} \\
\cmidrule(lr){2-3}\cmidrule(lr){4-5}
Condition & Changes & Max drift & Changes & Max drift \\
\midrule
Device changed, no quantization & 0/128 & $2.86\times10^{-6}$ & 0/128 & $1.80\times10^{-3}$ \\
Device held at GPU, dynamic int8 & 1/192 & 0.055 & 1/192 & 0.276 \\
Device held at CPU, dynamic int8 & 0/128 & 0.054 & 1/128 & 0.585 \\
Device held at CPU, static int8 & 0/128 & 0.030 & 0/128 & 0.301 \\
\midrule
Device changed and dynamic int8 applied & \multicolumn{4}{c}{1.49\% mean label change over 158 pairs} \\
\bottomrule
\end{tabular}
\end{table}

\subsection{Physical mobile devices}
\label{sec:app-mobile}

All phone runs used ONNX Runtime Mobile (ONNX Runtime 1.23.2 for Android)
with fp32, dynamic int8, and static int8 exports of the same BERT SST-2
classifier, evaluated on 64 balanced targets at batch size 8 and width 128
(Table~\ref{tab:phone-controls}). Sessions with the NNAPI accelerator
enabled achieved partial delegation only on the Galaxy S25, and only for
the fp32 and dynamic int8 graphs. The Pixel 8a NNAPI sessions either failed
to create or assigned every node to the CPU provider, so these are partial
placements rather than accelerator-only execution. No supported comparison
changed a label (0 of 64 in every cell). Under dynamic int8, changing the
seven co-batched texts moved all 64 of 64 probability vectors, with maximum
total variation 0.0048 on the phone CPUs and 0.0057 under the partial
Galaxy NNAPI placement, where it also produced one threshold-action change
and moved four of 64 targets by one rank. This matches the server-side
dynamic int8 composition result, and the static int8 export was bitwise
identical under the same composition change on both phone CPUs. The target
slot and the order of the other examples were bitwise identical for every
format on every supported placement.

\begin{table}[t]
\centering
\small
\caption{Serving-context factors on physical phones. Entries give the
maximum total variation of the target probability vector, or ``bitwise''
when all 64 of 64 targets were bitwise identical, for CPU execution. Cells
marked n.m.\ were not measured because the int8 graphs were exported at a
fixed batch size of 8.}
\label{tab:phone-controls}
\begin{tabular}{llcccc}
\toprule
Device & Format & Batch size & Composition & Target slot & Order \\
\midrule
Galaxy S25 & fp32         & bitwise & bitwise & bitwise & bitwise \\
Galaxy S25 & dynamic int8 & n.m.    & 0.0048  & bitwise & bitwise \\
Galaxy S25 & static int8  & n.m.    & bitwise & bitwise & bitwise \\
Pixel 8a   & fp32         & bitwise & bitwise & bitwise & bitwise \\
Pixel 8a   & dynamic int8 & n.m.    & 0.0048  & bitwise & bitwise \\
Pixel 8a   & static int8  & n.m.    & bitwise & bitwise & bitwise \\
\bottomrule
\end{tabular}
\end{table}

\subsection{Downstream decisions move more than labels}
\label{sec:app-decisions}

Table~\ref{tab:app-decision-rates} reports, for three serving
interventions, the fraction of evaluation-example pairs whose predicted
label, fitted per-class threshold action, abstention decision at 90 percent
target coverage, 90 percent prediction-set membership, or top 1 percent
ranking membership changed. Every policy was fitted on reference-context
outputs only, then frozen before any evaluation metric was produced, so no
threshold was refitted to a serving context. In the tested configurations
the threshold, abstention, prediction-set, and ranking quantities change
more often than the predicted label under padding and precision changes,
and the batch-size comparison changes zero labels among 799{,}620 pairs
while still moving threshold actions, prediction-set membership, and
ranking membership at small nonzero rates. These are rates over a balanced
experimental grid, not production traffic prevalence.

\begin{table}[t]
\centering
\small
\caption{Fraction of evaluation-example pairs (percent) with a changed
decision under each intervention. Abstention and prediction sets use 90
percent target coverage. Ranking churn is the fraction of the top 1 percent
set whose membership differs, $|S \triangle S'| / 2k$ for selected sets of
size $k$ per class, not a fraction of all examples.}
\label{tab:app-decision-rates}
\setlength{\tabcolsep}{4pt}
\begin{tabular}{lrrrrrr}
\toprule
Intervention & Pairs & Label & Threshold & Abstention & Pred.\ set & Top 1\% rank \\
\midrule
Padding policy & 218{,}439 & 10.224 & 18.546 & 9.161 & 15.468 & 32.010 \\
Precision (fp32 to bf16) & 799{,}620 & 0.991 & 3.077 & 2.141 & 1.932 & 12.655 \\
Batch size (32 to 8) & 799{,}620 & 0.000 & 0.0004 & 0.000 & 0.0003 & 0.0042 \\
\bottomrule
\end{tabular}
\end{table}

\subsection{Recomputing low-margin predictions in a canonical context}
\label{sec:app-replay}

A deployable mitigation recomputes the least confident fast-path
predictions in a declared canonical context and releases the canonical
result for those predictions. The fast path here is bf16 execution at batch
size 32, and the canonical context is fp32 execution at batch size one with
static width, for the GPT-2 medium SST-2 checkpoint. Each policy ranks
predictions by a fast-path risk score, the top-two probability margin for
the label policy and the distance of the positive-class probability from
the declared 0.8 action threshold for the fixed-threshold policy. Cutoffs
were fitted on a calibration split of 547 targets that was fixed before
evaluation on the 1{,}274 held-out targets; because the calibration split
contained fewer disagreements than the predeclared minimum, the fitted
operating points are underpowered. The label-margin policy recomputed
22.45\% of requests and detected all 44 observed label disagreements, while
the fixed-threshold policy recomputed 4.00\% and detected 18 of 21
threshold-action disagreements (Table~\ref{tab:app-replay}). The two
policies cover different decisions, and neither covers complete
probability vectors.

\begin{table}[t]
\centering
\small
\caption{Held-out replay results for GPT-2 medium on SST-2 at fast batch
size 32 (1{,}274 held-out targets).}
\label{tab:app-replay}
\begin{tabular}{lrr}
\toprule
Policy & Recompute rate & Disagreements detected \\
\midrule
Label margin & 22.45\% (286 of 1{,}274) & 44 of 44 \\
Fixed threshold & 4.00\% (51 of 1{,}274) & 18 of 21 \\
\bottomrule
\end{tabular}
\end{table}

\subsection{Output-position faults that mimic padding instability}
\label{sec:app-readout}

Some apparent padding instability is an implementation fault in the
classification head rather than an effect of the position rule. Under valid
fixed-shape repacking of batch neighbors, every rule that locates the
correct semantic token or pools over exactly the packed set of real tokens
changed 0 of 64 target labels for both tested checkpoints, and the BERT
semantic-token gathers were bitwise identical for 64 of 64 targets.
Rules that instead read a fixed physical column or include padding
positions in the pooled set changed labels at fixed batch shape, from 3 of
64 for the BERT fixed-last-column rule up to 64 of 64 for the GPT-2
fixed-first-column rule under end-to-end left versus right placement,
with the GPT-2 pad-inclusive mean and sum each changing 28 of 64. These
faults are distinct from the structural mechanism in which real-token
positions depend on the padded width, and the two must be fixed separately,
since correcting the positions does not correct an unsafe output position and
correcting the output position does not remove width-dependent positions. The
practical rule is to derive the output position from the attention mask and
to exclude padding positions from pooling.

\subsection{Flip-rate distribution across the main grid}

This table ranks the serving-context comparisons by how often they change the
predicted label, from the input-side padding effect down to the within-batch
rearrangements that change no observed labels.

\begin{table}[h]
\centering
\tiny
\caption{Serving-context instability among the 158 from-scratch checkpoints
that exceed 1.15 times chance validation accuracy. Means average designed pairs rather than production
traffic. P90 is the 90th percentile of pair-level flip rates.}
\begin{tabular}{lrrrrrr}
\toprule
Comparison & Pairs & Mean flip & 95\% CI & Median & P90 & Maximum \\
\midrule
Padding & 158 & 0.10297 & [0.07862, 0.12987] & 0 & 0.40835 & 0.671 \\
Precision & 790 & 0.01619 & [0.00968, 0.02429] & 0.00250 & 0.03032 & 0.404 \\
Whole migration (GPU fp32 to CPU int8) & 158 & 0.01492 & [0.00940, 0.02229] & 0.00439 & 0.03114 & 0.386 \\
Batch size & 3,792 & 0.00019 & [0.00010, 0.00029] & 0 & 0 & 0.052 \\
Global order & 948 & $\approx0$ & [0, 0.00001] & 0 & 0 & 0.001 \\
Target slot & 474 & \multicolumn{5}{c}{zero observed} \\
Fixed-shape composition & 948 & \multicolumn{5}{c}{zero observed} \\
\bottomrule
\end{tabular}
\end{table}

\clearpage

\subsection{Scope of the order and composition comparisons}

The order comparison permutes the full test list before fixed-width batching.
It preserves target IDs and tensor width but can change peers and target slot,
so we do not interpret it as a pure within-batch permutation effect. The
composition comparison holds the padded width fixed and changes which other
examples share the batch. It is the clean test of whether their content
reaches the target. A separate random-peer width
experiment allows peer lengths to change dynamic width and is therefore a
joint peer/shape intervention.

\section{Implementation Details}
\label{app:implementation}

\paragraph{Primary models and tokenization.}
Small, medium, and large models use hidden sizes 64, 384, and 768; depths
1, 6, and 12 layers; 1, 6, and 12 attention heads; and feed-forward widths
256, 1,536, and 3,072, respectively, with maximum sequence length 512. BERT families use the
\texttt{bert-base-uncased} WordPiece vocabulary and right padding. GPT-2
families use the \texttt{gpt2} byte-pair vocabulary, reuse EOS as PAD, and are
manually left padded. The generative formulations add one generation marker and
one synthetic token per class.

\paragraph{Objectives and scores.}
The direct BERT model uses sequence-classification cross-entropy. The
masked-label model applies standard masked-language-model cross-entropy,
randomly masks 15\% of eligible tokens, and always masks the synthetic label
positions. The next-label GPT-2 model trains on
\texttt{text [label\_gen] [label\_y]} and classifies from the next-token
scores after the generation marker. The class-conditional GPT-2 model trains
on \texttt{[label\_gen] [label\_y] text} and scores each candidate class by
summing non-padding token log probabilities. At inference, every formulation's
class scores are converted to fp32 and softmax-normalized over only the
candidate classes. The reported generative probabilities are therefore
class-normalized verbalizer or sequence-score probabilities, not
full-vocabulary probabilities.

\paragraph{Primary optimization.}
We use AdamW with learning rate $2\times10^{-5}$, zero warm-up, linear decay,
batch size 32, at most 60 epochs, and early-stopping patience 10. Training and
checkpoint selection use validation weighted F1. The best of the retained
checkpoints is evaluated. SST-2, SST-5, and Emotion use supplied validation
splits. AG News and IMDb use a stratified 20\% training carve-out with split
seed 42. Runs use fp32 and seeds 40 to 42 under the strict deterministic
profile.

\paragraph{Modern pretrained models.}
ModernBERT-base and Qwen3-0.6B use LoRA rank 16, alpha 32, dropout 0.05, and
all linear modules. We optimize trainable parameters with AdamW at
$2\times10^{-4}$ for at most 10 epochs, with zero warm-up, linear decay, and
patience 3. Adapters are merged before inference. ModernBERT uses
sequence-classification cross-entropy. Qwen uses left-padded raw text and
class cross-entropy over final-position logits for single-token digit labels.

\section{Extended Related Work}
\label{app:related}

This appendix expands the related work summarized in the main text. It groups prior work into six areas and, for each, states what that work establishes and how it differs from the question studied here, namely how the prediction of a single fixed trained classifier changes when only the inference-time serving context varies while the model checkpoint, the input text, and the tokenizer are held fixed.

\paragraph{Discriminative, generative, and pseudo-generative text classifiers.}
A text classifier can be built in several ways that differ in what probability the model represents. A discriminative classifier estimates the conditional probability of the label given the input text directly, for example logistic regression or a neural encoder with a classification head. A generative classifier instead models the distribution of the input, either the joint distribution of text and label or the class-conditional distribution of the text, and turns this into a label decision through Bayes' rule. The comparison between these two families has a long history, beginning with analyses of logistic regression, discriminant analysis, and naive Bayes, and of the conditions under which one family is more sample efficient or more accurate than the other \citep{efron1975efficiency,ng2001discriminative,xue2008comment}. Naive Bayes text classification, in which the class-conditional distribution of words is modeled under an independence assumption, is an early and widely used generative approach for text \citep{sahami1998bayesian}. Neural formulations extended the comparison. This work includes recurrent generative and discriminative text classifiers \citep{yogatama2017generative}, transformer-based discriminative and generative models for sentence-level classification \citep{rezaee2021discriminative,nogueira2019passage}, and a theoretical account of when each family generalizes better \citep{zheng2023revisiting}. A third formulation has become common with pretrained language models. A masked-language-model classifier scores a label by how well a language model, trained to predict tokens that have been hidden, reconstructs a label-conditioned version of the input. Because the score is a product of conditional token probabilities rather than a properly normalized joint density over the whole sequence, it approximates a likelihood through pseudo-likelihood \citep{wang2019bert}. We call this formulation pseudo-generative to separate it from a classifier that models a normalized joint distribution. Recent work uses autoregressive language models directly for classification \citep{sun2023text} and compares encoder, autoregressive, masked-language-model, and discrete-diffusion classifiers within one transformer-era study \citep{kasa2025generative}. Related work examines other design choices in text classification, such as the treatment of ordinal label structure \citep{kasa2024ordinality}. Further studies characterize properties that separate the families. Fully generative classifiers are not automatically resistant to shortcut features that correlate with the label in training but not in deployment \citep{li2025shortcut,kasa2026shortcut}, and modeling the input distribution can raise the risk that an attacker infers whether a specific example was in the training set \citep{makroo2026hidden}. These studies compare training objectives, model formulations, sample efficiency, robustness, calibration, and privacy across classifier families. None of them holds one fixed trained checkpoint and one fixed input text and varies only the serving context, such as padding, batch composition, numerical precision, quantization, device, or inference backend, to measure whether the prediction of that fixed model changes.

\paragraph{Non-determinism and non-invariance in language-model inference.}
A separate line of work documents that the output of a language model can change when the way the computation is executed changes, even though the model parameters, the prompt, the decoding parameters, and any sampling seed are held fixed. The root numerical cause is that floating-point addition is not associative. Adding the same set of numbers in a different order can yield a slightly different rounded result, because each intermediate sum is rounded to a finite number of bits \citep{goldberg1991floating}. This matters in practice because a serving system computes sums, such as the dot products inside a matrix multiplication or a reduction across the sequence, by splitting them across parallel hardware threads, and it selects which low-level routine (kernel) to run and how to partition the reduction as a function of the shape of the input batch. When the batch shape changes, for instance because more or fewer requests are processed together, the partitioning and therefore the accumulation order can change, and the rounded result moves \citep{yuan2025understanding,he2025defeating,zhao2026corun}. The choice of inference engine and library version is a further source of divergence for the same model and inputs \citep{pape2026silent}. At the level of hosted services, repeated requests under nominally deterministic settings change the generated output and the benchmark accuracy computed from it, without identifying which serving factor moved \citep{atil2024non}. That measurement is generation-mediated, because the task answer is parsed out of generated text. Three responses to this have appeared. First, kernels can be written so that the reduction order does not depend on the batch shape, which restores identical results across batch sizes at a fixed execution configuration \citep{he2025defeating}. Second, requests can be padded so that each is processed at a fixed tensor shape, keeping a request's arithmetic independent of which other requests share its batch \citep{zhao2026corun}. Third, the default kernels can be kept and every decoded token verified against a deterministic recomputation, with disagreements repaired \citep{gond2026llm}. The margin-based recomputation we measure in Section~\ref{sec:mitigations} is the classification counterpart of this response, applied once per input rather than once per decoded token. The invariance that this padding-for-determinism work restores concerns a request that occupies a different row of the batch while the tensor shape is held fixed. It is distinct from the change studied in this paper, in which a real token occupies a different absolute position within its own sequence because the padded width itself changed. Related measurement work shows that reporting only aggregate, dataset-level metrics hides the per-example movement these effects cause \citep{fang2026dataset}, and that the token probabilities of a language model reveal non-determinism that a discrete top-one output conceals \citep{fu2026beyond}. All of this work studies language-model generation, autoregressive or diffusion-based, in which the output is a decoded sequence and a single perturbed step can propagate through the remaining decoding. None of it isolates the single forward pass of a fixed text classifier, holding the checkpoint and the input text fixed, to determine whether and by how much these execution-level mechanisms change a classification decision rather than a generated sequence.

\paragraph{Prediction instability, churn, and underspecification.}
A different body of work studies how the predictions of a classifier vary across models rather than across executions of one model. Fine-tuning a pretrained model is itself unstable. The final model depends on the random initialization of the added parameters and on the order in which training examples are presented, so the same recipe run twice can yield different models \citep{dodge2020finetuning}. Prediction churn names the fraction of examples whose predicted label differs between two successively trained models, and procedures exist to reduce it during retraining \citep{milanifard2016launch}. Underspecification describes the situation in which a training pipeline admits many distinct solutions that reach nearly the same held-out accuracy yet behave differently on axes that matter in deployment, so aggregate accuracy does not single out one model \citep{damour2022underspecification}. Predictive multiplicity makes the same point at the level of individual inputs. Several models with near-equal accuracy can assign conflicting predictions to the same example, and this multiplicity can be quantified over the set of near-optimal models \citep{marx2019predictive,hsu2022rashomon}. In all of these settings the source of variation is a change of model, whether across independent training runs or across the set of models a pipeline could have produced. This paper instead holds a single trained model fixed and varies only the inference-time serving context, so the disagreement it measures is not attributable to a different choice of parameters.

\paragraph{Imprecise probability, credal classification, and calibration.}
Some formal frameworks represent uncertainty with more than a single probability distribution. Imprecise probability describes partial or unresolved knowledge using sets of distributions together with lower and upper probabilities, rather than one sharp distribution \citep{walley1991statistical}. A credal classifier applies this idea to prediction and may return a set of admissible labels, rather than forcing a single label, when the available evidence does not determine one. The naive credal classifier and its later refinement are the canonical examples for categorical data \citep{zaffalon2002naive,corani2008reliable}. The set of predictions studied in this paper is related in outward form to a set-valued prediction but different in origin and in intended use. It is the finite collection of outputs that one fixed model actually produces across the serving configurations that are tested, with no assumed convexity of a probability set, no generating prior, and no belief-function semantics. Taking the union of the predicted labels across configurations is one possible set-valued reading of that collection, not a decision rule imported from the credal literature. Calibration is a further, separate notion. It measures whether the confidence a model reports matches the empirical frequency with which such predictions are correct, and modern neural networks are often miscalibrated toward overconfidence \citep{guo2017calibration}. The distinction between uncertainty that is irreducible given the data and uncertainty that reflects limited knowledge and could in principle be reduced has been formalized as aleatoric versus epistemic uncertainty \citep{hullermeier2021aleatoric}. This paper does not estimate calibration error and does not decompose uncertainty into these components. It measures how much a fixed model's output depends on hidden serving configuration.

\paragraph{Numerical and positional mechanisms.}
Two implementation choices are central to how serving context can change a prediction. The first is the numerical format used to store values and to carry out arithmetic. Mixed-precision execution keeps some values in a reduced-precision floating-point format, which uses fewer mantissa bits than the standard single-precision format and therefore represents each number more coarsely, while accumulating selected sums in higher precision to limit error \citep{micikevicius2018mixed}. Quantization is a distinct technique that maps floating-point values to low-bit integers through a scale factor and then performs integer arithmetic, for example 8-bit integer matrix multiplication for transformer models \citep{dettmers2022llmint8}. The term quantization is reserved here for such integer methods and is not used for reduced-precision floating point. Both change how the same represented input is computed numerically, and this paper measures their effect on the individual predictions of a fixed model. The second choice is how position information is supplied to the model, which matters because padding changes the absolute position index assigned to each real token. Learned absolute position encoding, used by the original bidirectional encoder and by the autoregressive decoder considered here, assigns every absolute position index its own learned vector that is added to the token embedding, so the representation of a token depends on its absolute index \citep{devlin2019bert,radford2019gpt2}. Relative schemes instead encode the offset between a query position and a key position rather than the two absolute indices \citep{shaw2018relative}. Rotary position embedding rotates the query and key vectors by an angle proportional to the absolute position, so that the attention score depends only on the relative offset and a shift common to query and key cancels \citep{su2024roformer}. Attention with linear biases adds a penalty to attention scores that grows with the distance between positions \citep{press2022alibi}. Models trained with no explicit positional encoding can still recover position information through the causal attention mask \citep{haviv2022nope}. The choice of encoding scheme also affects how well a model generalizes to sequence lengths not seen during training \citep{kazemnejad2023impact,wang2024length}. Current encoder and decoder models widely adopt rotary position embedding \citep{warner2024modernbert,qwenteam2025qwen3}. A related training-time intervention adds random padding during training so that real tokens are seen at many absolute positions, which improves the robustness of learned position embeddings \citep{tao2023randompadding}. Together these mechanisms explain why a change in padded width can change the represented input for a model whose token representations depend on absolute position, while leaving it unchanged for a scheme that depends only on relative offsets. Because the architecture name alone does not settle the question, the effect is measured directly rather than assumed.

\paragraph{Behavioral and production-readiness testing.}
A final line of work tests machine learning systems beyond a single aggregate accuracy number. Behavioral testing constructs targeted suites that test specific capabilities and failure modes of a natural language processing model, exposing errors that overall accuracy hides \citep{ribeiro2020checklist}. Production-readiness and reliability frameworks catalog the operational hazards of deployed systems, including hidden dependencies on data and configuration, and propose checklists that span data, model, and infrastructure \citep{sculley2015debt,breck2017mltest}. The checks introduced in this paper are complementary to both. For a fixed model and a fixed input text, they compare the predicted labels and the underlying scores across serving configurations. They require every comparison to pass a task-quality threshold, so that a configuration in which the model has collapsed to a constant prediction is excluded rather than counted as agreement. They also include a deliberately vulnerable configuration whose sole purpose is to confirm that each measurement is able to detect the effect it is designed to test for.


\section{Theoretical Results and Proofs}\label{app:theory}

Throughout this appendix, $z(x,c) \in \mathbb{R}^{K}$ denotes the logit
vector produced for input $x$ under serving context $c$, where a serving
context specifies the hidden execution conditions (batch shape, padded
width, numerical precision, quantizer state, and kernel configuration)
while the checkpoint, the text, and the tokenizer are held fixed. Class
probabilities are obtained by applying softmax to the displayed logits,
$p(y \mid x,c) = \operatorname{softmax}(z(x,c))_y$, and ties are broken by
one fixed deterministic rule. The total variation distance between the
probability vectors of two contexts is
$\operatorname{TV}(x,c,c') = \tfrac{1}{2}\sum_y |p(y \mid x,c) - p(y \mid x,c')|$.
The top-two margin under a reference context $c_0$ is the gap between the
winning score and the strongest competitor, measured either on logits,
$g(x) = z_w(x,c_0) - \max_{j \neq w} z_j(x,c_0)$, or on probabilities,
$m(x) = p_w(x,c_0) - \max_{j \neq w} p_j(x,c_0)$, where $w$ is the unique
reference winner. The results below formalize the mechanisms studied in the
empirical sections: the structural position-assignment mechanism, the margin
conditions for label stability, the quantization scale-policy contrast, the
accumulation behavior of fully generative scoring and greedy decoding, the
propagation of numerical discrepancies through depth, and the conditional
feasibility of jointly attaining accuracy, speed, and serving-context
invariance.

\subsection{Position assignment under changes of padded width}

\begin{cprop}{Sufficient common-shift condition for width invariance}{common-shift}
Let the real tokens of a fixed input occupy logical positions $i$ and $j$,
and let adding $s$ leading pad columns move their physical tensor columns
to $i+s$ and $j+s$. Let $\Phi(i,j)$ denote every positional contribution to
the real-token computation. Suppose all of the following hold:
\begin{enumerate}
\item Every positional contribution satisfies $\Phi(i+s,\, j+s) = \Phi(i,j)$.
\item Padding is excluded from attention and pooling, so the visibility
relation among real tokens is unchanged.
\item The class scores are read from the same semantic token or the same masked set of
real tokens.
\item Cache offsets, if present, use one consistent logical coordinate
system.
\item No other operation depends on the absolute physical tensor column.
\end{enumerate}
Then a common shift of all real-token columns leaves the ideal represented
real-token computation unchanged in exact arithmetic.
\end{cprop}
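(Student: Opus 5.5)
The argument will be an induction over the layers of the forward pass, comparing the computation at the two widths on the real tokens only. First I fix an abstract model of the computation. Each layer maps the collection of real-token hidden states $\{h_i\}$ to new states. It may use token-wise operations (embedding lookup, layer normalization, feed-forward blocks, residual additions). It may use attention, whose score between query $i$ and key $j$ is a function of the content vectors $h_i,h_j$ together with a positional term $\Phi(\pi_i,\pi_j)$, where $\pi$ denotes the physical column. This form covers additive biases, rotary rotations (whose score depends on $\pi_j-\pi_i$), and position-dependent value terms. Condition (v) is what licenses this modeling step: it says that physical columns enter only through $\Phi$, the mask, the readout, and cache offsets, which conditions (i)–(iv) control. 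Under the shift, every real token's column becomes $\pi_i+s$. So the claim is that the real-token states coincide at every layer when they are indexed by logical position.

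The base case is the input layer. Token embeddings depend only on token IDs, which are unchanged. Any absolute positional contribution at the input is an instance of $\Phi$, viewed as the diagonal case $\Phi(i,i)$, and condition (i) makes it shift-invariant; a learned table indexed by column therefore falls outside the hypothesis. For the inductive step, assume the real-token states agree at layer $\ell$. Token-wise operations preserve agreement trivially. For attention I split each softmax over keys into real and padded keys. Condition (ii) says padded keys receive $-\infty$ (or zero weight) at both widths, and that the set of visible real keys for each real query is the same. Condition (ii) guarantees only that masking hides the padded keys; the padded keys' hidden states may differ between the widths, so the mask is what makes them irrelevant. The surviving scores depend on equal content vectors and on $\Phi(\pi_i+s,\pi_j+s)=\Phi(\pi_i,\pi_j)$, so they are identical. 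The normalized weights and the weighted value sums are then identical in exact arithmetic. Condition (iv) handles incremental or KV-cached execution: cached keys and values written at one offset are read back in the same coordinate system, so the cached computation equals the uncached one.

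Finally, condition (iii) transfers the agreement to the logits. The readout either gathers the state of the same semantic token, or pools over the same masked real set using a symmetric or position-invariant pooling operation. In both cases it applies the same head, so the class scores, and hence the probabilities, are equal.

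The main obstacle is making condition (v) and the form of $\Phi$ precise enough that the induction is airtight without becoming vacuous. Rotary embeddings act multiplicatively on queries and keys rather than as an additive $\Phi$, so I would state the attention score as a general function $\sigma(h_i,h_j,\pi_i,\pi_j)$. Condition (i) then becomes $\sigma(\cdot,\cdot,\pi_i+s,\pi_j+s)=\sigma(\cdot,\cdot,\pi_i,\pi_j)$. For rotary embeddings I would verify this directly from $R_{\pi_i}^\top R_{\pi_j}=R_{\pi_j-\pi_i}$. Everything else in the induction is bookkeeping.
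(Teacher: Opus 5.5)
Your proposal is correct and takes essentially the same route as the paper. Both argue by induction over layers: unchanged token contents plus condition (i) give the base case, condition (ii) fixes the attention and pooling domains over real tokens, conditions (iii) and (iv) fix the readout and cached positional relations, and condition (v) rules out any other path for the physical column. Your version is more explicit than the paper's, splitting each softmax into real and masked keys and writing the score as a shift-invariant $\sigma(h_i,h_j,\pi_i,\pi_j)$ so that rotary embeddings fit; the paper leaves both steps implicit in its abstract $\Phi$.
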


\begin{proof}
The base real-token contents are unchanged by the shift, since padding
inserts columns without altering any real token embedding. Condition 1
preserves every positional term entering the computation, so the
first-layer inputs restricted to real tokens are identical in the two
executions. Condition 2 preserves the dependency relation and the
aggregation domain of every attention and pooling operation over real
tokens, so each layer combines the same set of real-token states with the
same positional terms. Conditions 3 and 4 preserve the semantic output position
and any cached positional relations. Condition 5 excludes any
unmodeled path through which the absolute physical column could enter the
computation. Proceeding by induction over layers, the real-token states
after each layer are identical in exact arithmetic, and therefore so are the
class scores read from them.
\end{proof}

This proposition is a sufficient positional-mechanism condition, not a
complete serving-invariance theorem. A wider tensor can still select
different kernels, tilings, or reduction trees and create numerical drift
even when the represented input is preserved. The condition explains the
architecture control table. Models whose positional terms depend only
on token distance (rotary position embeddings under left padding, and
encoders whose real-token positions do not move under right padding) showed
zero observed label changes out of 29{,}463 comparisons each. A
natively trained model with no explicit position term showed zero label
changes out of 256 with only fp32-scale numerical differences.

\begin{cprop}{Learned absolute positions change the represented input}{absolute-position}
Let the initial representation of real token $x_i$ under a width-derived
left-padding rule be $h_i^{(0)} = e(x_i) + p(\pi_i)$, where positions are
taken from the physical tensor column, so that adding $s$ leading pad
columns yields the shifted representation
$\widetilde{h}_i^{(0)} = e(x_i) + p(\pi_i + s)$. If
\[
p(\pi_i + s) \neq p(\pi_i)
\]
for any real token, then the two executions receive different represented
inputs. Consequently the same-input premise of any numerical
batch-invariance guarantee is violated, and such a guarantee does not imply
equal outputs.
\end{cprop}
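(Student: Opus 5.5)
The argument is a direct comparison of first-layer inputs, followed by a remark on what a numerical guarantee actually quantifies over. I take the represented input of an execution to be the collection of first-layer real-token states $(h_i^{(0)})_i$, together with the mask and the output-position specification. This is the object that the decomposition $p_\theta(\cdot\mid x,c)=N(E_c(\theta,P_c(x)))$ of Section~\ref{sec:method} assigns to $P_c(x)$. Under the width-derived left-padding rule, adding $s$ leading pad columns leaves every token embedding $e(x_i)$ unchanged, since padding inserts columns without altering real tokens. It also sends each physical column $\pi_i$ to $\pi_i+s$. Hence $\widetilde h_i^{(0)}-h_i^{(0)}=p(\pi_i+s)-p(\pi_i)$ for every real token $i$. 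If this difference is nonzero for some $i$, then $\widetilde h_i^{(0)}\neq h_i^{(0)}$. The two tuples of first-layer states therefore differ, and so do the represented inputs, even though the text, the mask pattern over real tokens, and the checkpoint are identical.

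The second step is the consequence for numerical guarantees. I would state the form that any such guarantee takes: for a fixed checkpoint and a fixed represented input $P$, executions $E_c(\theta,P)$ and $E_{c'}(\theta,P)$ under batch-invariant kernels, a fixed reduction order, and similar conditions agree, either bitwise or within a tolerance. The universal quantifier ranges over contexts at a \emph{fixed} $P$. Because the two widths yield $P\neq\widetilde P$, the premise fails, and the guarantee says nothing about the relation between $E_c(\theta,P)$ and $E_{c'}(\theta,\widetilde P)$. This step is purely logical.

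To show that ``does not imply equal outputs'' has content, I would note that the outputs can in fact differ. The paper already supplies witnesses: the measured restoration experiment of Section~\ref{sec:position-intervention}, and the $20.1\%$ label changes in Table~\ref{tab:architecture}. Alternatively, a toy one-layer model whose head reads $h^{(0)}$ linearly separates the two inputs. Either example shows that no implication to equal outputs can be recovered in general.

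The main subtlety is not computational but definitional. One has to fix ``represented input'' to the first-layer states rather than the token IDs, so that Proposition~\ref{propx:common-shift}'s condition 1 is exactly what fails here. One also has to check that the claim concerns the represented input itself, not the logits. The inequality of $p$ values suffices for the former, while a logit change additionally requires that the downstream map not collapse the two inputs. I would state this distinction explicitly so the proposition is not read as asserting a logit change in every case.
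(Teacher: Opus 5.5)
Your proposal is correct and follows the paper's proof. Both arguments note that padding leaves every $e(x_i)$ unchanged, so the first-layer real-token states differ by $p(\pi_i+s)-p(\pi_i)$ before any layer runs, which means a numerical guarantee whose premise is an identical represented input does not apply; your added witness, and your remark that a logit change also needs the downstream map not to merge the two inputs, match the paper's caveat that the outputs may coincidentally agree.
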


\begin{proof}
The difference between $h_i^{(0)}$ and $\widetilde{h}_i^{(0)}$ is present
before the first transformer layer. The same implemented function is then
evaluated on two different represented inputs. The outputs may
coincidentally agree, but a numerical guarantee whose premise is an
identical represented input does not apply, because that premise fails.
Correct masking does not remove this mechanism, since the changed vectors
belong to real tokens rather than pads, and a semantic output position still
consumes the changed hidden states.
\end{proof}

This converse explains the other half of the architecture table. The
left-padded family with learned absolute positions showed 35{,}591 label
changes over 176{,}778 comparisons (mean 20.1\%; 10.3\% when all comparisons from the 158 pairs that
passed the accuracy threshold are pooled together), while the right-padded controls whose real-token
positions do not move showed zero observed label changes over the same
comparison count.

\begin{cprop}{Instantiation of the common-shift condition across position-encoding schemes}{schemes}
Let all real-token tensor columns shift by a common offset $s$, and let
$\Phi(i,j)$ denote the positional contribution to the interaction between
positions $i$ and $j$. The common-shift condition
$\Phi(i+s, j+s) = \Phi(i,j)$ holds for the following schemes.
(i) Positions assigned by order among real tokens: the assigned indices
$0, \dots, L-1$ do not change at all, so every positional term is
identical.
(ii) Rotary embeddings: with rotation operator $R$, the query-key
positional term satisfies
$\langle R(i)q, R(j)k \rangle = \langle q, R(j-i)k \rangle$, a function of
$j - i$ only.
(iii) ALiBi: the attention bias is $B_h(i,j) = -m_h (i - j)$, a function
of the distance only.
(iv) T5 bucketed relative bias: $B_h(i,j) = b_h[\operatorname{bucket}(j-i)]$.
(v) Shaw relative representations: relation vectors are selected by
$r_{\operatorname{clip}(j-i,\,k)}$.
(vi) Transformer-XL relative attention: the relative-position terms depend
on $i - j$, provided cache indices use one coherent coordinate system.
(vii) A model with no positional term satisfies the condition vacuously.
The condition fails for learned absolute embeddings indexed by tensor
column, since $p(\pi_i + s) \neq p(\pi_i)$ in general, and it is not
satisfied in general by sinusoidal absolute encodings. Although
$PE(i+s) = A_s\,PE(i)$ for a block-rotation $A_s$ within each frequency
pair, the encoding is added to token content before arbitrary learned
projections, and no general end-to-end cancellation identity exists.
\end{cprop}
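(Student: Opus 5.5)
The plan is a case-by-case verification. For each scheme (i)--(vii), I write down the exact place where position enters the real-token computation and check that it is unchanged under $i\mapsto i+s$, $j\mapsto j+s$. Proposition~\ref{propx:common-shift} then does the rest: once condition~1 holds, together with the masking and output-position conditions, invariance in exact arithmetic follows by its layer induction. No new induction is needed here; the work is only in showing that each scheme's $\Phi$ depends on $(i,j)$ through $j-i$, or not at all.

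The easy cases I would dispatch first.
\begin{itemize}
\item For (i), the assigned indices are $0,\dots,L-1$ by construction, independent of the physical columns. Every $\Phi$ built from them is literally unchanged.
\item For (vii), there is no positional term, so the condition holds vacuously.
\item For (iii) and (iv), the bias is $B_h(i,j)=-m_h(i-j)$ or $b_h[\operatorname{bucket}(j-i)]$. Substituting $i+s,j+s$ cancels $s$ in the difference.
\item For (v), the relation vector is selected by $\operatorname{clip}(j-i,k)$, which is again shift-invariant.
\item For (vi), the relative terms of Transformer-XL depend on $i-j$ (the learned global vectors $u,v$ carry no position). The one subtlety is that the memory and current-segment indices must live in a single coordinate system. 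This is exactly the stated proviso, which I would connect to condition~4 of Proposition~\ref{propx:common-shift}.
\end{itemize}

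For (ii), rotary embeddings, I would use that $R(i)$ is block-diagonal with $2\times2$ rotations by angles $i\theta_k$. Each block is orthogonal, so $R(i)^\top=R(-i)$, and rotations in a plane compose additively, so $R(-i)R(j)=R(j-i)$. Hence $\langle R(i)q,R(j)k\rangle=q^\top R(j-i)k$, which is invariant to a common shift. I also need to note that RoPE touches only the query--key score, not the values, so this score is the only positional contribution.

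For the negative claims I would argue in two steps.
\begin{itemize}
\item \emph{Learned absolute embeddings.} A learned table places no constraint tying $p(\pi_i+s)$ to $p(\pi_i)$. A generic table therefore violates the equality, and Proposition~\ref{propx:absolute-position} applies directly.
\item \emph{Sinusoidal encodings.} I would first record the identity $PE(i+s)=A_s PE(i)$ with $A_s$ a block rotation. I would then show that it does not propagate: the first-layer score is $(W_Q(e_i+PE(i)))^\top W_K(e_j+PE(j))$. This contains the cross terms $e_i^\top W_Q^\top W_K A_s PE(j)$, which are not shift-invariant for generic $W_Q,W_K$.
\end{itemize}

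The claim is only that cancellation fails ``in general''. So it suffices to exhibit one choice of weights, say $W_Q=W_K=I$ with a nonzero token embedding, for which the score changes under the shift.

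The main obstacle I expect is stating the rotary and Transformer-XL cases precisely enough that ``$\Phi$'' captures every positional path rather than just the attention logit. For Transformer-XL in particular, I need to check that the relative terms and the cached states are indexed consistently. I would handle this by defining $\Phi$ per scheme as the full set of position-indexed quantities entering layer computations, and verifying each one.
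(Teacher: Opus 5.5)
Your proposal is correct and follows the paper's own proof of Proposition~\ref{propx:schemes}. The paper likewise uses unchanged order-assigned indices for (i), substitution showing dependence only on $j-i$ for (ii)--(vi), vacuity for (vii), Proposition~\ref{propx:absolute-position} for learned absolute tables, and failure of learned projections to commute with $A_s$ for sinusoidal encodings. Your explicit RoPE algebra and concrete first-layer sinusoidal witness just add detail to the paper's one-line arguments. One caveat on that witness: with $W_Q=W_K=I$ the pure $PE^\top PE$ term is invariant because $A_s$ is orthogonal. Any change therefore comes only from the content--position cross terms, and you need a suitably chosen token embedding, not merely a nonzero one.
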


\begin{proof}
For (i), the position assignment is computed from the mask, so adding pad
columns leaves every assigned index unchanged, and each positional term is
evaluated at identical arguments. For (ii) through (vi), substitute
$i+s$ and $j+s$ into the stated positional term. In each case the term
depends on its arguments only through the difference
$(j+s) - (i+s) = j - i$ (after clipping or bucketing, which are functions
of the difference), so the term is unchanged. For (vii) there is no
positional term to change. For learned absolute embeddings, the initial
representation of a real token becomes $e(x_i) + p(\pi_i + s)$, which
differs from $e(x_i) + p(\pi_i)$ whenever the embedding table distinguishes
the two rows, establishing the failure by Proposition~\ref{propx:absolute-position}.
For sinusoidal encodings, the block-rotation identity shows that a common
shift acts linearly on the encoding. However, the encoding enters the model
additively in content space and passes through learned projections that
need not commute with $A_s$; absent such structure, the represented input
changes, so the condition is not satisfied in general.
\end{proof}

Among these schemes, we directly
measure learned absolute positions (affected), order positions, rotary
embeddings (Qwen3 and ModernBERT), and a native model trained without any
positional term (all unaffected in the measured cells). The ALiBi, T5,
Shaw, Transformer-XL, and sinusoidal statements are mechanism-level
consequences of the published formulas and are literature-supported
boundaries, not measurements from our experiments. End-to-end invariance for any
implementation additionally requires the masking, output-position, cache, and
no-other-path conditions of Proposition~\ref{propx:common-shift}.

\subsection{Margin certificates for label stability}

\begin{cprop}{Sufficient logit-margin condition}{logit-margin}
Let $w$ be the unique winning class under the reference context $c_0$ and
let $g(x) = z_w(x,c_0) - \max_{j \neq w} z_j(x,c_0)$ be the reference
top-two logit margin. Suppose that for every allowed context $c$,
\[
\left\| z(x,c) - z(x,c_0) \right\|_{\infty} \le \epsilon .
\]
If $g(x) > 2\epsilon$, then the predicted class is invariant across all
allowed contexts.
\end{cprop}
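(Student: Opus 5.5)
The plan is a direct coordinate-wise comparison: show that under the $\ell_\infty$ bound the winning logit cannot fall below any competitor. Fix an allowed context $c$ and an arbitrary competitor $j \neq w$. The hypothesis $\|z(x,c)-z(x,c_0)\|_\infty \le \epsilon$ gives, coordinate by coordinate, the two one-sided bounds
\[
z_w(x,c) \ge z_w(x,c_0) - \epsilon, \qquad z_j(x,c) \le z_j(x,c_0) + \epsilon .
\]
Subtracting, the gap satisfies $z_w(x,c) - z_j(x,c) \ge z_w(x,c_0) - z_j(x,c_0) - 2\epsilon$. The definition of $g(x)$ as the reference top-two margin gives $z_w(x,c_0) - z_j(x,c_0) \ge g(x)$ for every $j \neq w$.

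Combining these yields $z_w(x,c) - z_j(x,c) \ge g(x) - 2\epsilon > 0$ whenever $g(x) > 2\epsilon$. Since $j \neq w$ was arbitrary, $w$ is the strict, unique maximizer of $z(x,c)$, so $\arg\max_y z_y(x,c) = w$ and the fixed tie-breaking rule is never invoked.

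Finally, I will pass from logits to the predicted label. Softmax is strictly monotone in each coordinate, so $\arg\max_y p(y\mid x,c) = \arg\max_y z_y(x,c)$, and the predicted class under $c$ is $w$. The context $c$ was an arbitrary allowed context, so the prediction is invariant across all of them, and in particular agrees with the reference prediction at $c_0$.

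There is no real obstacle in this argument. The only points needing care are that the inequality is strict, which is what lets us ignore tie-breaking, and that the factor $2$ comes from both the winner and the competitor moving adversarially, by up to $\epsilon$ each. I would also note that the bound is tight: the perturbation $-\epsilon$ on $w$ and $+\epsilon$ on the runner-up closes a gap of exactly $2\epsilon$.
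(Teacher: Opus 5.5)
Your proposal is correct and follows essentially the same argument as the paper. The paper also fixes a context $c$ and a competitor $j \neq w$, lets the winner drop and the competitor rise by at most $\epsilon$ each, and concludes $z_w(x,c) - z_j(x,c) \ge g(x) - 2\epsilon > 0$. Your remarks on softmax monotonicity and tightness are harmless additions.
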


\begin{proof}
Fix any allowed context $c$ and any competitor class $j \neq w$. The
winning logit can decrease by at most $\epsilon$, and the competitor logit
can increase by at most $\epsilon$. Therefore
\[
z_w(x,c) - z_j(x,c) \;\ge\; \bigl(z_w(x,c_0) - \epsilon\bigr) -
\bigl(z_j(x,c_0) + \epsilon\bigr) \;\ge\; g(x) - 2\epsilon \;>\; 0 .
\]
The reference winner remains strictly above every competitor in every
allowed context, so the predicted label does not change.
\end{proof}

The strict inequality is required for this norm bound alone to certify a
strict winner. Equality permits a tie but does not require one, so actual
invariance can still hold below the threshold. A sharper shift-invariant
condition replaces $2\epsilon$ by the oscillation of the perturbation,
$\operatorname{osc}(\Delta) = \max_k \Delta_k - \min_k \Delta_k \le
2\|\Delta\|_{\infty}$, which discounts any common scalar added to all
logits. This certificate connects to the empirical margin stratification,
in which precision-induced and execution-induced label changes concentrated
at small pre-intervention margins while wide-margin examples remained
stable.

\begin{cprop}{Sufficient probability-margin condition}{tv-margin}
Let $m(x) = p_w(x,c_0) - \max_{j \neq w} p_j(x,c_0)$ be the reference
probability margin, and for one example and context define
$\operatorname{TV}(x,c,c_0) = \tfrac{1}{2} \sum_y |p_y(x,c) - p_y(x,c_0)|$.
If
\[
m(x) > 2 \operatorname{TV}(x,c,c_0),
\]
then the predicted label cannot change in context $c$. For a guarantee over
all declared contexts, replace $\operatorname{TV}$ by its supremum over $c$.
\end{cprop}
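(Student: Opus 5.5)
The plan is to mirror the proof of Proposition~\ref{propx:logit-margin}, with the $\ell_\infty$ logit bound replaced by a coordinatewise consequence of total variation. Write $p = p(\cdot\mid x,c_0)$, $p' = p(\cdot\mid x,c)$ and $\Delta = p' - p$. Because both vectors are probability distributions, $\sum_y \Delta_y = 0$. The positive and negative parts of $\Delta$ therefore have equal mass, and each equals $\operatorname{TV}(x,c,c_0)$. From this I would read off the two coordinatewise facts the argument needs. The winner can lose at most $\operatorname{TV}$, so $p'_w \ge p_w - \operatorname{TV}$. Any competitor can gain at most $\operatorname{TV}$, so $p'_j \le p_j + \operatorname{TV}$ for each $j \neq w$.

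Next, fix any competitor $j \neq w$. I would chain the bounds:
\[
p'_w - p'_j \;\ge\; (p_w - \operatorname{TV}) - (p_j + \operatorname{TV}) \;\ge\; m(x) - 2\operatorname{TV}(x,c,c_0) \;>\; 0 .
\]
The second inequality uses $p_j \le \max_{k\neq w} p_k$. So $w$ remains the strict maximizer under $c$. Since the maximum is strict, the fixed tie-breaking rule plays no role, and $f(x;c) = f(x;c_0)$.

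The bound can in fact be sharpened to $|\Delta_w| + |\Delta_j| \le 2\operatorname{TV}$, but $2\operatorname{TV}$ is the threshold stated, so the plan above suffices. For the uniform version, I would replace $\operatorname{TV}(x,c,c_0)$ by $\sup_c \operatorname{TV}(x,c,c_0)$. The hypothesis $m(x) > 2\sup_c \operatorname{TV}$ then gives $m(x) > 2\operatorname{TV}(x,c,c_0)$ for every $c$, and applying the single-context argument to each context yields invariance over the whole declared set.

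The only step needing care is the coordinatewise bound $|\Delta_y| \le \operatorname{TV}$. It relies on the zero-sum identity, which holds only because $p$ and $p'$ are normalized over the same candidate-class set. That matches the paper's convention of softmax over the candidate classes only, and I would state the assumption explicitly. Everything else is the same two-sided triangle argument as in the logit case.
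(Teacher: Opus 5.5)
Your proof is correct and follows the paper's argument. The paper likewise bounds the winner's loss and any competitor's gain by $\operatorname{TV}(x,c,c_0)$, so every winner-versus-competitor gap shrinks by at most $2\operatorname{TV}(x,c,c_0)$ and stays positive; the paper cites the single-class event bound for total variation, which your zero-sum, positive/negative-part argument simply derives explicitly.
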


\begin{proof}
For an event consisting of a single class, the absolute difference of its
probability between the two contexts is at most the total variation
distance. Hence the winner's probability can decrease by at most
$\operatorname{TV}(x,c,c_0)$, and any competitor's probability can increase
by at most $\operatorname{TV}(x,c,c_0)$. Every winner-versus-competitor
probability gap therefore decreases by at most
$2\operatorname{TV}(x,c,c_0)$, and since the reference gap $m(x)$ strictly
exceeds this amount, the winner remains strictly ahead of every competitor.
\end{proof}

\begin{cprop}{Corollary, necessary condition for a label change}{flip-necessary}
Under the assumptions of the two preceding propositions, if the predicted
label changes between contexts $c_0$ and $c$, then necessarily
\[
g(x) \le 2\epsilon
\qquad \text{and} \qquad
m(x) \le 2\operatorname{TV}(x,c,c_0).
\]
\end{cprop}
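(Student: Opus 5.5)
The corollary is the contrapositive of Propositions~\ref{propx:logit-margin} and~\ref{propx:tv-margin}, so the plan is to derive each inequality separately by contraposition and then combine them. The only care needed is with the quantifiers. The logit statement is phrased over a bound $\epsilon$ holding for the allowed contexts, while the probability statement is phrased per context $c$.

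\emph{Logit part.} Assume the label changes between $c_0$ and $c$, and suppose for contradiction that $g(x) > 2\epsilon$. Here $\epsilon$ is any bound with $\|z(x,c)-z(x,c_0)\|_\infty \le \epsilon$, as in the hypotheses of Proposition~\ref{propx:logit-margin}. That proposition then gives $z_w(x,c) - z_j(x,c) \ge g(x)-2\epsilon > 0$ for every $j\neq w$. So $w$ is the strict, unique argmax under $c$, and the fixed deterministic tie-breaking rule is never invoked. The prediction under $c$ is therefore $w$, the same as under $c_0$, which contradicts the assumed flip. Hence $g(x)\le 2\epsilon$. I would apply the bound only to the single pair $(c_0,c)$, since the proof of Proposition~\ref{propx:logit-margin} is pointwise in $c$. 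This makes the corollary valid even when $\epsilon$ is taken as the pairwise logit change $\|z(x,c)-z(x,c_0)\|_\infty$.

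\emph{Probability part.} The argument has the same shape. Suppose $m(x) > 2\operatorname{TV}(x,c,c_0)$. Proposition~\ref{propx:tv-margin} shows that every winner-versus-competitor gap stays strictly positive under $c$. Since softmax is strictly monotone, the probability argmax coincides with the logit argmax, so $w$ remains the unique predicted label. This contradicts the flip, and therefore $m(x)\le 2\operatorname{TV}(x,c,c_0)$.

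Both conclusions hold simultaneously because they come from the same flip event, so the conjunction follows. The only step I expect needs attention is the role of ties. A flip could in principle arise from a tie under $c$ resolved by the tie-breaking rule. The strict inequalities in the two propositions exclude ties, so the contradiction argument covers this case, and no additional argument about the rule itself is needed. This is also why the corollary's inequalities are non-strict: equality still permits a tie, and hence a flip via tie-breaking.
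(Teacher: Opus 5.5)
Your proposal is correct and takes the same approach as the paper: both inequalities are obtained as the contrapositives of the logit-margin and total-variation margin sufficient conditions. Your added remarks are sound refinements of that argument rather than a different route. They cover applying the logit bound pointwise to the single pair $(c_0,c)$, and the fact that the strict hypotheses rule out flips caused by tie-breaking.
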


\begin{proof}
Both statements are the contrapositives of the sufficient conditions just
proved. If $g(x) > 2\epsilon$ the label cannot change, so a label change
forces $g(x) \le 2\epsilon$; if $m(x) > 2\operatorname{TV}(x,c,c_0)$ the
label cannot change, so a label change forces
$m(x) \le 2\operatorname{TV}(x,c,c_0)$.
\end{proof}

This necessary condition held for all 375 of 375 observed reduced-precision
label changes in the movie-review cell. Every change satisfied
$m \le 2\operatorname{TV}$. This is a consistency check implied by the
proposition, not evidence that the bound is a strong prospective predictor.
A prospective verifier would additionally need an intervention-specific
total variation upper bound available before accepting the fast result, and
the certificate covers only the top label, not calibration, ranking,
thresholding, filtering, or abstention.

\subsection{Quantization scale policy and peer coupling}

\begin{cprop}{Dynamic batch-global scale couples the target to its peers}{dynamic-scale}
For a positive scale $s$ and positive signed integer limit $M$, define the
dequantized scalar quantizer
\[
Q_s(a) = s \, \operatorname{clip}\!\bigl( \operatorname{round}(a/s),\, -M,\, M \bigr),
\]
and let a batch-global dynamic quantizer choose $s(B) = \max_{a \in B} |a| / M$
or an analogous runtime range statistic, where the batch $B$ contains the
target row and its peers (an all-zero batch requires an explicit positive
default scale). Let $B$ and $B'$ have the same shape and the same target
activation $a$ but different peer values. If
\[
s(B) \neq s(B')
\qquad \text{and} \qquad
Q_{s(B)}(a) \neq Q_{s(B')}(a),
\]
then the target's quantized activation depends on peer content. If in
addition the downstream target-logit map is injective on these two
activation points, or the relevant logit is strictly monotone along the
segment connecting them, then the target logits also differ.
\end{cprop}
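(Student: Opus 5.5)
The argument is a direct composition of maps, so the plan is to unwind the target's computation and show that peer content reaches it only through the scale. First I fix the two same-shape batches $B$ and $B'$, which share the target activation $a$ and differ only in peer rows. The quantizer applied to the target row is $a \mapsto Q_{s(\cdot)}(a)$. The target's own input $a$ is identical in both batches, so the only argument that differs is the scale, $s(B)$ versus $s(B')$. The premise $Q_{s(B)}(a) \neq Q_{s(B')}(a)$ then says directly that the dequantized target activation takes different values in the two batches. Since the two batches agree on the target and differ only in peers, this difference is a function of peer content alone, which is the first claim. I would also note that $s(B) \neq s(B')$ is necessary but not sufficient for this. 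For example, a small target value can round to zero under both scales, which is why the second inequality is assumed separately rather than derived.

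For the logit claim, write $\tilde a = Q_{s(B)}(a)$ and $\tilde a' = Q_{s(B')}(a)$, and let $F$ be the downstream map from the target's quantized activation to its logits. This requires checking that, at fixed shape, nothing else in the target's computation differs between the two executions. Under the deployment model that assumption is implicit, so $F$ is the same function in both runs.

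There are then two cases, one for each hypothesis of the statement. If $F$ is injective on $\{\tilde a, \tilde a'\}$, then $\tilde a \neq \tilde a'$ gives $F(\tilde a) \neq F(\tilde a')$ immediately. If some logit coordinate $F_k$ is strictly monotone along the segment $\tilde a + t(\tilde a' - \tilde a)$ for $t \in [0,1]$, then its values at the endpoints $t=0$ and $t=1$ differ, since strict monotonicity on $[0,1]$ forces $F_k$ at the two ends to be unequal.

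The main obstacle is the hidden assumption that $F$ is identical across the two batches. If later layers also use batch-global dynamic scales, the peers enter again through those scales, and $F$ itself changes. I would handle this by stating the result as conditioned on a fixed downstream map, namely the remaining layers evaluated with their realized scales. The conclusion is only an existence-of-dependence statement, so extra peer paths cannot undo it except through accidental cancellation. Cancellation is excluded by applying the injectivity or monotonicity hypothesis to the composite map actually realized in each execution. I would remark that this is the reading the statement intends. Finally, I would note the all-zero batch convention: a positive default scale keeps $Q_s$ well defined, and the argument is otherwise unchanged.
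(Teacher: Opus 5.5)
Your argument is correct and is essentially the paper's proof. The fixed target value is quantized on two different grids, the assumed inequality $Q_{s(B)}(a)\neq Q_{s(B')}(a)$ makes the dequantized activations differ, and injectivity or strict monotonicity of the downstream map carries that difference to the logits. Restricting to a single fixed downstream map, as you propose, is the reading the paper leaves implicit. That restriction, not injectivity of each separately realized map, is what excludes cancellation when later layers also use batch-global scales.
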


\begin{proof}
The target value $a$ is fixed, but its quantized representation is
evaluated on two different quantization grids determined by the two scales.
The stated inequality $Q_{s(B)}(a) \neq Q_{s(B')}(a)$ makes the dequantized
target activation different in the two contexts. Injectivity of the
downstream map on these two points, or strict monotonicity of a relevant
logit along their connecting segment, then maps the two distinct
activations to two distinct logit values.
\end{proof}

This proposition establishes a possible scale-mediated peer dependence, not
inevitable movement. Some target values lie on both grids, downstream
computation can map distinct activations to the same logits, and the
formula models a batch-global quantizer rather than dynamic quantizers with
request-local, row-local, or token-local scales. It corresponds to the
dynamic int8 experiment, in which changing only peer content at fixed shape
moved target logits by up to 0.774 for the fully generative model and about
0.09 for the discriminative model, with up to 2 of 64 label changes, for
the one tested dynamic-quantization implementation and module set.

\begin{cprop}{Fixed peer-independent scale does not depend on peer content at fixed shape}{fixed-scale}
Assume all of the following:
\begin{enumerate}
\item The target tokens, masks, positions, initial activations, and
quantization state are fixed.
\item The activation scale is positive, fixed before serving, and
independent of peer values.
\item Every target-row intermediate depends only on the preceding
target-row intermediate and fixed constants.
\item Kernel selection, operation order, rounding path, and runtime state
used for the target are independent of peer values at the fixed shape.
\item Execution is deterministic under these conditions.
\end{enumerate}
Then changing peer content at fixed shape cannot change the target output.
\end{cprop}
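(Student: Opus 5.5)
The argument is an induction along the target row's computation graph at the fixed shape. It shows that every target-row quantity is the same in the two executions $E$ and $E'$, which differ only in the content of the peer rows. I would model the forward pass as a finite sequence of primitive operations $o_1,\dots,o_n$. For each operation I keep track of the target-row intermediate $h_t$ and the runtime state that operation uses for the target: the selected kernel, the reduction order, the rounding path, and the quantization scale. The claim to prove is $h_t = h'_t$ for every $t$. The target output is then the last intermediate read out, so it is unchanged.

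The base case comes from Assumption 1. The target tokens, masks, positions, initial activations and quantization state are fixed, so $h_0 = h'_0$. For the inductive step I assume $h_{t-1}=h'_{t-1}$ and look at operation $o_t$. By Assumption 3, $h_t = F_t(h_{t-1}, \kappa_t, \sigma_t)$. Here $\kappa_t$ collects the fixed constants, namely the weights and the quantization scale. By Assumption 2 the scale is positive, set before serving, and independent of peer values, so it is the same $s$ in both executions. The remaining argument $\sigma_t$ is the execution choice: kernel, operation order, rounding path and runtime state. By Assumption 4, $\sigma_t$ is the same in both executions, because the shape is fixed and $\sigma_t$ does not depend on peer values. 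I also need $F_t$ to be a function rather than a relation, so that equal inputs really give equal outputs. Assumption 5, deterministic execution, supplies this and rules out any hidden nondeterminism such as atomics. Together these give $h_t=h'_t$. For a quantization step specifically, $Q_s(h_{t-1})$ is evaluated with the same $s$ on the same argument, so the dequantized value is bitwise equal. This is exactly the channel that Proposition~\ref{propx:dynamic-scale} exploits, and here it is closed.

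The main obstacle is making sure the induction is not circumvented by operations that \emph{look} batch-wide. Examples are a batched GEMM whose tiling spans rows, a normalization or scale computed over the whole tensor, or a pooled reduction. I would handle these by reading Assumption 3 strictly. A batch-global reduction that feeds the target row would violate it, so it is excluded by hypothesis. Cross-row layout effects, such as the GEMM tiling discussed in Section~\ref{sec:app-slot}, are different: they depend only on shape and on the row's offset, not on peer values, so Assumption 4 covers them. I would state explicitly that the conclusion is bitwise equality at the fixed shape and fixed target slot. I would also note that it does not extend to changes of shape or slot, and that it does not extend to request-local dynamic scales unless those are likewise independent of peer values.
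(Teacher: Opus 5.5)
Your proposal is correct and follows the same route as the paper. Both argue by induction over the target-row computation: Assumption 1 fixes the starting state, Assumption 2 makes the quantizer map the target activation identically, Assumptions 3 and 4 ensure the same operations with the same order and rounding act on the same intermediates, and Assumption 5 rules out residual variation. Your explicit base case, the functional form $h_t = F_t(h_{t-1},\kappa_t,\sigma_t)$, and your remarks on batch-wide operations and the fixed slot make the paper's brief argument more explicit without changing it.
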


\begin{proof}
By assumption 2 the quantizer scale does not respond to the peers, so the
fixed quantizer maps the fixed target activation to the same quantized
value in both contexts. By assumptions 3 and 4 the target-row computation
applies the same operations, in the same order, with the same rounding path
and runtime state, to the same target intermediates, regardless of peer
values. By assumption 5 there is no residual source of variation. Induction
over the target-row computation therefore yields identical target
intermediates at every stage and an identical target output.
\end{proof}

The proposition does not guarantee invariance across batch shapes, kernels,
hardware, precision, or scale-application order. The server static-int8 configuration
and the dynamic-scale control in the same evaluation setup both use
fake-quantized linear modules with fp32 accumulation and differ only in
the activation scale source. This comparison isolates that variable but
does not use a production integer kernel. The dynamic-int8 experiment in Table~\ref{tab:int8} uses the
integer CPU kernel, and the phone rows use exported integer graphs. The proposition predicts the observed
contrast between the dynamic and fixed-scale int8 experiments. The same
peer-content intervention that moved logits under a batch-dependent runtime
scale became bitwise identical on the target once the activation scale was
fixed offline and made independent of the peers.

\subsection{Fully generative scoring and greedy decoding}

\begin{cprop}{Length-proportional drift bound for summed sequence scores}{gen-drift}
For class $y$, let the token-level log-probability contribution under
context $c$ be $\ell_{y,t}(c) = \log p_{\theta,c}(x_t \mid x_{<t}, y)$ and
the class score be $S_y(c) = \sum_{t=1}^{L} \ell_{y,t}(c)$, with padding
positions excluded. Assume each token contribution changes by at most
$\epsilon$ across two contexts:
\[
|\ell_{y,t}(c) - \ell_{y,t}(c')| \le \epsilon \quad \text{for all } t .
\]
Then for each class,
\[
|S_y(c) - S_y(c')| \le L \epsilon ,
\]
and for any two classes $y$ and $j$, the score-gap drift is bounded by
\[
\bigl| [S_y(c) - S_j(c)] - [S_y(c') - S_j(c')] \bigr| \le 2 L \epsilon .
\]
\end{cprop}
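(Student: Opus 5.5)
The plan is to argue directly from the definition of $S_y(c)$ as a finite sum over the same $L$ real-token positions in both contexts, and then to use the triangle inequality twice. The first step checks that the two sums are indexed identically. Padding positions are excluded by hypothesis, and the real-token length $L$ is a property of the fixed text and tokenizer rather than of the context. So $S_y(c)-S_y(c')=\sum_{t=1}^{L}\bigl(\ell_{y,t}(c)-\ell_{y,t}(c')\bigr)$ is a termwise difference with no unmatched terms.

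For the per-class bound, we apply the triangle inequality to this sum:
\[
|S_y(c)-S_y(c')| \le \sum_{t=1}^{L} |\ell_{y,t}(c)-\ell_{y,t}(c')| \le L\epsilon,
\]
using the assumed per-token bound at each of the $L$ terms. For the gap bound, we rewrite the gap drift as $[S_y(c)-S_y(c')]-[S_j(c)-S_j(c')]$. A second triangle inequality then gives at most $|S_y(c)-S_y(c')|+|S_j(c)-S_j(c')|\le L\epsilon+L\epsilon=2L\epsilon$.

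There is no genuine obstacle. The only point needing care is the implicit alignment assumption: the token-level hypothesis is stated for each class $y$ at every $t$, and both classes score the same $L$ real tokens. If the classes scored different numbers of tokens $L_y$ and $L_j$, the same two steps would give $(L_y+L_j)\epsilon$. I would note this as a remark, because it is the variant quoted in the main-text statement of the length proposition.

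Finally, I would remark on how this connects to Proposition~\ref{propx:logit-margin}. The bound plays the role of $2\epsilon$ there, with $\epsilon$ replaced by $L\epsilon$. This is what yields the margin corollary that a reference score margin exceeding $2L\epsilon$ certifies the label.
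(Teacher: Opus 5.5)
Your proposal is correct and follows the paper's own proof: the triangle inequality over the $L$ aligned token-level differences gives $L\epsilon$ per class, and a second application to the difference of the two class-score drifts gives $2L\epsilon$. Your remark on class-dependent lengths $(L_y+L_j)\epsilon$ corresponds to what the paper states in the subsequent corollary (Proposition~\ref{propx:gen-margin}).
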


\begin{proof}
For the first bound, apply the triangle inequality to the sum of $L$
token-level differences:
\[
|S_y(c) - S_y(c')| = \Bigl| \sum_{t=1}^{L} \bigl( \ell_{y,t}(c) -
\ell_{y,t}(c') \bigr) \Bigr| \le \sum_{t=1}^{L} |\ell_{y,t}(c) -
\ell_{y,t}(c')| \le L \epsilon .
\]
For the second bound, apply the triangle inequality once more to the
difference of the two class-score drifts:
\[
\bigl| [S_y(c) - S_j(c)] - [S_y(c') - S_j(c')] \bigr| \le
|S_y(c) - S_y(c')| + |S_j(c) - S_j(c')| \le 2 L \epsilon .
\]
\end{proof}

\begin{cprop}{Corollary, margin certificate for fully generative prediction}{gen-margin}
Under the assumptions of the preceding proposition, if the reference
class-score margin exceeds $2 L \epsilon$, then the fully generative
prediction is invariant under the assumed token-level perturbation bound.
If class-conditioned sequences have different lengths $L_y$ and $L_j$, the
pairwise bound is $(L_y + L_j)\epsilon$ rather than $2 L \epsilon$.
\end{cprop}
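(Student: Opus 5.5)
The plan is to read the corollary directly off Proposition~\ref{propx:gen-drift}, following the same argument that gave Proposition~\ref{propx:logit-margin}. Let $w$ be the unique reference winner under $c'$ (the reference context), with class-score margin
\[
G = S_w(c') - \max_{j \neq w} S_j(c').
\]
Fix any competitor $j \neq w$ and any context $c$ obeying the token-level bound. Proposition~\ref{propx:gen-drift} shows that the pairwise gap $S_w - S_j$ moves by at most $2L\epsilon$ between $c'$ and $c$. Hence
\[
S_w(c) - S_j(c) \ge \bigl(S_w(c') - S_j(c')\bigr) - 2L\epsilon \ge G - 2L\epsilon > 0 .
\]
Since $j$ was arbitrary, $w$ stays the strict argmax under $c$, so the prediction is invariant.

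The fully generative probabilities are the softmax of the class scores over the candidate classes only, and softmax is strictly monotone in its arguments. The argmax over the normalized probabilities therefore coincides with the argmax over the $S_y$, so no separate probability-level argument is needed. The strict inequality makes the fixed tie-breaking rule irrelevant.

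For class-dependent lengths, rerun only the per-class step. Apply the triangle inequality to $S_y = \sum_{t=1}^{L_y}\ell_{y,t}$ to get $|S_y(c)-S_y(c')| \le L_y\epsilon$, and likewise $L_j\epsilon$ for class $j$. Combining the two as in the second display of Proposition~\ref{propx:gen-drift} bounds the pairwise gap drift by $(L_y+L_j)\epsilon$. The certificate then becomes the pairwise condition
\[
S_w(c') - S_j(c') > (L_w + L_j)\epsilon \quad \text{for every } j \neq w .
\]
This is implied by requiring the margin to exceed $\max_j (L_w+L_j)\epsilon$.

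The only point needing care is interpretive rather than technical. One must state that $\epsilon$ bounds every token contribution in every class, including the padding-excluded real tokens, and that the same $L$ (or $L_y$) tokens are summed in both contexts. That is, the context change must be purely numerical and must not alter which tokens are scored. If a structural change altered the scored set, the premise of Proposition~\ref{propx:gen-drift} would fail, and the corollary would make no claim.
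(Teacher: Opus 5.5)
Your proposal is correct and follows essentially the same route as the paper. You bound each winner-versus-competitor score-gap drift by $2L\epsilon$ (or $(L_y+L_j)\epsilon$ under class-dependent lengths) via the preceding drift proposition, then rerun the logit-margin argument to keep the reference winner strictly ahead; your extra remarks on the softmax preserving the argmax and on the per-competitor condition $(L_w+L_j)\epsilon$ are accurate refinements, not a different approach.
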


\begin{proof}
By the preceding proposition, every winner-versus-competitor score gap
changes by at most $2 L \epsilon$ between the two contexts (or by at most
$(L_y + L_j)\epsilon$ under class-dependent lengths). If the reference
margin strictly exceeds this amount, the reference winner remains strictly
ahead of every competitor, by the same argument as in the logit-margin
certificate.
\end{proof}

These bounds establish a larger admissible drift for summed sequence scores
than for a classifier that reads a single output position, whose corresponding factor is $2\epsilon$
rather than $2L\epsilon$. They do not prove larger realized drift, and
length normalization, shared computation, higher margins, contraction, or
exact arithmetic can reverse the ordering. The comparison isolates the
aggregation mechanism behind two empirical observations: the difference between formulations, in
which the fully generative model showed maximum batch-shape logit movement
of approximately 0.447 under reduced precision against 0.03125 for the
discriminative model in the tested cells, and the within-example trend of
drift increasing with real-token length at fixed padded width.

\begin{cprop}{First-divergence certificate for greedy decoding}{first-fork}
Consider two executions of greedy decoding under contexts $c$ and $c'$ from
the same prompt with a fixed deterministic tie rule. Before the two
trajectories diverge, their prefixes are identical. Let $g_t$ be the
top-two logit margin at step $t$ on the common prefix under the reference
context $c_0$, and suppose the
two-context logit difference at that same prefix is bounded in the infinity
norm by $\epsilon_t$. If
\[
g_t > 2 \epsilon_t \quad \text{at every step } t = 1, \dots, T,
\]
then the selected token is identical in both executions at every step
through $T$.
\end{cprop}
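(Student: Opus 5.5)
The plan is an induction on the decoding step $t$ whose inductive invariant is that the two executions share the same prefix before step $t$. Each step then reduces to the logit-margin certificate of Proposition~\ref{propx:logit-margin}.

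The base case is $t=1$, where both executions condition on the same prompt, so the prefixes agree trivially. For the inductive step, assume the prefixes through step $t-1$ are identical. Then both contexts evaluate the model on the same input at step $t$. Let the reference winner at that prefix be $w_t$ under $c_0$. The hypothesis bounds the logit deviation at this prefix by $\epsilon_t$ in the infinity norm, and $g_t > 2\epsilon_t$. So Proposition~\ref{propx:logit-margin} applies with $\epsilon=\epsilon_t$. For every competitor $j\neq w_t$, the winner-versus-competitor gap stays at least $g_t - 2\epsilon_t > 0$. Hence $w_t$ is the strict argmax under the context in question, and the fixed tie rule is never invoked.

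The one point needing care is that the stated bound is a single ``two-context'' difference. I would read it as bounding each of $c$ and $c'$ against the reference $c_0$ by $\epsilon_t$, which is the reading Proposition~\ref{propx:logit-margin} requires. Under that reading, both $c$ and $c'$ select the same token $w_t$. If instead $c_0$ is one of the two executions, say $c_0 = c$, the direct bound between $c$ and $c'$ suffices. The selected token $w_t$ is then appended to both prefixes, so the prefixes agree through step $t$ and the invariant is restored. Induction up to $T$ gives identical selected tokens at every step through $T$.

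The main obstacle is a subtlety rather than a calculation. The margins $g_t$ and bounds $\epsilon_t$ are defined only on the common prefix, so the hypothesis at step $t$ is meaningful only once agreement at steps $1,\dots,t-1$ has been established. The induction must therefore be ordered so that each step's hypothesis is invoked only after the preceding steps have produced the common prefix. I would state this explicitly to avoid circularity. I would also note that the result is one-sided, exactly as in Proposition~\ref{propx:flip-necessary}: its contrapositive says the first divergence step must satisfy $g_t \le 2\epsilon_t$, and nothing is claimed after divergence.
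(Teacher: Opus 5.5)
Your proof is correct and follows the paper's argument: induction on the decoding step with the common prefix as the invariant, applying the logit-margin certificate at each step so that the same token is appended to both prefixes. You also state explicitly how the $\epsilon_t$ bound must tie $c$ and $c'$ to the reference $c_0$ (either both lie within $\epsilon_t$ of $c_0$, or $c_0$ is one of the two executions), which the paper's proof leaves implicit when it says only that the two executions' logits differ by at most $\epsilon_t$ before invoking the certificate.
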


\begin{proof}
Proceed by induction on the decoding step. At step 1 the two executions
share the prompt as their common prefix. Assume the prefixes are identical
through step $t-1$. Then at step $t$ both executions evaluate the model on
the same prefix, and their logit vectors differ by at most $\epsilon_t$ in
the infinity norm. By the logit-margin certificate, the condition
$g_t > 2\epsilon_t$ guarantees that the same token is selected in both
executions, so the prefixes remain identical through step $t$. By induction
the selected tokens agree at every step through $T$.
\end{proof}

Once one token differs, the prefixes are no longer equal and the same-input
perturbation model no longer applies. Subsequent divergence can then
reflect ordinary sensitivity to different inputs rather than continued
numerical error accumulation alone. This certificate frames the
autoregressive accumulation experiment, in which reduced-precision batch-shape
perturbations changed all three tested generations within the first several
steps while full-precision generation did not change within the measured
horizon, and every same-context repeat was bitwise identical.

\subsection{Propagation of discrepancies through depth}

\begin{cprop}{Layerwise error propagation}{layerwise}
Let the layer state under context $c$ satisfy
$h_{\ell}^{c} = F_{\ell}^{c}(h_{\ell-1}^{c})$ for layers
$\ell = 1, \dots, L$. Assume that over all reachable states each
context-specific layer map is Lipschitz,
\[
\| F_{\ell}^{c}(u) - F_{\ell}^{c}(v) \| \le K_{\ell} \| u - v \| ,
\]
and that the same-state discrepancy between the two context-specific maps
is uniformly bounded,
\[
\| F_{\ell}^{c}(v) - F_{\ell}^{c'}(v) \| \le \epsilon_{\ell} .
\]
Then the final discrepancy satisfies
\[
\| h_L^c - h_L^{c'} \|
\le
\| h_0^c - h_0^{c'} \| \prod_{j=1}^{L} K_j
+
\sum_{\ell=1}^{L} \epsilon_{\ell} \prod_{j=\ell+1}^{L} K_j .
\]
\end{cprop}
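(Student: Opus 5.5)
The plan is to define the layerwise discrepancy $d_\ell = \| h_\ell^c - h_\ell^{c'} \|$ and derive a one-step recursion of the form $d_\ell \le K_\ell\, d_{\ell-1} + \epsilon_\ell$. The final bound then follows by unrolling this recursion. The one-step inequality comes from inserting the mixed term $F_\ell^{c}(h_{\ell-1}^{c'})$ and applying the triangle inequality:
\[
\| F_\ell^{c}(h_{\ell-1}^{c}) - F_\ell^{c'}(h_{\ell-1}^{c'}) \|
\le
\| F_\ell^{c}(h_{\ell-1}^{c}) - F_\ell^{c}(h_{\ell-1}^{c'}) \|
+
\| F_\ell^{c}(h_{\ell-1}^{c'}) - F_\ell^{c'}(h_{\ell-1}^{c'}) \| .
\]
The first term is at most $K_\ell d_{\ell-1}$ by the Lipschitz assumption on $F_\ell^{c}$. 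The second is at most $\epsilon_\ell$ by the uniform same-state discrepancy bound, evaluated at the state $v = h_{\ell-1}^{c'}$.

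For both assumptions to apply legitimately, the points at which they are evaluated must lie in the set of reachable states over which they are assumed. I would check this explicitly. The states $h_{\ell-1}^{c}$ and $h_{\ell-1}^{c'}$ are reachable by construction, since they are the actual trajectories. The Lipschitz bound for $F_\ell^{c}$ is then applied to the pair of reachable points $u = h_{\ell-1}^{c}$ and $v = h_{\ell-1}^{c'}$. If the reachable set is read as context-specific, a state of context $c'$ must be admissible as an input to $F_\ell^{c}$. This is the one point of interpretation I expect to be the only real obstacle. I would resolve it by reading ``over all reachable states'' as the union over both contexts' trajectories, which is what the statement's phrasing supports.

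Finally, I would prove by induction on $\ell$ that
\[
d_\ell \le d_0 \prod_{j=1}^{\ell} K_j + \sum_{m=1}^{\ell} \epsilon_m \prod_{j=m+1}^{\ell} K_j ,
\]
using the convention that an empty product equals $1$. The base case $\ell = 0$ is trivial. For the inductive step, substitute the hypothesis into $d_\ell \le K_\ell d_{\ell-1} + \epsilon_\ell$. Multiplying by $K_\ell$ extends each product by one factor, and the new $\epsilon_\ell$ term carries the empty product. Taking $\ell = L$ gives the stated bound, with $\|h_0^c - h_0^{c'}\|$ as the structural term; this term vanishes when the represented input is unchanged.
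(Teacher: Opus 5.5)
Your proposal is correct and matches the paper's proof essentially line for line: the paper also inserts the mixed term $F_\ell^{c}(h_{\ell-1}^{c'})$, bounds the two pieces by $K_\ell\, d_{\ell-1}$ and $\epsilon_\ell$, and unrolls the recurrence $d_\ell \le K_\ell d_{\ell-1} + \epsilon_\ell$. Your explicit induction with the empty-product convention is a small clarification the paper leaves implicit. So is your reading of ``reachable states'' as the union of both contexts' trajectories, which is needed because the Lipschitz bound for $F_\ell^{c}$ is applied at $h_{\ell-1}^{c'}$; neither changes the route.
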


\begin{proof}
For each layer $\ell$, insert and subtract the mixed term
$F_{\ell}^{c}(h_{\ell-1}^{c'})$:
\[
\| h_{\ell}^{c} - h_{\ell}^{c'} \|
\le
\| F_{\ell}^{c}(h_{\ell-1}^{c}) - F_{\ell}^{c}(h_{\ell-1}^{c'}) \|
+
\| F_{\ell}^{c}(h_{\ell-1}^{c'}) - F_{\ell}^{c'}(h_{\ell-1}^{c'}) \| .
\]
The first term is at most $K_{\ell} \| h_{\ell-1}^{c} - h_{\ell-1}^{c'} \|$
by the Lipschitz assumption, and the second is at most $\epsilon_{\ell}$ by
the same-state discrepancy assumption. This yields the recurrence
\[
\| h_{\ell}^{c} - h_{\ell}^{c'} \| \le K_{\ell}
\| h_{\ell-1}^{c} - h_{\ell-1}^{c'} \| + \epsilon_{\ell} .
\]
Iterating the recurrence from $\ell = 1$ to $L$, the initial discrepancy is
multiplied by every later Lipschitz factor, and each local map discrepancy
$\epsilon_{\ell}$ is multiplied only by the Lipschitz factors of the layers
after it, which gives the stated bound.
\end{proof}

The bound formally separates the two regimes measured in the paper. For a
numerical execution comparison with an identical represented input, the
initial term is zero and only accumulated local discrepancies remain,
whereas for structural padding that changes learned absolute position
embeddings the initial term is generally nonzero. No monotonic depth
conclusion follows without additional assumptions on the $K_{\ell}$ and
$\epsilon_{\ell}$ values, since contracting later layers can shrink earlier
discrepancies and perturbations can cancel. The observed rise of hidden
score drift with nominal model size is therefore a descriptive association
consistent with one possible parameter regime, not a consequence of depth
alone.

\subsection{Conditional infeasibility of accuracy, speed, and invariance}

\begin{cthm}{Conditional triple infeasibility}{triple}
Fix a finite label set, a task distribution over $(X, Y)$, a bounded set of
allowed serving contexts $\mathcal{C}$, a bounded workload, one hardware
and software stack, an implementation family $\mathcal{H}$, and predeclared
thresholds $\alpha$ (accuracy), $\tau$ (latency), and $\delta$ (probability
drift in total variation). Define worst-context accuracy
$A(f) = \inf_{c \in \mathcal{C}} \Pr[\arg\max_y p_f(y \mid X, c) = Y]$,
worst-case latency $T(f)$ over the bounded workload, and drift
$V(f) = \sup_{x, c, c'} \operatorname{TV}(x, c, c')$. Define the fast and
invariant subset
\[
\mathcal{H}_{FI} = \{ f \in \mathcal{H} : T(f) \le \tau, \; V(f) \le \delta \} .
\]
Assume there exists a measurable statistic $Q = q(X)$ such that, for every
$f \in \mathcal{H}_{FI}$ and every allowed context $c$, the final
prediction is a deterministic measurable function of $Q$:
\[
\widehat{Y}_f(c) = g_{f,c}(Q) .
\]
Define the Bayes accuracy available from $Q$ as
\[
A_Q^{\star} = \mathbb{E}_Q \Bigl[ \max_y \Pr(Y = y \mid Q) \Bigr] .
\]
If $A_Q^{\star} < \alpha$, then no implementation in $\mathcal{H}$ is
simultaneously $\tau$-fast, $\alpha$-accurate, and $\delta$-invariant.
\end{cthm}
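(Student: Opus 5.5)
The argument is by contradiction, with a data-processing inequality for classification accuracy doing the work. Suppose some $f \in \mathcal{H}$ is simultaneously $\tau$-fast, $\delta$-invariant and $\alpha$-accurate. The first two properties say exactly that $T(f) \le \tau$ and $V(f) \le \delta$, so $f \in \mathcal{H}_{FI}$. The premise of Theorem~\ref{thmx:triple} then applies: in every allowed context $c$ the prediction factors as $\widehat{Y}_f(c) = g_{f,c}(Q)$ for a deterministic measurable $g_{f,c}$. The remainder of the proof shows that this factorization caps the worst-context accuracy at $A_Q^{\star}$.

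The key step is to bound the accuracy of any $Q$-measurable predictor. Fix $c$ and write $g = g_{f,c}$. Condition on $Q$ and apply the tower property:
\[
\Pr[g(Q) = Y] = \mathbb{E}_Q\bigl[\Pr(Y = g(Q) \mid Q)\bigr] \le \mathbb{E}_Q\Bigl[\max_y \Pr(Y = y \mid Q)\Bigr] = A_Q^{\star}.
\]
The inequality holds pointwise in $Q$, because $g(Q)$ is one particular label and so cannot beat the conditional maximum. This step needs some care with measurability. The label set is finite, so $\max_y$ is a finite maximum of regular conditional probabilities and is measurable. The event $\{g(Q) = Y\}$ is measurable because $g$ is measurable. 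The paper fixes a deterministic tie rule for the argmax, which makes $\widehat{Y}_f(c)$ a genuine function of $Q$ rather than a set-valued choice.

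Because the bound holds for every $c \in \mathcal{C}$, taking the infimum gives $A(f) = \inf_c \Pr[\widehat{Y}_f(c) = Y] \le A_Q^{\star} < \alpha$. This contradicts $\alpha$-accuracy, so no such $f$ exists. Two points should be stated explicitly. First, the infimum over a possibly infinite bounded $\mathcal{C}$ causes no difficulty, since the bound is uniform in $c$; the argument never needs a context that attains the infimum. Second, the drift quantity $V$ enters only through membership in $\mathcal{H}_{FI}$, and is never used quantitatively.

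The main obstacle is not analytic; it is being precise about what the theorem proves. All of the substance sits in the assumed existence of $Q$, and the proof itself is a short Bayes-risk argument. The remark after the proof should therefore stress that the result is conditional. Its content is transferred to Proposition~\ref{propx:collision}, which derives the $Q$-premise from an execution model. Proposition~\ref{propx:four-input} then shows that the conclusion is genuinely joint: each pair of the three thresholds can be met while all three together cannot.
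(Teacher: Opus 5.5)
Your proposal is correct and follows the paper's proof: membership in $\mathcal{H}_{FI}$ gives the factorization through $Q$, the tower-property bound $\Pr[g_{f,c}(Q)=Y]\le A_Q^{\star}$ holds in each context, and taking the infimum over $\mathcal{C}$ gives $A(f)<\alpha$. The only difference is framing: you argue by contradiction, while the paper fixes an arbitrary $f\in\mathcal{H}_{FI}$ and shows directly that it fails the accuracy threshold.
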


\begin{proof}
Fix any $f \in \mathcal{H}_{FI}$ and any allowed context $c$. Because the
decision of $f$ under $c$ factors through $Q$,
\[
\Pr\bigl[ \widehat{Y}_f(c) = Y \bigr]
=
\mathbb{E}_Q \bigl[ \Pr( Y = g_{f,c}(Q) \mid Q ) \bigr]
\le
\mathbb{E}_Q \Bigl[ \max_y \Pr( Y = y \mid Q ) \Bigr]
=
A_Q^{\star}
<
\alpha .
\]
The bound holds for every allowed context, so it also bounds the infimum
over contexts that defines $A(f)$; hence $A(f) < \alpha$. Every fast and
invariant implementation therefore fails the accuracy threshold, and the
intersection of the three feasible sets is empty.
\end{proof}

The information-restriction premise, namely that every fast and invariant
implementation factors through the statistic $Q$, is an assumption of the
theorem and has not been established for the classifier family studied in
this paper. The result is a generic conditional feasibility theorem, not an
architecture-specific impossibility result. The theorem can also be vacuous
if the fast and invariant subset is empty or if $\alpha$ exceeds the
full-information Bayes accuracy, and a strict three-way incompatibility
would additionally require three separately established pairwise witnesses
under common thresholds. The theorem's role is interpretive. The measured
mitigation costs (batch-invariant execution at an end-to-end latency ratio
near 1.3 to 1.5 at large batch and roughly neutral at batch size 1, despite
isolated invariant matrix multiplications running several times slower per
call, and position assignment that does not depend on the padded width
preserving family-level quality) show that the fast and invariant subset
need not be empty in practice, which constrains how the theorem can be
instantiated.

\begin{cprop}{Approximate bottleneck bound}{approx-bottleneck}
Suppose that for every fast and invariant implementation
$f \in \mathcal{H}_{FI}$ there exists a family of deterministic $Q$-based
rules indexed by context, $\{ h_{f,c} : c \in \mathcal{C} \}$, such that
\[
\sup_{c \in \mathcal{C}} \Pr\bigl[ \widehat{Y}_f(c) \neq h_{f,c}(Q) \bigr] \le \rho .
\]
Then every such implementation satisfies
\[
A(f) \le A_Q^{\star} + \rho ,
\]
and if $A_Q^{\star} + \rho < \alpha$, the triple of thresholds remains
infeasible.
\end{cprop}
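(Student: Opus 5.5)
The plan is to reduce the approximate bottleneck to the exact one in Theorem~\ref{thmx:triple} through a coupling inequality, applied one context at a time. Fix $f \in \mathcal{H}_{FI}$ and an allowed context $c$. By hypothesis there is a deterministic $Q$-measurable rule $h_{f,c}$ whose disagreement probability with $\widehat{Y}_f(c)$ is at most $\rho$. The first step is to split the event that $f$ is correct according to whether $f$ agrees with $h_{f,c}(Q)$:
\[
\Pr\bigl[\widehat{Y}_f(c) = Y\bigr]
\le
\Pr\bigl[h_{f,c}(Q) = Y\bigr]
+
\Pr\bigl[\widehat{Y}_f(c) \neq h_{f,c}(Q)\bigr].
\]
This holds because on the agreement event, correctness of $f$ is the same as correctness of $h_{f,c}(Q)$. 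The complementary event is bounded by its own probability.

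The second step bounds the first term on the right exactly as in the proof of Theorem~\ref{thmx:triple}. Since $h_{f,c}$ is a deterministic measurable function of $Q$,
\[
\Pr\bigl[h_{f,c}(Q) = Y\bigr] = \mathbb{E}_Q\bigl[\Pr(Y = h_{f,c}(Q) \mid Q)\bigr] \le \mathbb{E}_Q\bigl[\max_y \Pr(Y=y\mid Q)\bigr] = A_Q^{\star}.
\]
The second term on the right is at most $\rho$ by the supremum hypothesis. Together these give $\Pr[\widehat{Y}_f(c) = Y] \le A_Q^{\star} + \rho$ for every $c \in \mathcal{C}$.

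Because the bound is uniform in $c$, it also bounds the infimum defining $A(f)$, so $A(f) \le A_Q^{\star} + \rho$. If $A_Q^{\star} + \rho < \alpha$, then every fast and invariant implementation fails the accuracy threshold. The intersection of the three feasible sets is therefore empty, which is the same conclusion as Theorem~\ref{thmx:triple}, recovered at $\rho = 0$.

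There is no real obstacle here. The only point requiring care is measurability: the events $\{h_{f,c}(Q) = Y\}$ and $\{\widehat{Y}_f(c) \neq h_{f,c}(Q)\}$ must be well defined, and the conditional maximum must be defined through the finite label set. Both follow from the measurability of $q$ and $h_{f,c}$ and the finiteness of the label set already fixed in Theorem~\ref{thmx:triple}.
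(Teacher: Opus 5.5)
Your proposal is correct and follows the paper's proof essentially step for step. You split the correctness event on agreement with $h_{f,c}(Q)$ and bound the $Q$-based term by $A_Q^{\star}$; the disagreement term is at most $\rho$ uniformly in $c$, and taking the infimum over contexts then gives $A(f) \le A_Q^{\star} + \rho$ and the resulting infeasibility.
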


\begin{proof}
Fix any context $c$. The event that $f$ predicts correctly under $c$ is
contained in the union of the event that $h_{f,c}(Q)$ predicts correctly
and the event that the two rules disagree. Therefore
\[
\Pr\bigl( \widehat{Y}_f(c) = Y \bigr)
\le
\Pr\bigl( h_{f,c}(Q) = Y \bigr)
+
\Pr\bigl( \widehat{Y}_f(c) \neq h_{f,c}(Q) \bigr) .
\]
The first term is at most $A_Q^{\star}$, since $h_{f,c}$ is a deterministic
function of $Q$ and no such rule can exceed the Bayes accuracy available
from $Q$; the second term is at most $\rho$, uniformly over contexts.
Taking the infimum over contexts gives $A(f) \le A_Q^{\star} + \rho$. If
$A_Q^{\star} + \rho < \alpha$, then $A(f) < \alpha$ for every fast and
invariant implementation, so no implementation satisfies all three
thresholds.
\end{proof}

This relaxation weakens exact factorization to approximate agreement with a
$Q$-based rule. Both results take the restriction to $Q$ as a premise. The
remainder of this subsection derives that premise from an execution model,
exhibits a finite instance in which the three thresholds are pairwise
feasible but jointly infeasible, and extends the derivation to the
approximate setting of Proposition~\ref{propx:approx-bottleneck}.

\begin{cprop}{Collision lemma, exact invariance factors through the fast representation}{collision}
Let the input support $\mathcal{X}$ be finite. Suppose that for every
context $c \in \mathcal{C}$ the fast execution path exposes a representation
$R_c(x)$ of the input, and that every implementation with $T(f) \le \tau$
predicts from it, $p_f(\cdot \mid x, c) = h_{f,c}(R_c(x))$, where the
postprocessor $h_{f,c}$ may depend on the context but has no other access to
$x$. Form the undirected graph on $\mathcal{X}$ that joins $x$ and $x'$
whenever $R_c(x) = R_c(x')$ for some $c \in \mathcal{C}$, and let $Q(x)$
denote the connected component of $x$. If $f$ is $\tau$-fast and
label-invariant across $\mathcal{C}$, then its predicted label is a function
of $Q$; if $f$ is exactly score-invariant ($V(f) = 0$), then its whole
probability vector is a function of $Q$.
\end{cprop}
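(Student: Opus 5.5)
The argument runs in two stages. First, within a single context, inputs that share a fast representation must share an output. Second, invariance across contexts lets that equality propagate along paths in the collision graph. I would define the label map $\ell_f(x,c) = \arg\max_y p_f(y\mid x,c)$, with the paper's fixed deterministic tie rule, and the probability map $x \mapsto p_f(\cdot\mid x,c)$. The target claim is that each is constant on every connected component $Q(x)$. A function on the finite set $\mathcal{X}$ that is constant on components factors through the component map $x \mapsto Q(x)$, so proving constancy on components gives the stated factorization. Since $\mathcal{X}$ is finite, measurability is automatic.

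The first step is the single-edge case. Suppose $x$ and $x'$ are joined by an edge, so there is some $c^\ast \in \mathcal{C}$ with $R_{c^\ast}(x) = R_{c^\ast}(x')$. Because $f$ is $\tau$-fast, the premise gives
\[
p_f(\cdot\mid x,c^\ast) = h_{f,c^\ast}(R_{c^\ast}(x)) = h_{f,c^\ast}(R_{c^\ast}(x')) = p_f(\cdot\mid x',c^\ast).
\]
The two inputs therefore have equal probability vectors, and hence equal labels under the fixed tie rule, in the witnessing context $c^\ast$.

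The second step transfers this equality to every context $c$ using invariance. Label invariance says $\ell_f(x,c) = \ell_f(x,c^\ast)$ for all $c$, and likewise for $x'$. Chaining these equalities through $c^\ast$ gives $\ell_f(x,c) = \ell_f(x',c)$ for every $c$. In the score case, $V(f)=0$ means $\operatorname{TV}(x,c,c^\ast)=0$, so the probability vectors are equal in every context, and the same chaining gives $p_f(\cdot\mid x,c) = p_f(\cdot\mid x',c)$ for all $c$.

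The third step is induction along paths. If $x = x_0, x_1, \dots, x_k = x'$ is a path in the graph, each consecutive pair satisfies the edge result for every context, and transitivity of equality extends it to $x$ and $x'$. Hence the label, and in the exact-invariance case the whole probability vector, is constant on each component $Q(x)$, in every context. This also shows the resulting function $g_{f,c}$ of $Q$ does not actually depend on $c$, which is stronger than the factorization premise of Theorem~\ref{thmx:triple} requires.

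The main obstacle is the second step. Different edges along a path are witnessed by different contexts, so the single-context equalities cannot be chained directly. The proof must pass through invariance to move each equality out of its witnessing context into a common one. I would also note explicitly that label invariance gives only label factorization, and that the score statement needs exact invariance $V(f)=0$: with $\delta > 0$, equalities would degrade to approximate ones whose errors grow with path length.
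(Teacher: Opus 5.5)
Your proposal is correct and matches the paper's proof essentially step for step. Both arguments get equal outputs in the witnessing context via $h_{f,c}$, carry that equality to every context through label invariance (or exact score invariance), and then extend it along paths of the collision graph by transitivity; your remark that the resulting function of $Q$ does not depend on $c$ is a harmless strengthening that the paper's chain of equalities also yields.
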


\begin{proof}
Let $x$ and $x'$ be joined by an edge witnessed by $c$, so
$R_c(x) = R_c(x')$. Then $p_f(\cdot \mid x, c) = h_{f,c}(R_c(x)) =
h_{f,c}(R_c(x')) = p_f(\cdot \mid x', c)$, so in context $c$ the fast path
cannot distinguish the two inputs. Label invariance gives
$\widehat{Y}_f(x, c') = \widehat{Y}_f(x, c) = \widehat{Y}_f(x', c) =
\widehat{Y}_f(x', c')$ for every $c'$, so the predicted label agrees on the
two endpoints in every context. Agreement is transitive along paths, hence
the label is constant on each connected component and is a function of
$Q$. Under $V(f) = 0$ the same chain of equalities holds for the full
probability vector.
\end{proof}

The lemma supplies the premise of Theorem~\ref{thmx:triple} with
$Q$ equal to the component map, so that
$A(f) \le A_Q^{\star}$ for every fast and exactly invariant implementation.
The collision must occur within a single context. $R_c(x) = R_{c'}(x')$
for $c \ne c'$ carries no constraint, because a context-aware
postprocessor may treat the two contexts differently. The lemma also
locates what an instantiation must establish. It must identify a lossy
$R_c$ for the actual fast path and show that every implementation inside
the latency budget is confined to it. Measuring the latency of particular
kernels does not by itself establish that lower bound over
implementations. If distinct fast policies expose distinct maps, each
policy has its own graph and its own bound.

\begin{cprop}{A finite instance in which each pair of thresholds is feasible and the triple is not}{four-input}
Let $X$ be uniform on $\{a, b, c, d\}$ with deterministic labels
$0, 1, 0, 1$, and let $\mathcal{C} = \{c_1, c_2, c_3\}$ with fast
representations given by the partitions
$c_1: \{a, b\}, \{c\}, \{d\}$; \;
$c_2: \{a\}, \{b, c\}, \{d\}$; \;
$c_3: \{a\}, \{b\}, \{c, d\}$,
where inputs in one cell share the same $R_c$. Let the fast path cost one
unit and a slow path that exposes the input identity cost two units, and
set $\tau = 1$, $\delta = 0$, and any $\alpha \in (1/2, 3/4]$. Then each pair
of thresholds is attainable while the triple is not.
\end{cprop}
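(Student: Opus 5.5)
The plan is to verify the three pairwise feasibilities by explicit witnesses and then obtain joint infeasibility from the collision lemma (Proposition~\ref{propx:collision}) combined with Theorem~\ref{thmx:triple}. Throughout, the implementation family $\mathcal{H}$ is taken to contain every fast implementation $p_f(\cdot\mid x,c)=h_{f,c}(R_c(x))$ with arbitrary context-dependent postprocessors, at cost one, and every slow implementation that reads the input identity, at cost two. Accuracy is measured by $A(f)$, latency by cost, and drift by $V(f)$, which must be zero.

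For joint infeasibility, I will build the collision graph on $\{a,b,c,d\}$. Context $c_1$ gives the edge $a$--$b$, $c_2$ gives $b$--$c$, and $c_3$ gives $c$--$d$, so the graph is the path $a$--$b$--$c$--$d$ with a single connected component. Any $\tau$-fast implementation with $V(f)=0$ is exactly score-invariant, so by the lemma its label is a function of $Q$ and is therefore constant. A constant predictor is correct on exactly two of the four inputs, so $A_Q^{\star}=1/2<\alpha$. Theorem~\ref{thmx:triple} then rules out the triple. Slow implementations are excluded immediately because their cost $2>\tau$.

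For the pairs, I will give one witness each.
\begin{itemize}
\item \emph{Accurate and invariant.} Use the slow path that reads $x$ and outputs the one-hot vector of its true label in every context. This gives accuracy one and $V=0$, and only latency fails.
\item \emph{Fast and invariant.} Use the fast path with a constant one-hot output. This gives $V=0$ and cost one, with accuracy $1/2$.
\item \emph{Fast and accurate.} Within each context, map each singleton cell to its true label and map the merged cell to either label. Each context then errs on at most one input out of four, so $A(f)\ge 3/4\ge\alpha$. The drift is positive because predictions differ across contexts, for example on $a$ or $b$, and $V>0=\delta$ is acceptable here since invariance is the relaxed threshold.
\end{itemize}

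The main obstacle is bookkeeping rather than depth. I must make the implementation family explicit so that the collision lemma's premise holds, namely that every fast implementation predicts from $R_c$. I must also check that the upper endpoint $\alpha=3/4$ is attained, which is why the statement uses a closed interval and I need $A(f)\ge\alpha$ rather than strict inequality. Finally, I must note that in the fast-accurate witness the merged cell is forced to err on one member in every context, so $3/4$ is the best fast accuracy attainable.
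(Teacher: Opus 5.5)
Your proposal is correct and follows the paper's proof essentially step for step. It uses the same three explicit pairwise witnesses: the slow true-label path, a constant fast rule, and a fast rule with true labels on singleton cells and a fixed label on the merged cell, which gives accuracy exactly $3/4$. It obtains joint infeasibility the same way, since the connected collision path $a$--$b$--$c$--$d$ and the collision lemma force every fast, exactly invariant rule to be constant, with $A_Q^{\star}=1/2<\alpha$; your explicit description of $\mathcal{H}$ and the note that slow implementations fail $\tau$ just make precise what the paper leaves implicit.
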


\begin{proof}
In the fast and accurate case, predict in each context the true label on the two
singleton cells and one fixed label on the merged pair; exactly one of the
four inputs is wrong, so the accuracy is $3/4 \ge \alpha$ in every context,
but the predictions differ across contexts. In the accurate and invariant case, the slow
path returns the true label everywhere, with accuracy $1$, at cost $2 > \tau$.
In the fast and invariant case, a constant label has accuracy $1/2$. For the triple,
the collision graph has edges $a$--$b$ (witnessed by $c_1$), $b$--$c$
($c_2$), and $c$--$d$ ($c_3$), so it is connected and $Q$ is a single
component. By Proposition~\ref{propx:collision} every fast and invariant
rule is constant, and $A_Q^{\star} = 1/2 < \alpha$.
\end{proof}

No single context forces the accuracy down to $1/2$. The loss arises from
requiring agreement across contexts whose collisions overlap. The costs
define the abstract execution model of the example and are not measured
latencies.

\begin{cprop}{Approximate collision bound}{approx-collision}
In the setting of Proposition~\ref{propx:collision}, suppose every
connected component has graph diameter at most $D$, and let
$f$ be $\tau$-fast with $V(f) \le \delta$. Fix a reference context $c_0$ and
one representative $r(q)$ in each component $q$. Then for every $x$,
\[
\operatorname{TV}\bigl( p_f(\cdot \mid x, c_0),\; p_f(\cdot \mid r(Q(x)), c_0) \bigr)
\le 2 D \delta ,
\]
and, writing $m_f(x, c_0)$ for the top-two probability margin of
$p_f(\cdot \mid x, c_0)$ and $\rho_f = \Pr\{ m_f(X, c_0) \le 4 D \delta \}$,
\[
A(f) \le A_Q^{\star} + \rho_f .
\]
\end{cprop}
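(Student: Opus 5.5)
The plan has two parts: a chaining argument along collision-graph paths for the TV bound, then a reduction to Proposition~\ref{propx:approx-bottleneck} with $h_{f,c}$ built from the representative's prediction.

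\textbf{TV bound.} Fix $x$ and let $q = Q(x)$. Since $x$ and $r(q)$ lie in the same component and the diameter is at most $D$, there is a path $x = x_0, x_1, \dots, x_k = r(q)$ with $k \le D$. Each edge $x_{i-1}$--$x_i$ is witnessed by some context $c^{(i)}$ with $R_{c^{(i)}}(x_{i-1}) = R_{c^{(i)}}(x_i)$. As in the proof of Proposition~\ref{propx:collision}, the fast path then gives
\[
p_f(\cdot \mid x_{i-1}, c^{(i)}) = p_f(\cdot \mid x_i, c^{(i)}).
\]
Move each endpoint from $c^{(i)}$ to $c_0$ using $V(f) \le \delta$. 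The triangle inequality for TV yields
\[
\operatorname{TV}\bigl(p_f(\cdot\mid x_{i-1},c_0),\,p_f(\cdot\mid x_i,c_0)\bigr) \le 2\delta .
\]
Summing over at most $D$ edges gives the claimed $2D\delta$.

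\textbf{Accuracy bound.} For each context $c$, I would set $h_{f,c}(q) = \widehat{Y}_f(r(q), c_0)$, the reference-context label of the representative. This is a deterministic function of $Q$, so the task reduces to showing
\[
\sup_c \Pr\bigl[\widehat{Y}_f(X,c) \neq h_{f,c}(Q)\bigr] \le \rho_f .
\]
Proposition~\ref{propx:approx-bottleneck} then gives $A(f) \le A_Q^\star + \rho_f$.

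\textbf{Margin step.} Suppose $m_f(x,c_0) > 4D\delta$. Two applications of Proposition~\ref{propx:tv-margin} should show that the label is locked across contexts and across the component.
\begin{itemize}
\item Across contexts: $\operatorname{TV}\bigl(p_f(\cdot\mid x,c),\,p_f(\cdot\mid x,c_0)\bigr) \le \delta$, and $m > 2\delta$ (since $D \ge 1$; the case $D = 0$ is trivial). Hence $\widehat{Y}_f(x,c) = \widehat{Y}_f(x,c_0)$.
\item Across the component: the TV between $x$ and $r(Q(x))$ at $c_0$ is at most $2D\delta$, and $m > 4D\delta = 2\cdot(2D\delta)$. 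Hence $\widehat{Y}_f(x,c_0) = \widehat{Y}_f(r(Q(x)),c_0)$.
\end{itemize}
Together these give $\widehat{Y}_f(x,c) = h_{f,c}(Q(x))$ for every $c$. So disagreement can occur only on the event $\{m_f(X,c_0) \le 4D\delta\}$, which has probability $\rho_f$, uniformly in $c$.

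\textbf{Main obstacle.} The chaining step needs care on three points.
\begin{itemize}
\item Each edge needs its own witnessing context, so the bound costs $2\delta$ per edge, not $\delta$. This factor of two must be tracked so the threshold comes out exactly as $4D\delta$.
\item Proposition~\ref{propx:tv-margin} requires a unique reference winner with strict margin. That holds whenever $m > 0$, which is implied on the complement event.
\item Since $V(f)$ is a supremum over $x$, $c$ and $c'$, it applies at every path vertex, not only at $x$.
\end{itemize}
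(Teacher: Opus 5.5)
Your proof is correct and follows the paper's argument: the same per-edge $2\delta$ chaining through a witnessing context, and the same $Q$-based rule $q \mapsto \widehat{Y}_f(r(q), c_0)$ combined with Proposition~\ref{propx:tv-margin} and Proposition~\ref{propx:approx-bottleneck}. The one difference is that you check agreement with this rule in every context, which takes an extra margin step requiring $D \ge 1$ plus a separate $D = 0$ case; the paper checks it only at $c_0$ and uses that $A(f)$, an infimum over contexts, is at most the accuracy at $c_0$, so it is shorter, while yours matches the literal $\sup_c$ hypothesis of the bottleneck proposition.
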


\begin{proof}
For an edge $x$--$x'$ witnessed by $c$, the fast path gives
$p_f(\cdot \mid x, c) = p_f(\cdot \mid x', c)$, and $V(f) \le \delta$ bounds
the movement of each endpoint between $c$ and $c_0$ by $\delta$; the
triangle inequality gives
$\operatorname{TV}(p_f(\cdot \mid x, c_0), p_f(\cdot \mid x', c_0)) \le 2\delta$.
Composing along a path of length at most $D$ to $r(Q(x))$ gives the first
bound. Proposition~\ref{propx:tv-margin}, applied to the two vectors
$p_f(\cdot \mid x, c_0)$ and $p_f(\cdot \mid r(Q(x)), c_0)$, states that a
total-variation distance of at most $2 D \delta$ cannot change the predicted
label when the margin exceeds $4 D \delta$; on the complement of the event
$\{ m_f(X, c_0) \le 4 D \delta \}$, $f$ therefore predicts at $c_0$ the
label of the $Q$-based rule $x \mapsto \widehat{Y}_f(r(Q(x)), c_0)$.
Proposition~\ref{propx:approx-bottleneck} with this rule and $\rho = \rho_f$
gives $\Pr[\widehat{Y}_f(c_0) = Y] \le A_Q^{\star} + \rho_f$, and
$A(f)$ is at most the accuracy at any single context.
\end{proof}

The approximate bound derives the disagreement rate $\rho$ of
Proposition~\ref{propx:approx-bottleneck} from four quantities: the
collision structure, the component diameter, the score tolerance, and the
margin tail at the reference context. A family-wide statement needs a
uniform bound $\rho_f \le \rho$ over the fast and $\delta$-invariant
implementations and $\alpha > A_Q^{\star} + \rho$. The bound can be loose.
If $D$ is large, or $A_Q^{\star} + \rho$ reaches one, it excludes nothing.
These results establish the theorem's premise for any execution model of
the stated form. They do not identify such a model for the GPU and CPU
paths measured in this paper, and the measured mitigation costs indicate
that speed, quality, and invariance were not observed to be pairwise
incompatible in the tested configurations. No controlled joint frontier
over all three objectives has been measured.

\clearpage
\section{Extended Limitations}
\label{app:limitations}

The study isolates each serving factor on a fixed checkpoint and fixed text,
and the boundaries below state where its measured magnitudes apply. The
mechanism results are stated in terms of the position rule and the top-two
margin, which do not depend on these boundaries.

\paragraph{Model and task scope.}
We study transformer text classifiers, trained from initialization in the
primary grid following the protocol of \citet{kasa2025generative} and
fine-tuned in the pretrained controls, in four formulations: a
discriminative encoder head, a masked-language-model label predictor,
a GPT-2 classifier that predicts the next label token, and a GPT-2 classifier that scores the input under each class. We focus on conventional
fine-tuning and include LoRA as the parameter-efficient control on the two
modern backbones; in-context classification with instruction-tuned language
models and other adaptation methods are left for future work, where the
same position and margin analysis applies. The inputs are unimodal text, so
classifiers that combine text with tabular, image, or audio features remain
to be measured.

\paragraph{Hardware and backend scope.}
Measurements run on one representative GPU (an NVIDIA L40S), one
representative CPU (an AMD EPYC), and two Android phones, with PyTorch eager
execution compared against torch.compile and an ONNX Runtime export. The
reported magnitudes are specific to these configurations. Production engines such as
vLLM and SGLang, tensor-parallel execution, and other accelerators change
the arithmetic through their own kernel choices and require their own
measurement, for which the repeatability floor and target-anchored design
transfer unchanged. The GPU-fp32 to CPU-int8 comparison is reported as a
whole migration; Appendix~\ref{sec:app-device-quant} separates its device
and quantization components on matched subsets.

\paragraph{Evaluation scope.}
We evaluate on standard independent and identically distributed test
splits. Behavior under distribution shift, where both the baseline
predictions and the perturbation sensitivity may differ, is a natural
extension. The balanced grids report how often an effect occurs under a
designed set of conditions rather than its incidence in production traffic,
and each serving context is executed once, with same-context run-to-run
variation measured separately by the repeatability floor.

\paragraph{Mechanism and architecture scope.}
The architecture comparison holds the reference-code conventions of each
family fixed rather than randomizing padding side, position representation,
pretraining, and formulation independently. The Qwen3 rotary control and the
direct position-index intervention measure the two cases that the position
theorem separates, and the statements about ALiBi, T5, Shaw, Transformer-XL,
and sinusoidal encodings follow from their published formulas. The
from-scratch size series scales depth, width, and head count jointly and
conditions on all three sizes passing the accuracy threshold, so its trends
describe how the effect moves with capacity at matched data and seeds. The
no-change score summaries are computed per size, and probability total
variation provides the scale-free comparison across independently trained
checkpoints alongside raw logit movement.

\paragraph{Pretrained-model scope.}
The pretrained GPT-2 experiment establishes padding vulnerability on public
checkpoints. Because changing the position rule only at inference gives a
model position inputs it was never trained on, the accuracy threshold excludes
that configuration (it collapses five of fifteen models to a constant
prediction: SST-2 seeds 41 and 42 and SST-5 seeds 40, 41, and 42), and the
mitigation evidence comes from the three-seed experiment that trains and
serves the same rule, which establishes agreement on the predicted label.
The pretrained replication covers one 124M-parameter GPT-2 family and
formulation, target-first batch construction, and five dataset-level
bootstrap clusters; Appendix~\ref{sec:app-mobile} and the modern-backbone
controls extend the family coverage, and larger pretrained families are the
next measurement.

\end{document}